\newtheorem{thm}{Theorem}
\newtheorem{lem}[thm]{Lemma}
\newtheorem{prop}[thm]{Proposition}
\newtheorem{cor}[thm]{Corollary}
\newtheorem{defn}[thm]{Definition}
\newtheorem{exmp}{Example}
\newcommand{\Fig}[1]{Figure~\ref{#1}}
\newcommand{\Sec}[1]{Section~\ref{#1}}
\newcommand{\Eqn}[1]{Eq.~(\ref{#1})}
\newcommand{\Lem}[1]{Lemma~\ref{#1}}
\newcommand{\Thm}[1]{Theorem~\ref{#1}}
\newcommand{\Prop}[1]{Proposition~\ref{#1}}
\newcommand{\Cor}[1]{Corollary~\ref{#1}}
\newcommand{\App}[1]{Appendix~\ref{#1}}
\newcommand{\Def}[1]{Definition~\ref{#1}}
\newcommand{\Exmp}[1]{Example~\ref{#1}}
\renewcommand{\hat}{\widehat}
\renewcommand{\tilde}{\widetilde}
\newcommand{\bbar}{\overline}
\renewcommand{\>}{{\rightarrow}}
\renewcommand{\=}{\stackrel{\triangle}{=}}
\newcommand{\half}{{\frac {1}{2}}}
\newcommand{\argmax}{\operatorname{argmax}}
\newcommand{\argmin}{\operatorname{argmin}}
\newcommand{\affdim}{\operatorname{affdim}}
\newcommand{\conv}{\operatorname{conv}}
\renewcommand{\dim}{\operatorname{dim}}
\newcommand{\nullity}{\operatorname{nullity}}
\newcommand{\rank}{\operatorname{rank}}
\newcommand{\relint}{\operatorname{relint}}
\newcommand{\sign}{\operatorname{sign}}
\renewcommand{\span}{\operatorname{span}}
\newcommand{\R}{{\mathbb R}}
\newcommand{\Z}{{\mathbb Z}}
\newcommand{\N}{{\mathbb N}}
\renewcommand{\P}{{\mathbf P}}
\newcommand{\E}{{\mathbf E}}
\newcommand{\I}{{\mathbf I}}
\newcommand{\1}{{\mathbf 1}}
\newcommand{\0}{{\mathbf 0}}
\newcommand{\cA}{{\mathcal A}}
\newcommand{\A}{{\mathbf A}}
\newcommand{\B}{{\mathbf B}}
\newcommand{\C}{{\mathcal C}}
\newcommand{\F}{{\mathcal F}}
\newcommand{\G}{{\mathcal G}}
\renewcommand{\H}{{\mathcal H}}
\renewcommand{\L}{{\mathbf L}}
\newcommand{\M}{{\mathbf M}}
\newcommand{\cN}{{\mathcal N}}
\newcommand{\cP}{{\mathcal P}}
\newcommand{\Q}{{\mathcal Q}}
\newcommand{\cR}{{\mathcal R}}
\renewcommand{\S}{{\mathcal S}}
\newcommand{\V}{{\mathcal V}}
\newcommand{\X}{{\mathcal X}}
\newcommand{\Y}{{\mathcal Y}}
\renewcommand{\a}{{\mathbf a}}
\renewcommand{\b}{{\mathbf b}}
\newcommand{\e}{{\mathbf e}}
\newcommand{\f}{{\mathbf f}}
\newcommand{\p}{{\mathbf p}}
\newcommand{\q}{{\mathbf q}}
\renewcommand{\r}{{\mathbf r}}
\renewcommand{\t}{{\mathbf t}}
\renewcommand{\u}{{\mathbf u}}
\renewcommand{\v}{{\mathbf v}}
\newcommand{\w}{{\mathbf w}}
\newcommand{\y}{{\mathbf y}}
\newcommand{\z}{{\mathbf z}}
\newcommand{\bell}{{\boldsymbol \ell}}
\newcommand{\balpha}{{\boldsymbol \alpha}}
\newcommand{\bsigma}{{\boldsymbol \sigma}}
\newcommand{\bpsi}{{\boldsymbol \psi}}
\newcommand{\er}{\textup{\textrm{er}}}
\newcommand{\pred}{\textup{\textrm{pred}}}
\newcommand{\CCdim}{\operatorname{CCdim}}
\newcommand{\reg}{\textup{\textrm{regret}}}
\newcommand{\Null}{\textup{\textrm{null}}}
\newcommand{\aff}{\textup{\textrm{aff}}}
\newcommand{\cl}{\textup{\textrm{cl}}}
\newcommand{\zo}{\textup{\textrm{0-1}}}
\newcommand{\ord}{\textup{\textrm{ord}}}
\newcommand{\Ham}{\textup{\textrm{Ham}}}
\newcommand{\abstain}{\textup{\textrm{(?)}}}
\newcommand{\CS}{\textup{\textrm{CS}}}
\newcommand{\abs}{\textup{\textrm{abs}}}
\newcommand{\MAP}{\textup{\textrm{MAP}}}
\newcommand{\NDCG}{\textup{\textrm{NDCG}}}
\newcommand{\pair}{\textup{\textrm{PD}}}
\begin{document}

\title{Convex Calibration Dimension for \\Multiclass Loss Matrices  }

\author{\name Harish G.\ Ramaswamy \email harish\_gurup@csa.iisc.ernet.in \\[-18pt]
       \AND
       \name Shivani Agarwal \email shivani@csa.iisc.ernet.in \\
       \addr Department of Computer Science and Automation\\
       Indian Institute of Science\\
       Bangalore 560012, India}

\editor{}

\maketitle


\begin{abstract}
We study consistency properties of surrogate loss functions for general multiclass learning problems, defined by a general multiclass loss matrix. We extend the notion of classification calibration, which has been studied for binary and multiclass 0-1 classification problems (and for certain other specific learning problems), to the general multiclass setting, and derive necessary and sufficient conditions for a surrogate loss to be calibrated with respect to a loss matrix in this setting. We then introduce the notion of \emph{convex calibration dimension} of a multiclass loss matrix, which measures the smallest `size' of a prediction space in which it is possible to design a convex surrogate that is calibrated with respect to the loss matrix. We derive both upper and lower bounds on this quantity, and use these results to analyze various loss matrices. In particular, we apply our framework to study various subset ranking losses, and
use the convex calibration dimension as a tool to show both the existence and non-existence of various types of convex calibrated surrogates for these losses. Our results strengthen recent results of Duchi et al.\ (2010) and Calauz\`{e}nes et al.\ (2012) on the non-existence of certain types of convex calibrated surrogates 
in subset ranking. We anticipate the convex calibration dimension may prove to be a useful tool in the study and design of surrogate losses for general multiclass learning problems.
\end{abstract}

\begin{keywords}
Statistical consistency, multiclass loss, loss matrix, surrogate loss, convex surrogates, calibrated surrogates, classification calibration, subset ranking.
\end{keywords}

\section{Introduction}
\label{sec:intro}

There has been significant interest and progress in recent years in understanding consistency properties of surrogate risk minimization algorithms for various learning problems, such as binary classification, multiclass 0-1 classification, and various forms of ranking and multi-label prediction problems \citep{LugosiVa04,Jiang04,Zhang04a,Steinwart05,Bartlett+06,Zhang04b,TewariBa07,Steinwart07,CossockZh08,Xia+08,Duchi+10,Ravikumar+11,Buffoni+11,GaoZh11,Kotlowski+11}. Any such problem that involves a finite number of class labels and predictions can be viewed as an instance of a general multiclass learning problem, whose structure is defined by a suitable loss matrix. 
While the above studies have enabled an understanding of learning problems corresponding to certain forms of loss matrices, a framework for analyzing consistency properties for a general multiclass problem, defined by a general loss matrix, has remained elusive.

In this paper, we develop a unified framework for studying consistency properties of surrogate losses for such general multiclass learning problems, defined by a general multiclass loss matrix.
For algorithms minimizing a surrogate loss, the question of consistency with respect to the target loss matrix reduces to the question of \emph{calibration} of the surrogate loss with respect to the target loss.\footnote{Assuming the surrogate risk minimization procedure is itself consistent (with respect to the surrogate loss); in most cases, this can be achieved by minimizing the surrogate risk over a function class that approaches a universal function class as the training sample size increases, e.g.\ see \cite{Bartlett+06}.} We start by giving both necessary and sufficient conditions for a surrogate loss function to be calibrated with respect to any given target loss matrix. These conditions generalize previous conditions for the multiclass 0-1 loss studied for example by \cite{TewariBa07}. We then introduce the notion of  \emph{convex calibration dimension} of a loss matrix, a fundamental quantity that measures the smallest `size' of a prediction space in which it is possible to design a convex surrogate that is calibrated with respect to the given loss matrix. This quantity can be viewed as representing one measure of the intrinsic `difficulty' of the loss, and has a non-trivial behavior in the sense that one can give examples of loss matrices defined on the same number of class labels that have very different values of the convex calibration dimension, ranging from one (in which case one can achieve consistency by learning a single real-valued function) to practically the number of classes (in which case one must learn as many real-valued functions as the number of classes). 
We give upper and lower bounds on this quantity in terms of various algebraic and geometric properties of the loss matrix, and apply these results to analyze various loss matrices. 

As concrete applications of our framework, we use the convex calibration dimension as a tool to study various loss matrices that arise in subset ranking problems, including the normalized discounted cumulative gain (NDCG), pairwise disagreement (PD), and mean average precision (MAP) losses.
A popular practice in subset ranking, where one needs to rank a set of $r$ documents by relevance to a query, has been to learn $r$ real-valued scoring functions by minimizing a convex surrogate loss in $r$ dimensions, and to then sort the $r$ documents based on these scores. As discussed recently by \cite{Duchi+10} and \cite{Calauzenes+12}, such an approach cannot be consistent for the PD and MAP losses, since these losses do not admit convex calibrated surrogates in $r$ dimensions that can be used together with the sorting operation. We obtain a stronger result; in particular, we show that the convex calibration dimension of these losses is lower bounded by a quadratic function of $r$, which means that if minimizing a convex surrogate loss, one necessarily needs to learn $\Omega(r^2)$ real-valued functions to achieve consistency for these losses. 

\subsection{Related Work}
\label{subsec:related-work}

There has been much work in recent years on consistency and calibration of surrogate losses for various learning problems. We give a brief overview of this body of work here.

Initial work on consistency of surrogate risk minimization algorithms focused largely on binary classification. For example, \cite{Steinwart05} showed the consistency of support vector machines with universal kernels for the problem of binary classification; \cite{Jiang04} and \cite{LugosiVa04} showed similar results for boosting methods.
\cite{Bartlett+06} and \cite{Zhang04a} studied the calibration of margin-based surrogates for binary classification. In particular, in their seminal work, \cite{Bartlett+06} established that the property of `classification calibration' of a surrogate loss is equivalent to its minimization yielding 0-1 consistency, 
and gave a simple necessary and sufficient condition for convex margin-based surrogates to be calibrated w.r.t.\ the binary 0-1 loss. More recently, \cite{ReidWi10} analyzed the calibration of a general family of surrogates termed proper composite surrogates for binary classification. Variants of standard 0-1 binary classification have also been studied; for example, \cite{YuanWe10} studied consistency for the problem of binary classification with a reject option, and \cite{Scott12} studied calibrated surrogates for cost-sensitive binary classification. 

Over the years, there has been significant interest in extending the understanding of consistency and calibrated surrogates to various multiclass learning problems. Early work in this direction, pioneered by \cite{Zhang04b} and \cite{TewariBa07}, considered mainly the multiclass 0-1 classification problem. This work generalized the framework of  \cite{Bartlett+06} to the multiclass 0-1 setting and used these results to study calibration of various surrogates proposed for multiclass 0-1 classification, such as the surrogates of \cite{WestonWa99}, \cite{CrammerSi01a}, and \cite{Lee+04}. In particular, while the multiclass surrogate of \cite{Lee+04} was shown to calibrated for multiclass 0-1 classification, it was shown that several other widely used multiclass surrogates are in fact not calibrated for multiclass 0-1 classification.

More recently, there has been much work on studying consistency and calibration for various other learning problems that also involve finite label and prediction spaces. For example, \cite{GaoZh11} studied consistency and calibration for multi-label prediction with the Hamming loss. Another prominent class of learning problems for which consistency and calibration have been studied recently is that of subset ranking, where instances contain queries together with sets of documents, and the goal is to learn a prediction model that given such an instance ranks the documents by relevance to the query. Various subset ranking losses have been investigated in recent years. \cite{CossockZh08} studied subset ranking with the discounted cumulative gain (DCG) ranking loss, and gave a simple surrogate calibrated w.r.t.\ this loss; \cite{Ravikumar+11} further studied subset ranking with the normalized DCG (NDCG) loss.
\cite{Xia+08} considered the 0-1 loss applied to permutations. \cite{Duchi+10} focused on subset ranking with the pairwise disagreement (PD) loss, and showed that several popular convex score-based surrogates used for this problem are in fact not calibrated w.r.t.\ this loss; they also conjectured that such surrogates may not exist. 
\cite{Calauzenes+12} showed conclusively that there do not exist any convex score-based surrogates that are calibrated w.r.t.\ the PD loss, or w.r.t.\ the mean average precision (MAP) or expected reciprocal rank (ERR) losses.
Finally, in a more general study of subset ranking losses, \cite{Buffoni+11} introduced the notion of `standardization' for subset ranking losses, and gave a way to construct convex calibrated score-based surrogates for subset ranking losses that can be `standardized'; they showed that while the DCG and NDCG losses can be standardized, the MAP and ERR losses cannot be standardized. 

We also point out that in a related but different context, consistency of ranking has also been studied in the instance ranking setting \citep{ClemenconVa07,Clemencon+08,Kotlowski+11,Agarwal14}. 

Finally, \cite{Steinwart07} considered consistency and calibration in a very general setting.
More recently, \cite{Pires+13} used Steinwart's techniques to obtain surrogate regret bounds for certain surrogates w.r.t.\ general multiclass losses, and \cite{Ramaswamy+13} showed how to design explicit convex calibrated surrogates for any low-rank loss matrix. 

\subsection{Contributions of this Paper}
\label{subsec:contributions}

As noted above, we develop a unified framework for studying consistency and calibration for general multiclass (finite-output) learning problems, described by a general loss matrix. We give both necessary conditions and sufficient conditions for a surrogate loss to be calibrated w.r.t.\ a given multiclass loss matrix, and introduce the notion of \emph{convex calibration dimension} of a loss matrix, which measures the smallest `size' of a prediction space in which it is possible to design a convex surrogate that is calibrated with respect to the loss matrix. We derive both upper and lower bounds on this quantity in terms of certain algebraic and geometric properties of the loss matrix, and apply these results to study various subset ranking losses. In particular, we obtain stronger results on the non-existence of convex calibrated surrogates for certain types of subset ranking losses than previous results in the literature (and also positive results on the existence of convex calibrated surrogates for these losses in higher dimensions). The following is a summary of the main differences from the conference version of this paper \citep{RamaswamyAg12}:
\begin{itemize}
\item
Enhanced definition of positive normal sets of a surrogate loss at a sequence of points (\Def{def:pos-normals}; this is required for proofs of stronger versions of our earlier results).
\vspace{-4pt}
\item
Stronger necessary condition for calibration (\Thm{thm:condition-necessary-sequence}).
\vspace{-4pt}
\item
Stronger versions of upper and lower bounds on the convex calibration dimension, with full proofs (Theorems~\ref{thm:ccdim-upper-bound}, \ref{thm:ccdim-lower-bound}).
\vspace{-4pt}
\item
Conditions under which the upper and lower bounds are tight (\Sec{subsec:tight}).
\vspace{-4pt}
\item
Application to a more general setting of the PD loss (\Sec{subsec:PD}).
\vspace{-4pt}
\item
Additional applications to the NDCG and MAP losses (Sections~\ref{subsec:NDCG}, \ref{subsec:MAP}).
\vspace{-4pt}
\item
Additional examples and illustrations throughout (Examples~\ref{exmp:surrogate-CS}, \ref{exmp:calibration-CS-0-1}, \ref{exmp:calibration-CS-abstain-ord}, \ref{exmp:pos-normals-eps-surrogate}; Figures~\ref{fig:pos-normals}, \ref{fig:necess-cond-illus}).\\
\vspace{-16pt}
\item
Minor improvements and changes in emphasis in notation and terminology.
\end{itemize}

\subsection{Organization}
\label{subsec:organization}

We start in \Sec{sec:prelims} with some preliminaries and examples that will be used as running examples to illustrate concepts throughout the paper, and formalize the notion of calibration with respect to a general multiclass loss matrix. In \Sec{sec:conditions}, we derive both necessary conditions and sufficient conditions for calibration with respect to general loss matrices; these are both of independent interest and useful in our later results. \Sec{sec:ccdim} introduces the notion of convex calibration dimension of a loss matrix
and derives both upper and lower bounds on this quantity. 
In \Sec{sec:ranking}, we apply our results to study the convex calibration dimension of various subset ranking losses.
We conclude with a brief discussion in \Sec{sec:concl}.
Shorter proofs are included in the main text; all longer proofs are collected in \Sec{sec:proofs} so as to maintain easy readability of the main text. The only exception to this is proofs of \Lem{lem:pred-pred'} and \Thm{thm:class-calibration-consistency}, which closely follow earlier proofs of  \cite{TewariBa07} and are included for completeness in \App{app:proofs}.
Some calculations are given in Appendices~\ref{app:calculations-trigger-prob} and \ref{app:calculations-pos-normals}.

\section{Preliminaries, Examples, and Background}
\label{sec:prelims}

In this section we set up basic notation (\Sec{subsec:notation}), give background on multiclass loss matrices and risks (\Sec{subsec:multiclass-losses}) and on multiclass surrogates and calibration (\Sec{subsec:multiclass-surrogates}), and then define certain properties associated with multiclass losses and surrogates that will be useful in our study (\Sec{subsec:trigger}).

\subsection{Notation}
\label{subsec:notation} 

Throughout the paper, we denote $\R = (-\infty,\infty)$, $\R_+ = [0,\infty)$, $\bbar{\R} = [-\infty,\infty]$, $\bbar{\R}_+ = [0,\infty]$. Similarly, $\Z$ and $\Z_+$ denote the sets of all integers and non-negative integers, respectively. For $n\in\Z_+$, we denote $[n] = \{1,\ldots,n\}$. 
For a predicate $\phi$, we denote by $\1(\phi)$ the indicator of $\phi$, which takes the value 1 if $\phi$ is true and 0 otherwise. For $z\in\R$, we denote $z_+ = \max(0,z)$.
For a vector $\v\in\R^n$, we denote $\|\v\|_0 = \sum_{i=1}^n \1(v_i \neq 0)$ and $\|\v\|_1 = \sum_{i=1}^n |v_i|$.
For a set $\cA\subseteq\R^n$, we denote by $\relint(\cA)$ the relative interior of $\cA$, by $\cl(\cA)$ the closure of $\cA$, by $\span(\cA)$ the linear span (or linear hull) of $\cA$, by $\aff(\cA)$ the affine hull of $\cA$, and by $\conv(\cA)$ the convex hull of $\cA$.
For a vector space $\V$, we denote by $\dim(\V)$ the dimension of $\V$.
For a matrix $\M\in\R^{m\times n}$, we denote by $\rank(\M)$ the rank of $\M$, by $\affdim(\M)$ the affine dimension of the set of columns of $\M$ (i.e.\ the dimension of the subspace parallel to the affine hull of the columns of $\M$), by $\Null(\M)$ the null space of $\M$, and by $\nullity(\M)$ the nullity of $\M$ (i.e.\ the dimension of the null space of $\M$). 
We denote by $\Delta_n$ the probability simplex in $\R^n$: $\Delta_n=\{\p\in\R_+^n: \sum_{i=1}^n p_i=1\}$.
Finally, we denote by $\Pi_n$ the set of all permutations of $[n]$, i.e.\ the set of all bijective mappings $\sigma:[n]\>[n]$; for a permutation $\sigma\in\Pi_n$ and element $i\in[n]$, $\sigma(i)$ therefore represents the position of element $i$ under $\sigma$. 

\subsection{Multiclass Losses and Risks} 
\label{subsec:multiclass-losses}

The general multiclass learning problem we consider can be described as follows:  There is a finite set of \emph{class labels} $\Y$ and a finite set of possible \emph{predictions} $\hat{\Y}$, which we take without loss of generality to be $\Y = [n]$ and $\hat{\Y} = [k]$ for some $n,k\in\Z_+$. We are given training examples $(X_1,Y_1),\ldots,(X_m,Y_m)$ drawn i.i.d.\ from a distribution $D$ on $\X\times\Y$, where $\X$ is an instance space, and the goal is to learn from these examples a prediction model $h:\X\>\hat{\Y}$ which given a new instance $x\in\X$, makes a prediction $\hat{y} = h(x) \in \hat{\Y}$. In many common learning problems, the label and prediction spaces are the same, i.e.\  $\hat{\Y}=\Y$, but in general, these could be different (e.g.\ when there is an `abstain' option available to a classifier, in which case $k = n+1$).

The performance of a prediction model is measured via a \emph{loss function} $\ell:\Y\times\hat{\Y}\>\R_+$, or equivalently, by a \emph{loss matrix} $\L\in\R_+^{n\times k}$, with $(y,t)$-th element given by $\ell_{yt} = \ell(y,t)$; here $\ell_{yt} = \ell(y,t)$ defines the penalty incurred on predicting $t\in[k]$ when the true label is $y\in[n]$. We will use the notions of loss matrix and loss function interchangeably.
Some examples of common multiclass loss functions and corresponding loss matrices are given below:


\begin{exmp}[0-1 loss]
\label{exmp:loss-0-1}
Here $\Y=\hat{\Y}=[n]$, and the loss incurred is $1$ if the predicted label $t$ is different from the actual class label $y$, and $0$ otherwise:
\[
\ell^\zo(y,t) 
	~ = ~
	\1\big( t\neq y \big)
	~~~~~~~~\forall y,t\in[n]
    \,.
\] 
The loss matrix $\L^{\zo}$ for $n=3$ is shown in \Fig{fig:loss-matrices}(a). 
This is one of the most commonly used multiclass losses, and is suitable when all prediction errors are considered equal.
\end{exmp}

\begin{exmp}[Ordinal regression loss]
\label{exmp:loss-ord}
Here $\Y=\hat{\Y}=[n]$, and predictions $t$ farther away from the actual class label $y$ are penalized more heavily, e.g.\ using absolute distance:
\[
\ell^\ord(y,t)
	~ = ~
	|t-y|
	~~~~~~~~\forall y,t\in[n]
    \,.
\]
The loss matrix $\L^{\ord}$ for $n=3$ is shown in \Fig{fig:loss-matrices}(b).
This loss is often used when the class labels satisfy a natural ordinal property, for example in evaluating recommender systems that predict the number of stars (say out of 5) assigned to a product by user.
\end{exmp}

\begin{exmp}[Hamming loss]
\label{exmp:loss-hamming}
Here $\Y=\hat{\Y}=[2^r]$ for some $r\in\Z_+$, and the loss incurred on predicting $t$ when the actual class label is $y$ is the number of bit-positions in which the $r$-bit binary representations of $t-1$ and $y-1$ differ:
\[
\ell^\Ham(y,t) 
	~ = ~
	\sum_{i=1}^r \1\big( (t-1)_i \neq (y-1)_i \big)
	~~~~~~~~\forall y,t\in[2^r]
    \,,
\]
where for each $z\in\{0,\ldots,2^r-1\}$, $z_i\in\{0,1\}$ denotes the $i$-th bit in the $r$-bit binary representation of $z$. The loss matrix $\L^{\Ham}$ for $r=2$ is shown in \Fig{fig:loss-matrices}(c).
This loss is frequently used in sequence learning applications, where each element in $\Y = \hat{\Y}$ is a binary sequence of length $r$, and the loss in predicting a sequence $\t\in\{0,1\}^r$ when the true label sequence is $\y\in\{0,1\}^r$ is simply the Hamming distance between the two sequences. 
\end{exmp}

\begin{exmp}[`Abstain' loss]
\label{exmp:loss-abstain}
Here $\Y=[n]$ and $\hat{\Y}=[n+1]$, where $t=n+1$ denotes a prediction of `abstain' (or `reject'). One possible loss function in this setting assigns a loss of $1$ to incorrect predictions in $[n]$, $0$ to correct predictions, and $\half$ for abstaining:
\[
\ell^\abstain(y,t) 
	~ = ~ 
	\1\big( t\in[n] \big) \cdot \1\big( t\neq y \big)  + \frac{1}{2} \cdot \1\big( t=n+1 \big)
	~~~~~~~~\forall y \in [n], t\in[n+1]
    \,.
\]
The loss matrix $\L^{\abstain}$ for $n=3$ is shown in \Fig{fig:loss-matrices}(d).
This type of loss is suitable in applications where making an erroneous prediction is more costly than simply abstaining. For example, in medical diagnosis applications, when uncertain about the correct prediction, it may be better to abstain and request human intervention rather than make a misdiagnosis.
\end{exmp}

\begin{figure}[t]
\[
{\underset{\textrm{\rule{0pt}{11pt}\normalsize{(a)}}}{\left[ \begin{array}{ccc}
    0 & 1 & 1 \\
    1 & 0 & 1 \\
    1 & 1 & 0
\end{array} \right]}}
\hspace{0.5cm}
{\underset{\textrm{\rule{0pt}{11pt}\normalsize{(b)}}}{\left[ \begin{array}{ccc}
    0 & 1 & 2 \\
    1 & 0 & 1 \\
    2 & 1 & 0
\end{array} \right]}}
\hspace{0.5cm}
{\underset{\textrm{\normalsize{(c)}}}{\left[ \begin{array}{cccc}
 	0 & 1 & 1 & 2 \\
    1 & 0 & 2 & 1 \\
    1 & 2 & 0 & 1 \\
    2 & 1 & 1 & 0
\end{array} \right]}}
\hspace{0.5cm}
{\underset{\textrm{\rule{0pt}{11pt}\normalsize{(d)}}}{\left[ \begin{array}{cccc}
 	0 & 1 & 1 & \half \\
    1 & 0 & 1 & \half \\
    1 & 1 & 0 & \half \\
\end{array} \right]}}
\]
\vspace{-12pt}
\caption{Loss matrices corresponding to Examples 1-4: (a) $\L^\zo$ for $n=3$; (b) $\L^\ord$ for $n=3$; (c) $\L^\Ham$ for $r=2$ ($n=4$); (d) $\L^\abstain$ for $n=3$.}
\label{fig:loss-matrices}
\end{figure}

As noted above, given examples $(X_1,Y_1),\ldots,(X_m,Y_m)$ drawn i.i.d. from a distribution $D$ on $\X\times[n]$, the goal is to learn a prediction model $h:\X\>[k]$. More specifically, given a target loss matrix $\L\in\R_+^{n\times k}$ with $(y,t)$-th element $\ell_{yt}$, the goal is to learn a model $h:\X\>[k]$ with small expected loss on a new example drawn randomly from $D$, which we will refer to as the \emph{$\L$-risk} or \emph{$\L$-error} of $h$:
\begin{equation}
\er_D^\L[h]
    ~ \= ~
    \E_{(X,Y)\sim D} \big[ \ell_{Y,h(X)} \big]
    \,.
\label{eqn:ell-risk}
\end{equation}
Clearly, denoting the $t$-th column of $\L$ as $\bell_t = (\ell_{1t},\ldots,\ell_{nt})^\top \in \R_+^n$, and the class probability vector at an instance $x\in\X$ under $D$ as $\p(x) = (p_1(x),\ldots,p_n(x))^\top \in \Delta_n$, where $p_y(x) = \P(Y=y \mid X=x)$ under $D$, the $\L$-risk of $h$ can be written as  
\begin{equation}    
\er_D^\L[h]
    ~ = ~
    \E_X \bigg[ \sum_{y=1}^n p_y(X) \, \ell_{y,h(X)} \bigg]
    ~ = ~
    \E_X \Big[ \p(X)^\top \bell_{h(X)} \Big]
    \,.
\end{equation}
The \emph{optimal $\L$-risk} or  \emph{optimal $\L$-error} for a distribution $D$ is then simply the smallest $\L$-risk or $\L$-error that can be achieved by any model $h$:
\begin{equation}
\er^{\L,*}_D
    ~ \= ~
    \inf_{h:\X\>[k]} \er^{\L}_D[h]
    ~ = ~
    \inf_{h:\X\>[k]} \E_X \Big[ \p(X)^\top \bell_{h(X)} \Big]
    ~ = ~ \E_X \bigg[ \min_{t\in[k]} \, \p(X)^\top \bell_t \bigg]
    \,.
\label{eqn:ell-risk-opt}
\end{equation}

Ideally, one would like to minimize (approximately) the $\L$-risk, e.g.\ by selecting a model that minimizes the average $\L$-loss on the training examples among some suitable class of models. However, minimizing the discrete $\L$-risk directly is typically computationally difficult. Consequently, one usually minimizes a (convex) \emph{surrogate} risk instead. 

\subsection{Multiclass Surrogates and Calibration} 
\label{subsec:multiclass-surrogates}

Let $d\in\Z_+$, and let $\C\subseteq\R^d$ be a convex set. A \emph{surrogate loss function} $\psi:\Y\times\C\>\R_+$ acting on the \emph{surrogate prediction space} $\C$ assigns a penalty $\psi(y,\u)$ on making a surrogate prediction $\u\in\C$ when the true label is $y\in[n]$.
The \emph{$\psi$-risk} or \emph{$\psi$-error} of a surrogate prediction model $\f:\X\>\C$ w.r.t.\ a distribution $D$ on $\X\times[n]$ is then defined as 
\begin{equation}
\er^\psi_D[\f]
    ~ \= ~ 
    \E_{(X,Y)\sim D} \Big[ \psi\big( Y,\f(X) \big) \Big]
    \,.
\label{eqn:psi-risk}
\end{equation}
The surrogate $\psi$ can be represented via $n$ real-valued functions $\psi_y:\C\>\R_+$ for $y\in[n]$, defined as
\(
\psi_y(\u) = \psi(y,\u)
	\,;
\)
equivalently, we can also represent the surrogate $\psi$ as a vector-valued function $\bpsi:\C\>\R_+^n$, defined as
\(
\bpsi(\u) = \big( \psi_1(\u),\ldots,\psi_n(\u) \big)^\top 
	\,.
\)
Clearly, the $\psi$-risk of $\f$ can then be written as
\begin{equation}
\er^\psi_D[\f]
    ~ = ~
    \E_X \bigg[ \sum_{y=1}^n p_y(X) \, \psi_y(\f(X)) \bigg]
    ~ = ~
    \E_X \Big[ \p(X)^\top \bpsi(\f(X)) \Big]
    \,.
\end{equation}
The \emph{optimal $\psi$-risk} or  \emph{optimal $\psi$-error} for a distribution $D$ is then simply the smallest $\psi$-risk or $\psi$-error that can be achieved by any model $\f$:
\begin{equation}
\er^{\psi,*}_D
    ~ \= ~
    \inf_{\f:\X\>\C} \er^{\psi}_D[\f]
    ~ = ~
    \inf_{\f:\X\>\C} \E_X \Big[ \p(X)^\top \bpsi(\f(X)) \Big]
    ~ = ~ 
    \E_X \bigg[ \inf_{\u\in\C} \p(X)^\top \bpsi(\u) \bigg]
    \,.
\label{eqn:psi-risk-opt}
\end{equation}
We will find it convenient to define the sets 
\begin{eqnarray}
\cR_\psi
    & \= &
    \bpsi(\C) ~ \subseteq ~ \R_+^n
\\
\S_\psi
    & \= &
    \conv(\cR_\psi) ~ \subseteq ~ \R_+^n
	\,.
\end{eqnarray}
Clearly, the optimal $\psi$-risk can then also be written as 
\begin{equation}
\er^{\psi,*}_D
    ~ = ~
	\E_X \bigg[ \inf_{\z\in\cR_\psi} \p(X)^\top \z \bigg]
    ~ = ~
	\E_X \bigg[ \inf_{\z\in\S_\psi} \p(X)^\top \z \bigg]
	\,.
\end{equation}

\begin{exmp}[Crammer-Singer surrogate]
\label{exmp:surrogate-CS}
The Crammer-Singer surrogate was proposed as a hinge-like surrogate loss for 0-1 multiclass classification \citep{CrammerSi01a}. For $\Y=[n]$, the Crammer-Singer surrogate $\bpsi^\CS$ acts on the surrogate prediction space $\C=\R^n$ and is defined as follows:
\[
\psi^\CS_y(\u)
	~ = ~
	\max_{y'\in [n], y'\neq y }  \big( 1 - (u_y - u_{y'}) \big)_+
	~~~~~~~~\forall y\in[n], \u\in\R^n
	\,.
\]
\end{exmp}

A surrogate $\bpsi$ is convex if $\psi_y$ is convex $\forall y\in[n]$. As an example, the Crammer-Singer surrogate defined above is clearly convex. Given training examples $(X_1,Y_1),\ldots,(X_m,Y_m)$ drawn i.i.d.\ from a distribution $D$ on $\X\times[n]$, a (convex) surrogate risk minimization algorithm using a (convex) surrogate loss $\bpsi:\C\>\R_+^n$ learns a surrogate prediction model by minimizing (approximately, based on the training sample) the $\psi$-risk; the learned model $\f:\X\>\C$ is then used to make predictions in the original space $[k]$ via some transformation $\pred:\C\>[k]$. This yields a prediction model $h:\X\>[k]$ for the original multiclass problem given by $h = (\pred \circ \f)$: the prediction on a new instance $x\in\X$ is given by $\pred(\f(x))$, and the $\L$-risk incurred is $\er^\L_D[\pred \circ \f]$. As an example, several surrogate risk minimizing algorithms for multiclass classification with respect to 0-1 loss (including that based on the Crammer-Singer surrogate) use a surrogate space $\C=\R^n$, learn a function of the form $\f:\X\>\R^n$, and predict according to $\pred(\f(x)) = \argmax_{t\in[n]}f_t(x)$.

Under suitable conditions, surrogate risk minimization algorithms that approximately minimize the $\psi$-risk based on a training sample are known to be consistent with respect to the $\psi$-risk, i.e.\ to converge (in probability) to the optimal $\psi$-risk as the number of training examples $m$ increases.
This raises the natural question of whether, for a given loss matrix $\L$, there are surrogate losses $\psi$ for which consistency with respect to the $\psi$-risk also guarantees consistency with respect to the $\L$-risk, i.e.\ guarantees convergence (in probability) to the optimal $\L$-risk (defined in \Eqn{eqn:ell-risk-opt}).
As we shall see below, this amounts to the question of \emph{calibration} of surrogate losses $\psi$ w.r.t.\ a given target loss matrix $\L$, and has been studied in detail for the 0-1 loss and for square losses of the form $\ell(y,t) = a_y \1(t\neq y)$, which can be analyzed similarly to the 0-1 loss \citep{Zhang04b,TewariBa07}. In this paper, we consider this question for general multiclass loss matrices $\L\in\R_+^{n\times k}$, including rectangular loss matrices with $k\neq n$. 
The only assumption we make on $\L$ is that for each $t\in[k]$, $\exists \p\in\Delta_n$ such that $\argmin_{t'\in[k]} \p^\top\bell_{t'} = \{t\}$ (otherwise the element $t \in[k]$ never needs to be predicted and can simply be ignored).

We will need the following definitions and basic results, generalizing those of \cite{Zhang04b}, \cite{Bartlett+06}, and \cite{TewariBa07}. The notion of calibration will be central to our study; as \Thm{thm:class-calibration-consistency} below shows, calibration of a surrogate loss $\psi$ w.r.t. $\L$ corresponds to the property that consistency w.r.t.\ $\psi$-risk implies consistency w.r.t.\ $\L$-risk.  
Proofs of \Lem{lem:pred-pred'} and \Thm{thm:class-calibration-consistency} can be found in \App{app:proofs}.

\begin{defn}[$(\L,\cP)$-calibration] 
\label{def:calibration}
Let $\L\in\R_+^{n\times k}$ and $\cP\subseteq \Delta_n$. 
A surrogate loss function $\bpsi:\C\>\R_+^n$ is said to be \emph{$(\L,\cP)$-calibrated} if there exists a function $\pred:\C\>[k]$ such that
\[
\forall \p\in\cP:
    ~~~~\inf_{\u\in\C:\pred(\u)\notin\argmin_{t} \p^\top \bell_t} \p^\top \bpsi(\u)
    ~ > ~
    \inf_{\u\in\C} \p^\top \bpsi(\u)
    \,.
\]
If $\bpsi$ is $(\L,\Delta_n)$-calibrated, we simply say $\bpsi$ is $\L$-calibrated.
\end{defn}

The above definition of calibration clearly generalizes that used in the binary case. For example, in the case of binary 0-1 classification, with $n=k=2$ and $\L^\zo = \big[ \begin{smallmatrix} 0 & 1 \\ 1 & 0 \end{smallmatrix} \big]$, the probability simplex $\Delta_2$ is equivalent to the interval $[0,1]$, and it can be shown that one only need consider surrogates on the real line, $\C=\R$, and the predictor `$\sign$'; in this case one recovers the familiar definition of binary classification calibration of \cite{Bartlett+06}, namely that a surrogate $\psi:\R\>\R_+^2$ is $(\L^\zo,\Delta_2)$-calibrated if 
\[
\forall p\in[0,1], p\neq \half:
	~~~~\inf_{u\in\R: \sign(u)\neq \sign(p-\half)} p\, \psi_1(u) + (1-p) \psi_2(u)
    ~ > ~
    \inf_{u\in\R}  p\, \psi_1(u) + (1-p)\psi_2(u)
    \,.
\]
Similarly, in the case of binary cost-sensitive classification, with 
$n=k=2$ and $\L^c = \big[ \begin{smallmatrix} 0~ & 1-c \\ c & 0 \end{smallmatrix} \big]$ where $c\in(0,1)$ is the cost of a false positive and $(1-c)$ that of a false negative, one recovers the corresponding definition of \cite{Scott12}, namely that a surrogate $\psi:\R\>\R_+^2$ is $(\L^c,\Delta_2)$-calibrated if 
\[
\forall p\in[0,1], p\neq c:
	~~~~\inf_{u\in\R: \sign(u)\neq \sign(p-c)} p\, \psi_1(u) + (1-p) \psi_2(u)
    ~ > ~
    \inf_{u\in\R}  p\, \psi_1(u) + (1-p)\psi_2(u)
    \,.
\]

The following lemma gives a characterization of calibration similar to that used by \cite{TewariBa07}:
\begin{lem}
\label{lem:pred-pred'}
Let $\L\in\R_+^{n\times k}$ and $\cP\subseteq \Delta_n$. 
Then a surrogate loss $\bpsi:\C\>\R_+^n$ is $(\L,\cP)$-calibrated iff there exists a function $\pred':\S_\psi\>[k]$ such that
\[
\forall \p\in\cP:
    ~~~~\inf_{\z\in\S_\psi:\pred'(\z)\notin\argmin_{t} \p^\top \bell_t} \p^\top \z
    ~ > ~
    \inf_{\z\in\S_\psi} \p^\top \z
    \,.
\]
\end{lem}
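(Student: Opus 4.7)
The plan is to prove the two directions separately; the $(\Leftarrow)$ direction is direct, whereas the $(\Rightarrow)$ direction requires a careful extension from $\cR_\psi$ to the convex hull $\S_\psi$. As a preliminary, I would record that since $\p^\top(\cdot)$ is linear and $\S_\psi = \conv(\cR_\psi)$, we have
\[
\inf_{\z \in \S_\psi} \p^\top \z ~ = ~ \inf_{\z \in \cR_\psi} \p^\top \z ~ = ~ \inf_{\u \in \C} \p^\top \bpsi(\u) ~ =: ~ M(\p)\,.
\]

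For the $(\Leftarrow)$ direction, given $\pred':\S_\psi\>[k]$ satisfying the stated inequality, I would simply define $\pred:\C\>[k]$ by $\pred(\u) = \pred'(\bpsi(\u))$. Then $\bpsi$ maps $\{\u\in\C : \pred(\u)\notin\argmin_t \p^\top\bell_t\}$ into $\{\z\in\cR_\psi : \pred'(\z)\notin\argmin_t \p^\top\bell_t\} \subseteq \{\z\in\S_\psi : \pred'(\z)\notin\argmin_t \p^\top\bell_t\}$, so the corresponding infimum over $\C$ of $\p^\top\bpsi(\u)$ is at least the $\S_\psi$-infimum appearing in the hypothesis, which is strictly greater than $M(\p)$.

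For the $(\Rightarrow)$ direction, given a calibrated $\pred:\C\>[k]$, I would construct $\pred'$ in two parts. On $\cR_\psi$, invoke the axiom of choice to pick $\u_\z\in\bpsi^{-1}(\z)$ for each $\z\in\cR_\psi$ and set $\pred'(\z) = \pred(\u_\z)$. On $\S_\psi\setminus\cR_\psi$, use Caratheodory's theorem to fix a decomposition $\z = \sum_{i=1}^m \alpha_i \z_i$ with $\z_i\in\cR_\psi$, $\alpha_i>0$, $\sum_i \alpha_i = 1$, and $m\le n+1$, then set $\pred'(\z) = \pred'(\z_{i^*})$ for $i^* \in \argmax_i \alpha_i$, so that $\alpha_{i^*}\ge 1/(n+1)$. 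To verify the inequality, fix $\p\in\cP$ and let $\delta(\p) = \inf_{\u:\pred(\u)\notin\argmin_t \p^\top\bell_t} \p^\top\bpsi(\u) - M(\p) > 0$ by hypothesis. For any $\z\in\S_\psi$ with $\pred'(\z)\notin\argmin_t \p^\top\bell_t$, the chosen component satisfies $\pred(\u_{\z_{i^*}})\notin\argmin_t \p^\top\bell_t$, hence $\p^\top\z_{i^*} \ge M(\p) + \delta(\p)$; combined with $\p^\top\z_j \ge M(\p)$ for every $j$, this yields $\p^\top\z \ge M(\p) + \alpha_{i^*}\delta(\p) \ge M(\p) + \delta(\p)/(n+1) > M(\p)$.

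The main obstacle is the extension step in the $(\Rightarrow)$ direction. A naive arbitrary extension (e.g.\ $\pred'(\z) = 1$ on $\S_\psi\setminus\cR_\psi$) can fail because such a point may satisfy $\p^\top\z = M(\p)$ while $\pred'(\z)\notin\argmin_t \p^\top\bell_t$, destroying the required strict inequality. Caratheodory's theorem is the critical tool: the dimension bound $m\le n+1$ yields a uniform lower bound $\alpha_{i^*}\ge 1/(n+1)$, which converts the pointwise calibration gap on $\cR_\psi$ into a uniform gap on all of $\S_\psi$.
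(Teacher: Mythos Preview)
Your proposal is correct and follows essentially the same approach as the paper: both directions match, with the key step in the $(\Rightarrow)$ direction being the use of Carath\'eodory's theorem to guarantee a component with weight at least $1/(n+1)$, which converts the calibration gap on $\cR_\psi$ into a uniform gap on $\S_\psi$. The only cosmetic difference is that the paper applies the Carath\'eodory decomposition uniformly to all of $\S_\psi$ (including $\cR_\psi$, where it is trivial), whereas you split into the two cases $\cR_\psi$ and $\S_\psi\setminus\cR_\psi$; this does not affect the argument.
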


In this paper, we will mostly be concerned with $(\L,\Delta_n)$-calibration, which as noted above, we refer to as simply $\L$-calibration. The following result, whose proof is a straightforward generalization of that of a similar result for the 0-1 loss given by \cite{TewariBa07}, explains why $\L$-calibration is useful:
\begin{thm}
\label{thm:class-calibration-consistency}
Let $\L\in\R_+^{n\times k}$.
A surrogate loss $\bpsi:\C\>\R_+^n$ is $\L$-calibrated iff there exists a function $\pred:\C\>[k]$ such that for all distributions $D$ on $\X\times[n]$ 
and all sequences of (vector) functions $\f_m:\X\>\C$,
\[
\er_D^\psi[\f_m] \longrightarrow \er_D^{\psi,*} 
	~~~~~ \textrm{implies} ~~~~~
	\er_D^\L[\pred \circ \f_m] \longrightarrow \er_D^{\L,*}
    \,.
\]
\end{thm}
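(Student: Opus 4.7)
My plan is to prove both directions of the equivalence with a single predictor $\pred$, closely following the Tewari-Bartlett argument for multiclass 0-1 classification generalized to an arbitrary loss matrix $\L$; \Lem{lem:pred-pred'} is used as needed to move between surrogate predictions in $\C$ and in $\S_\psi$.

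For the forward direction, suppose $\bpsi$ is $\L$-calibrated and let $\pred:\C\>[k]$ witness this. Writing $C^\psi_\p(\u)=\p^\top\bpsi(\u)$ and $C^\L_\p(t)=\p^\top\bell_t$ for the conditional inner risks with respective optima $C^{\psi,*}_\p$ and $C^{\L,*}_\p$, I would introduce the calibration function
\[
\eta(\epsilon) ~\=~ \inf_{\p\in\Delta_n} \inf_{\u\in\C \,:\, C^\L_\p(\pred(\u)) - C^{\L,*}_\p \,\geq\, \epsilon} \bigl( C^\psi_\p(\u) - C^{\psi,*}_\p \bigr).
\]
Using calibration together with compactness of $\Delta_n$ and finiteness of $[k]$, I would establish $\eta(\epsilon)>0$ for every $\epsilon>0$. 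Passing to its convex lower envelope $\eta^{**}$, which remains strictly positive on $(0,\infty)$, Jensen's inequality applied to the pointwise bound $C^\psi_{\p(X)}(\f(X)) - C^{\psi,*}_{\p(X)} \geq \eta^{**}\bigl(C^\L_{\p(X)}(\pred(\f(X))) - C^{\L,*}_{\p(X)}\bigr)$ then yields
\[
\er^\psi_D[\f] - \er^{\psi,*}_D ~\geq~ \eta^{**}\bigl(\er^\L_D[\pred\circ\f] - \er^{\L,*}_D\bigr),
\]
from which consistency transfer for any sequence $\f_m$ follows immediately.

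For the reverse direction, let $\pred:\C\>[k]$ be the predictor provided by the consistency transfer hypothesis; I claim this same $\pred$ witnesses $\L$-calibration of $\bpsi$. Suppose for contradiction that it does not. Then there exist $\p^*\in\Delta_n$ and a sequence $\u_m\in\C$ with $\pred(\u_m)\notin\argmin_t \p^{*\top}\bell_t$ and $\p^{*\top}\bpsi(\u_m) \to \inf_{\u\in\C}\p^{*\top}\bpsi(\u)$. Taking $\X$ to be a singleton with constant conditional distribution $\p^*$ on $[n]$ and setting $\f_m(x)\equiv\u_m$, I obtain $\er^\psi_D[\f_m]\to\er^{\psi,*}_D$ by construction, whereas $\er^\L_D[\pred\circ\f_m] - \er^{\L,*}_D$ is bounded below by the strictly positive constant $\min_{t\notin\argmin_{t'} \p^{*\top}\bell_{t'}}\bigl(\p^{*\top}\bell_t - \min_{t'}\p^{*\top}\bell_{t'}\bigr)$, because $[k]\setminus\argmin_t \p^{*\top}\bell_t$ is finite---contradicting consistency transfer for this $\pred$.

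The main technical obstacle I anticipate is in the forward direction, namely showing strict positivity of the convex envelope $\eta^{**}$ on $(0,\infty)$; while calibration immediately gives $\eta(\epsilon)>0$ for $\epsilon>0$, one must rule out a sub-linear decay of $\eta$ near $0$ that would collapse $\eta^{**}$ to zero, and this requires careful use of compactness of $\Delta_n$ and finiteness of $[k]$ as in \cite{TewariBa07}. The reverse direction, by contrast, is essentially a clean sequence construction.
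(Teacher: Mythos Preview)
Your proposal is correct and follows essentially the same route as the paper's proof (which in turn follows \cite{TewariBa07}): the calibration function $\eta$ is the paper's $H$, the reverse direction is essentially identical, and the paper cites \cite{Zhang04a} for the envelope/Jensen step you spell out.

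One small correction on where the difficulty lies: the delicate step is not the positivity of $\eta^{**}$ but of $\eta$ itself. Calibration gives strict positivity only pointwise in $\p$, and the infimum over $\p\in\Delta_n$ could a priori collapse to zero; this is where compactness of $\Delta_n$, continuity of $\p\mapsto\inf_{\z\in\S_\psi}\p^\top\z$ (\Lem{lem:continuous} in the paper), and a subsequence argument are genuinely needed. Once $\eta(\epsilon)>0$ on the bounded interval $(0,\max_{y,t}\ell_{yt}]$, positivity of $\eta^{**}$ is routine since the biconjugate is bounded below by the chord from the origin to the right endpoint.
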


In particular, \Thm{thm:class-calibration-consistency} implies that a surrogate $\bpsi$ is $\L$-calibrated if and only if $\exists$ a mapping $\pred:\C\>[k]$ such that any $\psi$-consistent algorithm learning models of the form $\f_m:\X\>\C$ (from i.i.d.\ examples $(X_1,Y_1),\ldots,(X_m,Y_m)$) yields an $\L$-consistent algorithm learning models of the form $(\pred \circ \f_m): \X\>[k]$.

\subsection{Trigger Probabilities and Positive Normals} 
\label{subsec:trigger}

Our goal is to study conditions under which a surrogate loss $\bpsi:\C\>\R_+^n$ is $\L$-calibrated for a target loss matrix $\L\in\R_+^{n\times k}$. To this end, we will now define certain properties of both multiclass loss matrices $\L$ and multiclass surrogates $\bpsi$ that will be useful in relating the two. Specifically, we will define \emph{trigger probability sets} associated with a multiclass loss matrix $\L$, and \emph{positive normal sets} associated with a multiclass surrogate $\bpsi$; in \Sec{sec:conditions} we will use these to obtain both necessary and sufficient conditions for calibration.

\begin{defn}[Trigger probability sets]
\label{def:trigger-prob}
Let $\L\in\R_+^{n\times k}$. For each $t\in[k]$, the \emph{trigger probability set of $\L$ at $t$} is defined as
\[
\Q^\L_t
    ~ \= ~
    \Big\{ \p\in\Delta_n: \p^\top (\bell_{t} - \bell_{t'}) \leq 0 ~\forall t'\in[k] \Big\}
    ~ = ~
    \Big\{ \p\in\Delta_n: t \in \argmin_{t'\in[k]} \p^\top \bell_{t'} \Big\}
    \,.
\]
\end{defn}

In words, the trigger probability set $\Q^\L_t$ is the set of class probability vectors for which predicting $t$ is optimal in terms of minimizing $\L$-risk. Such sets have also been studied by \cite{LambertSh09} and \cite{OBrien+08} in a different context. \cite{LambertSh09} show that these sets form what is called a power diagram, which is a generalization of the Voronoi diagram. Trigger probability sets for the 0-1, ordinal regression, and `abstain' loss matrices (described in Examples~\ref{exmp:loss-0-1}, \ref{exmp:loss-ord} and \ref{exmp:loss-abstain}) are illustrated in \Fig{fig:trigger-prob}; the corresponding calculations can be found in \App{app:calculations-trigger-prob}.

\begin{figure}[t]
${\underset{\begin{minipage}{1.8in}
\begin{tiny}
\vspace{0.2cm}
    $Q^\zo_1 = \{ \p\in\Delta_3: p_1\geq\max(p_2,p_3) \}$ \\[1pt]
    $Q^\zo_2 = \{ \p\in\Delta_3: p_2\geq\max(p_1,p_3) \}$ \\[1pt]
    $Q^\zo_3 = \{ \p\in\Delta_3: p_3\geq\max(p_1,p_2) \}$ \\[6pt]
    \begin{center}{\normalsize{(a)}}\end{center}
\end{tiny}
\end{minipage}}{\includegraphics[width=0.3\textwidth]{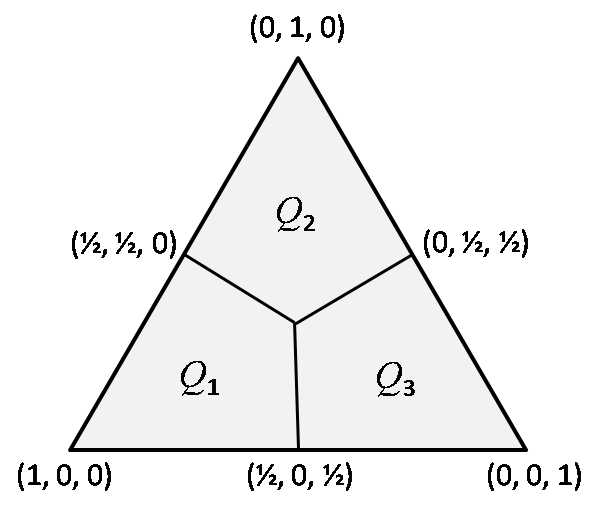}}}$
\hspace{1.5em}
${\underset{\begin{minipage}{1.8in}
\begin{tiny}
\vspace{0.2cm}
    $Q^\ord_1 = \{ \p\in\Delta_3: p_1\geq\half \}$ \\[1pt]
    $Q^\ord_2 = \{ \p\in\Delta_3: p_1\leq\half, p_3\leq\half \}$ \\[1pt]
    $Q^\ord_3 = \{ \p\in\Delta_3: p_3\geq\half \}$ \\[5.5pt]
    \begin{center}{\normalsize{(b)}}\end{center}
\end{tiny}
\end{minipage}}{\includegraphics[width=0.3\textwidth]{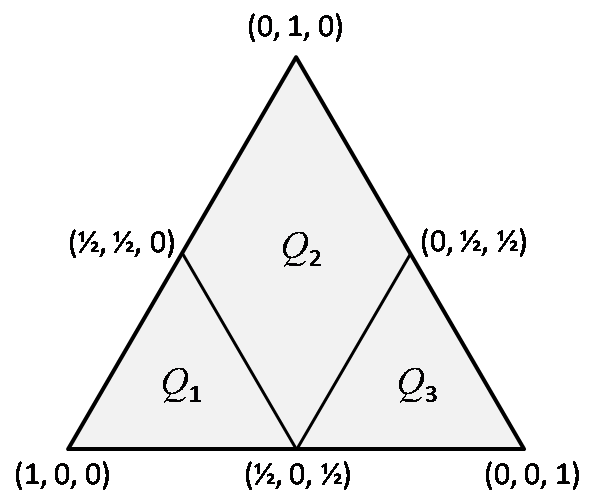}}}$
\hspace{1.5em}
${\underset{\begin{minipage}{1.8in}
\begin{tiny}
\vspace{0.2cm}
    $Q^{(?)}_1 = \{ \p\in\Delta_3: p_1\geq\half \}$ \\[1pt]
    $Q^{(?)}_2 = \{ \p\in\Delta_3: p_2\geq\half \}$ \\[1pt]
    $Q^{(?)}_3 = \{ \p\in\Delta_3: p_3\geq\half \}$ \\[1pt]
    $Q^{(?)}_4 = \{ \p\in\Delta_3: \max(p_1,p_2,p_3)\leq\half \}$
    \vspace{-10pt}
    \begin{center}{\normalsize{(c)}}\end{center}
\end{tiny}
\end{minipage}}{\includegraphics[width=0.3\textwidth]{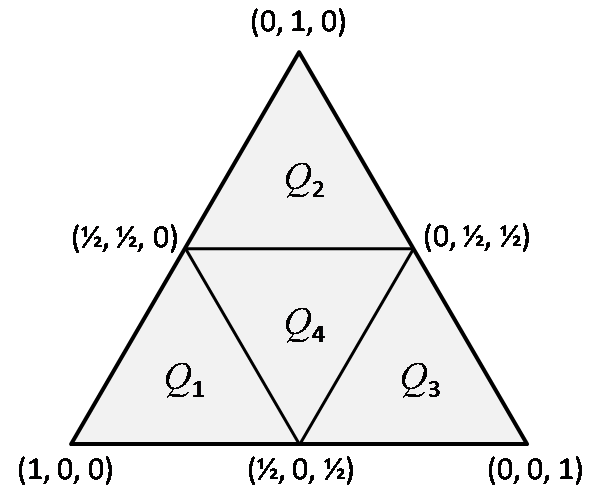}}}$
\vspace{-8pt}
\caption{Trigger probability sets for (a) 0-1 loss $\L^\zo$; (b) ordinal regression loss $\L^\ord$; and (c) `abstain' loss $\L^\abstain$; all for $n=3$, for which the probability simplex can be visualized easily. Calculations of these sets can be found in \App{app:calculations-trigger-prob}.}
\label{fig:trigger-prob}
\end{figure}

\begin{defn}[Positive normal sets]
\label{def:pos-normals}
Let $\bpsi:\C\>\R_+^n$. For each point $\z\in\S_\psi$, the \emph{positive normal set of $\bpsi$ at $\z$} is defined as\footnote{For points $\z$ in the interior of $\S_\psi$, $\cN^\psi(\z)$ is empty.}
\[
\cN^\psi(\z)
    ~ \= ~
    \Big\{ \p\in\Delta_n: \p^\top (\z-\z') \leq 0 ~\forall \z'\in\S_\psi \Big\}
    ~ = ~
    \Big\{ \p\in\Delta_n: \p^\top \z = \inf_{\z'\in\S_\psi} \p^\top\z' \Big\}
    \,.
\]
For any sequence of points $\{\z_m\}$ in $\S_\psi$, the \emph{positive normal set of $\bpsi$ at $\{\z_m\}$} is defined as\footnote{For sequences $\{\z_m\}$ for which $\lim_{m\>\infty} \p^\top \z_m$ does not exist for any $\p$, $\cN^\psi(\z)$ is empty.}
\[
\cN^\psi(\{\z_m\})
    ~ \= ~
    \Big\{ \p\in\Delta_n: \lim_{m\>\infty} \p^\top \z_m = \inf_{\z'\in\S_\psi} \p^\top \z' \Big\}
    \,.
\]
\end{defn}

In words, the positive normal set $\cN^\psi(\z)$ at a point $\z=\bpsi(\u) \in \cR_\psi$ is the set of class probability vectors for which predicting $\u$ is optimal in terms of minimizing $\psi$-risk. Such sets were also studied by \cite{TewariBa07}. The extension to sequences of points in $\S_\psi$ is needed for technical reasons in some of our proofs. Note that for $\cN^\psi(\{\z_m\})$ to be well-defined, the sequence $\{\z_m\}$ need not converge itself; however if the sequence $\{\z_m\}$ does converge to some point $\z\in\S_\psi$, then $\cN^\psi(\{\z_m\})=\cN^\psi(\z)$. Positive normal sets for the Crammer-Singer surrogate (described in \Exmp{exmp:surrogate-CS}) at 4 points are illustrated in \Fig{fig:pos-normals}; the corresponding calculations can be found in \App{app:calculations-pos-normals}.

\begin{figure}[t]
\begin{center}
\includegraphics[width=0.3\textwidth]{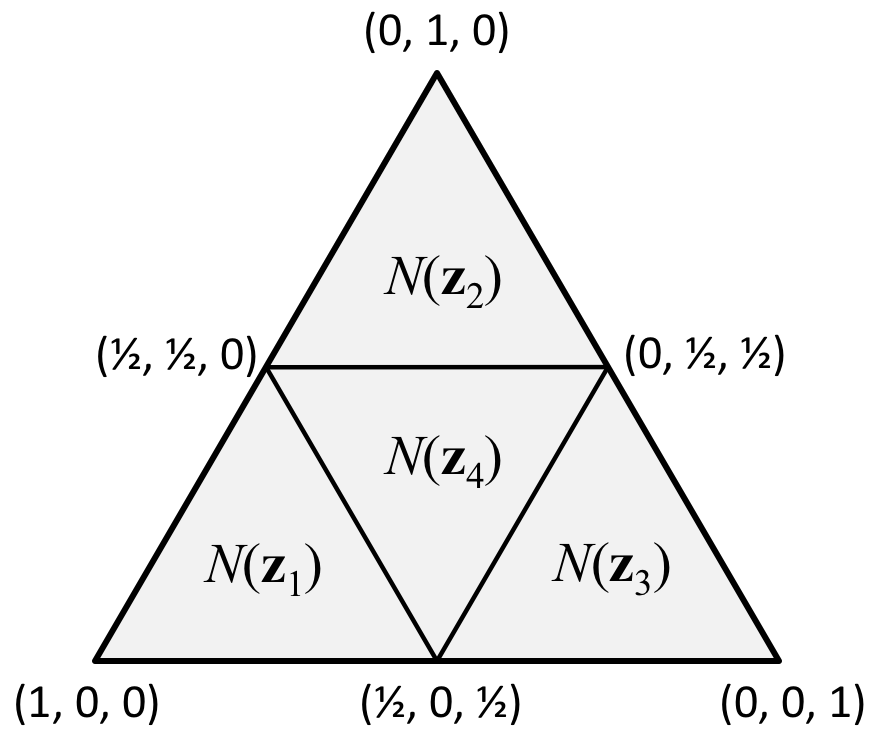}
\hspace{1.5em}
\begin{minipage}{2.3in}
\begin{scriptsize}
    $\cN^\CS(\z_1) = \{ \p\in\Delta_3: p_1\geq\half \}$ \\[1pt]
    $\cN^\CS(\z_2) = \{ \p\in\Delta_3: p_2\geq\half \}$ \\[1pt]
    $\cN^\CS(\z_3) = \{ \p\in\Delta_3: p_3\geq\half \}$ \\[1pt]
    $\cN^\CS(\z_4) = \{ \p\in\Delta_3: \max(p_1,p_2,p_3)\leq\half \}$
    \vspace{4cm}
\end{scriptsize}
\end{minipage}
\vspace{-2.75cm}
\caption{Positive normal sets for the Crammer-Singer surrogate $\bpsi^\CS$ for $n=3$, at 4 points $\z_i = \bpsi^\CS(\u_i) \in \R_+^3$ ($i\in[4]$) for $\u_1 = (1,0,0)^\top$, $\u_2 = (0,1,0)^\top$, $\u_3 = (0,0,1)^\top$, and $\u_4 = (0,0,0)^\top$. Calculations of these sets are based on \Lem{lem:normal-computation} and can be found in \App{app:calculations-pos-normals}.}
\label{fig:pos-normals}
\end{center}
\end{figure}

\section{Conditions for Calibration}
\label{sec:conditions}

In this section we give both necessary conditions (\Sec{subsec:conditions-nec}) and sufficient conditions (\Sec{subsec:conditions-suff}) for a surrogate $\bpsi$ to be calibrated w.r.t.\ an arbitrary target loss matrix $\L$. Both sets of conditions involve the trigger probability sets of $\L$ and the positive normal sets of $\bpsi$; in \Sec{subsec:computation-pos-normals} we give a result that facilitates computation of positive normal sets for certain classes of surrogates $\bpsi$.

\subsection{Necessary Conditions for Calibration} 
\label{subsec:conditions-nec}


\begin{figure}[t]
\begin{center}
\includegraphics[width=0.32\textwidth]{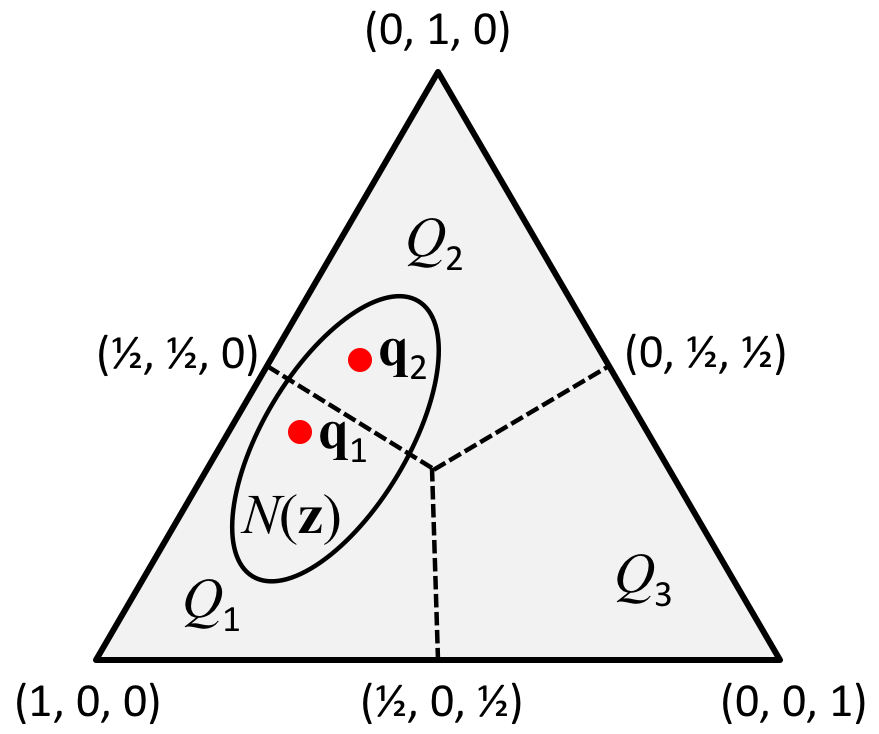}
\caption{Visual proof of \Thm{thm:condition-necessary}. If a surrogate $\bpsi$ is such that its positive normal set $\cN^\psi(\z)$ at some point $\z$ has non-empty intersection with the interiors of two trigger probability sets (say $\Q^\L_{1}$ and $\Q^\L_{2}$) of $\L$, then $\bpsi$ cannot be $\L$-calibrated.}
\label{fig:necess-cond-illus}
\end{center}
\end{figure}

We start by deriving necessary conditions for $\L$-calibration of a surrogate loss $\bpsi$. Consider what happens if for some point $\z\in\S_\psi$, the positive normal set of $\bpsi$ at $\z$, $\cN^\psi(\z)$, has a non-empty intersection with the interiors of two trigger probability sets of $\L$, say $\Q^\L_1$ and $\Q^\L_2$ (see \Fig{fig:necess-cond-illus} for an illustration), which means $\exists \q_1, \q_2 \in \cN^\psi(\z)$ with $\argmin_{t\in[k]} \q_1^\top \bell_t =\{1\}$ and $\argmin_{t\in[k]} \q_2^\top \bell_t =\{2\}$. If $\bpsi$ is $\L$-calibrated, then by \Lem{lem:pred-pred'}, we have $\exists \pred':\S_\psi\>[k]$ such that 
\begin{eqnarray*}
\inf_{\z'\in\S_\psi:\pred'(\z')\neq 1} \q_1^\top \z' 
~ = ~
\inf_{\z'\in\S_\psi:\pred'(\z')\notin\argmin_{t} \q_1^\top \bell_t} \q_1^\top \z' 
	& > & 
	\inf_{\z'\in\S_\psi} \q_1^\top \z' 
	~ = ~ 
	\q_1^\top \z
\\
\inf_{\z'\in\S_\psi:\pred'(\z')\neq 2} \q_2^\top \z' 
~ = ~
\inf_{\z'\in\S_\psi:\pred'(\z')\notin\argmin_{t} \q_2^\top \bell_t} \q_2^\top \z' 
	& > & 
	\inf_{\z'\in\S_\psi} \q_2^\top \z' 
	~ = ~
	\q_2^\top \z
	\,. 
\end{eqnarray*}
The first inequality above implies $\pred'(\z)=1$; the second inequality implies $\pred'(\z)=2$, leading to a contradiction. This gives us the following necessary condition for $\L$-calibration of $\bpsi$, which requires the positive normal sets of $\bpsi$ at all points $\z\in\S_\psi$ to be `well-behaved' w.r.t.\ $\L$ in the sense of being contained within individual trigger probability sets of $\L$ and generalizes the `admissibility' condition used for 0-1 loss by \cite{TewariBa07}:

\begin{thm}
\label{thm:condition-necessary}
Let $\L\in\R_+^{n\times k}$, and let $\bpsi:\C\>\R_+^n$ be $\L$-calibrated. Then for all points $\z \in\S_\psi$, there exists some $t\in[k]$ such that $\cN^\psi(\z)\subseteq\Q^\L_t$.
\end{thm}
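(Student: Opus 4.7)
My plan is a short proof by contradiction that packages the geometric argument sketched just before the statement into a direct consequence of \Lem{lem:pred-pred'}. The idea is that the function $\pred'$ produced by that lemma must assign a single label to each $\z\in\S_\psi$, and this label is forced to witness the containment $\cN^\psi(\z)\subseteq\Q^\L_t$.

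First, I would invoke \Lem{lem:pred-pred'} (applied with $\cP=\Delta_n$, since $\L$-calibration is $(\L,\Delta_n)$-calibration). This yields a function $\pred':\S_\psi\to[k]$ such that for every $\p\in\Delta_n$,
\[
\inf_{\z'\in\S_\psi:\pred'(\z')\notin\argmin_{t} \p^\top \bell_t} \p^\top \z'
\;>\;
\inf_{\z'\in\S_\psi} \p^\top \z'.
\]
Fix an arbitrary $\z\in\S_\psi$ and set $t^\star \= \pred'(\z)$. The claim will be that $\cN^\psi(\z)\subseteq\Q^\L_{t^\star}$, which is exactly the conclusion sought (with $t=t^\star$).

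Next I would verify this containment by contradiction: suppose some $\q\in\cN^\psi(\z)$ satisfies $\q\notin\Q^\L_{t^\star}$, i.e.\ $t^\star\notin\argmin_{t} \q^\top\bell_t$. By Definition~\ref{def:pos-normals}, $\q\in\cN^\psi(\z)$ means $\q^\top\z=\inf_{\z'\in\S_\psi}\q^\top\z'$. Since $\pred'(\z)=t^\star\notin\argmin_{t}\q^\top\bell_t$, the point $\z$ lies in the feasible set of the left-hand infimum in the displayed inequality above (with $\p=\q$), so
\[
\inf_{\z'\in\S_\psi:\pred'(\z')\notin\argmin_{t} \q^\top \bell_t} \q^\top \z'
\;\leq\; \q^\top\z
\;=\; \inf_{\z'\in\S_\psi}\q^\top\z',
\]
contradicting the strict inequality guaranteed by $\L$-calibration. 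Hence every $\q\in\cN^\psi(\z)$ lies in $\Q^\L_{t^\star}$, proving $\cN^\psi(\z)\subseteq\Q^\L_{t^\star}$.

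There is essentially no main obstacle: the heavy lifting has already been done by \Lem{lem:pred-pred'}, which reduces calibration to the cleaner statement in terms of $\S_\psi$ and $\pred'$. The only subtlety to flag is the role of $\pred'$ being defined on $\S_\psi$ rather than just $\cR_\psi$: the contradiction uses $\z\in\S_\psi$ itself as a feasible competitor in the infimum, so it is essential that $\pred'$ can be evaluated at $\z\in\S_\psi=\conv(\cR_\psi)$, not merely at images of $\C$ under $\bpsi$. With the lemma in hand, the argument is a one-line application once $t^\star=\pred'(\z)$ is chosen, and the statement generalizing the admissibility condition of \cite{TewariBa07} falls out immediately.
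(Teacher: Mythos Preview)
Your proof is correct and follows essentially the same approach as the paper: invoke \Lem{lem:pred-pred'} to obtain $\pred'$ on $\S_\psi$, then derive a contradiction from the strict calibration inequality at a point $\q\in\cN^\psi(\z)$. Your formulation is in fact slightly more direct than the paper's discussion, since you set $t^\star=\pred'(\z)$ up front and show $\cN^\psi(\z)\subseteq\Q^\L_{t^\star}$, whereas the paper argues by assuming $\cN^\psi(\z)$ meets the interiors of two distinct trigger probability sets and deduces that $\pred'(\z)$ would have to take two different values.
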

\begin{proof}
See above discussion.
\end{proof}

In fact, we have the following stronger necessary condition, which requires the positive normal sets of $\bpsi$ not only at all points $\z\in\S_\psi$ but also at all sequences $\{\z_m\}$ in $\S_\psi$ to be contained within individual trigger probability sets of $\L$: 

\begin{thm}
\label{thm:condition-necessary-sequence}
Let $\L\in\R_+^{n\times k}$, and let $\bpsi:\C\>\R_+^n$ be $\L$-calibrated. Then for all sequences $\{\z_m\}$ in $\S_\psi$, there exists some $t\in[k]$ such that $\cN^\psi(\{\z_m\})\subseteq\Q^\L_t$.
\end{thm}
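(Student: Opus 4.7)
The plan is to mimic the proof of \Thm{thm:condition-necessary}, but replace the single point $\z$ with a sequence $\{\z_m\}$ and exploit the finiteness of $[k]$ to pass to a subsequence on which the associated predictor is constant. Specifically, I would first invoke \Lem{lem:pred-pred'} to obtain a function $\pred':\S_\psi\>[k]$ witnessing the $\L$-calibration of $\bpsi$. Since the sequence $\{\pred'(\z_m)\}_{m\geq 1}$ takes values in the finite set $[k]$, the pigeonhole principle guarantees the existence of some $t^*\in[k]$ and a subsequence $\{\z_{m_j}\}$ such that $\pred'(\z_{m_j})=t^*$ for every $j$. The claim I would then prove is that $\cN^\psi(\{\z_m\}) \subseteq \Q^\L_{t^*}$.

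To prove this claim, I would argue by contradiction. Fix an arbitrary $\q\in\cN^\psi(\{\z_m\})$, and suppose toward contradiction that $t^*\notin\argmin_{t\in[k]} \q^\top \bell_t$. By the hypothesis on $\pred'$ coming from \Lem{lem:pred-pred'}, there exists $\epsilon>0$ such that
\[
\inf_{\z'\in\S_\psi:\pred'(\z')\notin\argmin_{t}\q^\top \bell_t} \q^\top \z'
~ \geq ~
\inf_{\z'\in\S_\psi} \q^\top \z' ~+~ \epsilon
    \,.
\]
On the other hand, from $\q\in\cN^\psi(\{\z_m\})$ we have $\lim_{m\>\infty} \q^\top \z_m = \inf_{\z'\in\S_\psi} \q^\top \z'$, so the same limit holds along the subsequence $\{\z_{m_j}\}$. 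For all sufficiently large $j$, therefore, $\q^\top \z_{m_j} < \inf_{\z'\in\S_\psi} \q^\top \z' + \epsilon/2$. But each such $\z_{m_j}$ satisfies $\pred'(\z_{m_j})=t^*\notin\argmin_t \q^\top \bell_t$, so $\z_{m_j}$ lies in the set over which the first infimum above is taken, yielding $\inf_{\z':\pred'(\z')\notin\argmin_{t}\q^\top \bell_t} \q^\top \z' \leq \q^\top \z_{m_j} < \inf_{\z'} \q^\top \z' + \epsilon/2$. This directly contradicts the displayed inequality, so $t^*\in\argmin_t \q^\top \bell_t$, i.e.\ $\q\in\Q^\L_{t^*}$.

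I expect the main subtlety to be the bookkeeping around subsequences: the definition of $\cN^\psi(\{\z_m\})$ requires the whole sequence $\q^\top \z_m$ to converge to $\inf_{\z'\in\S_\psi}\q^\top \z'$, and I rely on the fact that convergence along the full sequence passes to any subsequence. The finiteness of $[k]$ is the essential ingredient that lets us reduce from the sequence case to (effectively) the point case of \Thm{thm:condition-necessary}; without it, $\pred'$ could a priori take infinitely many distinct values along the sequence and no single trigger set $\Q^\L_t$ would be forced. Beyond that, the argument is a direct sequence-level adaptation of the one used to establish \Thm{thm:condition-necessary}.
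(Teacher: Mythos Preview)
Your argument is correct, but it is organized differently from the paper's. The paper argues by global contradiction: assuming $\cN^\psi(\{\z_m\})\not\subseteq\Q^\L_t$ for every $t$, it picks for each $t\in[k]$ a witness $\q_t\in\cN^\psi(\{\z_m\})\setminus\Q^\L_t$, and then invokes the sequence-level characterization of calibration (\Lem{lem:CCequiv-sequence}) to conclude that $\pred'(\z_m)\in\bigcap_{t\in[k]}\argmin_{t'}\q_t^\top\bell_{t'}$ for all large $m$; this intersection is empty by construction, giving the contradiction. In contrast, you bypass \Lem{lem:CCequiv-sequence} entirely: you use pigeonhole on the finite range $[k]$ to extract a subsequence on which $\pred'$ is constant, and then work directly from the strict-inequality form of calibration in \Lem{lem:pred-pred'}. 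Your route is slightly more constructive (it exhibits the specific $t^*$) and more self-contained (it needs only \Lem{lem:pred-pred'}), whereas the paper's route packages the ``eventually optimal'' step into an auxiliary lemma and avoids passing to subsequences. Both exploit finiteness of $[k]$ in an essential way---you via pigeonhole, the paper via a finite intersection of cofinite index sets---so the underlying ideas are close cousins.
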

\begin{proof}
Assume for the sake of contradiction that there is some sequence $\{\z_m\}$ in $\S_\psi$ for which $\cN^{\psi}(\{\z_m\})$ is not contained in $\Q^\L_t$ for any $t\in[k]$. Then $\forall t\in[k]$, $\exists \q_t \in\cN^{\psi}(\{\z_m\})$ such that $\q_t\notin\Q^\L_t$, i.e.\ such that $t\notin \argmin_{t'} \q_t^\top \bell_{t'}$. 
Now, since $\bpsi$ is $\L$-calibrated, by Lemma \ref{lem:CCequiv-sequence}, there exists a function $\pred':\S_\psi\>[k]$ such that for all $\p\in \cN^{\psi}(\{\z_m\})$, we have $\pred'(\z_m)\in \arg\min_{t'} \p^\top\bell_{t'}$ for all large enough $m$.
In particular, for $\p=\q_t$, we get $\pred'(\z_m)\in \argmin_{t'} \q_t^\top \bell_{t'}$ ultimately. Since this is true for each $t\in[k]$, we get $\pred'(\z_m)\in \cap_{t\in[k]} \argmin_{t'} \q_t^\top \bell_{t'}$ ultimately. However by choice of $\q_t$, this intersection is empty, thus yielding a contradiction. This completes the proof.  
\end{proof}

Note that \Thm{thm:condition-necessary-sequence} includes \Thm{thm:condition-necessary} as a special case, since $\cN^\psi(\z) = \cN^\psi(\{\z_m\})$ for the constant sequence $\z_m = \z ~\forall m$. We stated \Thm{thm:condition-necessary} separately above since it had a simple, direct proof that helps build intuition. 

\begin{exmp}[Crammer-Singer surrogate is not calibrated for 0-1 loss]
\label{exmp:calibration-CS-0-1}
Looking at the positive normal sets of the Crammer-Singer surrogate $\bpsi^\CS$ (for $n=3$) shown in \Fig{fig:pos-normals} and the trigger probability sets of the 0-1 loss $\L^\zo$ shown in \Fig{fig:trigger-prob}(a), we see that $\cN^\CS(\z_4)$ is not contained in any single trigger probability set of $\L^\zo$, and therefore applying \Thm{thm:condition-necessary}, it is immediately clear that $\bpsi^\CS$ is not $\L^\zo$-calibrated (this was also established by \cite{TewariBa07} and \cite{Zhang04b}). 
\end{exmp}

\subsection{Sufficient Condition for Calibration} 
\label{subsec:conditions-suff}

We now give a sufficient condition for $\L$-calibration of a surrogate loss $\bpsi$ that will be helpful in showing calibration of various surrogates. In particular, we show that for a surrogate loss $\bpsi$ to be $\L$-calibrated, it is sufficient for the above property of positive normal sets of $\bpsi$ being contained in trigger probability sets of $\L$ to hold for only a finite number of points in $\S_\psi$, as long as the corresponding positive normal sets jointly cover $\Delta_n$:

\begin{thm}
\label{thm:condition-sufficient}
Let $\L\in\R_+^{n\times k}$ and $\bpsi:\C\>\R_+^n$. Suppose there exist $r\in\Z_+$ and $\z_1,\ldots,\z_r \in \S_\psi$ such that $\bigcup_{j=1}^r \cN^\psi(\z_j) = \Delta_n$ and for each $j\in[r]$, $\exists t\in[k]$ such that $\cN^\psi(\z_j) \subseteq \Q^\L_t$. Then $\bpsi$ is $\L$-calibrated.
\end{thm}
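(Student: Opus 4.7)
By Lemma~\ref{lem:pred-pred'} it suffices to exhibit a map $\pred':\S_\psi\to[k]$ such that for every $\p\in\Delta_n$,
\[
\inf_{\z\in\S_\psi:\,\pred'(\z)\notin A(\p)}\p^\top\z \;>\; V^*(\p) \;:=\; \inf_{\z'\in\S_\psi}\p^\top\z',
\]
where $A(\p)=\argmin_{t\in[k]}\p^\top\bell_t$. A useful preliminary consequence of the covering hypothesis is $V^*(\p)=\min_{j\in[r]}\p^\top\z_j$, with the minimum attained at some $j$ for which $\p\in\cN^\psi(\z_j)\subseteq\Q^\L_{t_j}$ and hence $t_j\in A(\p)$; so for every $\p$ there is always a ``good'' index in the finite list $[r]$.

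The plan is to extend the obvious choice $\pred'(\z_j)=t_j$ to all of $\S_\psi$ in a way compatible with the positive-normal structure. The key intermediate step is to show that for every $\z\in\S_\psi$ with $\cN^\psi(\z)\neq\emptyset$, there exists $t\in[k]$ with $\cN^\psi(\z)\subseteq\Q^\L_t$; granted this, one can set $\pred'(\z)$ to be any such $t$ (and arbitrary when $\cN^\psi(\z)=\emptyset$, since such $\z$ are never optimal for any $\p$ and never touch the value $V^*(\p)$). The intermediate statement should follow by considering the finite cover of the convex set $\cN^\psi(\z)$ by the closed pieces $\{\cN^\psi(\z)\cap\cN^\psi(\z_j)\}_j$, each of which lies in some $\Q^\L_{t_j}$, and using convexity together with the structure of the trigger-probability sets to isolate a single dominating $t$.

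Verification of the strict inequality proceeds by contradiction. Suppose it fails at some $\p$; then there is a sequence $\{\z_m\}\subseteq\S_\psi$ with $\pred'(\z_m)\notin A(\p)$ and $\p^\top\z_m\to V^*(\p)$, i.e.\ $\p\in\cN^\psi(\{\z_m\})$. Passing to a subsequence on which $\pred'(\z_m)\equiv t^*\notin A(\p)$ is constant, a compactness/truncation argument (restricting attention to the coordinates in the support of $\p$ to deal with unbounded directions) and the continuity of $V^*$ produce a limit point $\z^*\in\S_\psi$ with $\p\in\cN^\psi(\z^*)$. By the intermediate lemma, $\pred'(\z^*)$ is some $t^{**}$ with $\cN^\psi(\z^*)\subseteq\Q^\L_{t^{**}}$, so $t^{**}\in A(\p)$; an upper-semicontinuity argument on positive normal sets then forces $t^{**}=t^*$ along the sequence, contradicting $t^*\notin A(\p)$.

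The main obstacle is the intermediate lemma asserting that every positive normal set $\cN^\psi(\z)$ in $\S_\psi$ lies within a single trigger set $\Q^\L_t$. A priori, $\cN^\psi(\z)$ could straddle the boundary between two sets $\Q^\L_t$ and $\Q^\L_{t'}$; ruling this out requires jointly invoking the covering hypothesis on $\{\z_j\}$ and the fact that $\cN^\psi(\z)$ is obtained by intersecting halfspaces $\{\p:\p^\top(\z-\z')\le 0\}$ arising from the convex structure of $\S_\psi$. If a direct proof proves delicate, one can instead work through the sequence version of positive normals (Definition~\ref{def:pos-normals}) and use the contrapositive of Theorem~\ref{thm:condition-necessary-sequence}-type reasoning, exhibiting any violation of the intermediate lemma as a sequence whose positive normal set itself straddles two trigger sets and hence is excluded by an extension of the covering hypothesis to such sequences.
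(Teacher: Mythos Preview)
Your approach has a genuine gap at two places. First, the ``intermediate lemma'' you propose---that for every $\z\in\S_\psi$ with $\cN^\psi(\z)\neq\emptyset$ there exists a single $t$ with $\cN^\psi(\z)\subseteq\Q^\L_t$---is not established by the argument you sketch. Knowing that $\cN^\psi(\z)$ is covered by the pieces $\cN^\psi(\z)\cap\cN^\psi(\z_j)$, each lying in some $\Q^\L_{t_j}$, does not by itself yield containment in a single $\Q^\L_t$; a convex set can be written as a union of finitely many convex pieces each inside a different trigger set. (The intermediate lemma is in fact true, but only \emph{a posteriori}: once the theorem is proved, it follows from the necessary condition in Theorem~\ref{thm:condition-necessary}. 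Your sketch does not supply an independent proof.) Second, even granting the intermediate lemma, your contradiction step does not close: your $\pred'$ is defined pointwise and has no continuity properties, so there is no reason ``upper-semicontinuity of positive normal sets'' should force $\pred'(\z_m)$ to agree with $\pred'(\z^*)$ along the sequence; indeed, nearby points can have entirely disjoint positive normal sets. You also need $\S_\psi$ to be closed to get $\z^*\in\S_\psi$, which is not assumed.

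The paper's proof avoids all of this by a completely different mechanism. It first proves (via a Hahn--Banach separation argument) a decomposition lemma: under the covering hypothesis, every $\z\in\S_\psi$ can be written as $\z=\sum_{j=1}^r\alpha_j^\z\z_j+\u^\z$ with $\balpha^\z\in\Delta_r$ and $\u^\z\in\R_+^n$. One then sets $\pred'(\z)$ to be any $t\in T_j$ for some $j$ with $\alpha_j^\z\geq\frac{1}{r}$ (such a $j$ always exists by pigeonhole). The point is that if $\pred'(\z)\notin\argmin_t\p^\top\bell_t$, then the ``large'' coefficient $\alpha_j^\z\geq\frac{1}{r}$ must sit on some $j$ with $\p\notin\cN^\psi(\z_j)$, i.e.\ with $\p^\top\z_j>\inf_{\z'}\p^\top\z'$; since $\p^\top\u^\z\geq 0$, this forces $\p^\top\z$ to exceed the optimum by a uniform margin. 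No limits, no compactness, no intermediate lemma about arbitrary positive normal sets---the finite list $\z_1,\ldots,\z_r$ does all the work through the decomposition.
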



\begin{exmp}[Crammer-Singer surrogate is calibrated for $\L^{\abstain}$ and $\L^{\ord}$ for $n=3$]
\label{exmp:calibration-CS-abstain-ord}
Inspecting the positive normal sets of the Crammer-Singer surrogate $\bpsi^\CS$ (for $n=3)$ in \Fig{fig:pos-normals} and the trigger probability sets of the `abstain' loss matrix $\L^{\abstain}$ in \Fig{fig:trigger-prob}(c), we see that $\cN^\CS(\z_i) =\Q^{\abstain}_i ~\forall i\in[4]$, and therefore by \Thm{thm:condition-sufficient}, the Crammer-Singer surrogate $\bpsi^\CS$ is $\L^{\abstain}$-calibrated. Similarly, looking at the trigger probability sets of the ordinal regression loss matrix $\L^\ord$ in \Fig{fig:trigger-prob}(b) and again applying \Thm{thm:condition-sufficient}, we see that the Crammer-Singer surrogate $\bpsi^\CS$ is also $\L^\ord$-calibrated!
\end{exmp}

Some additional examples of applications of Theorems~\ref{thm:condition-necessary} and \ref{thm:condition-sufficient} are provided in \Sec{subsec:computation-pos-normals} below. Both the necessary and sufficient conditions above will also be used when we study the convex calibration dimension of a loss matrix $\L$ in \Sec{sec:ccdim}.

\subsection{Computation of Positive Normal Sets}
\label{subsec:computation-pos-normals}

Both the necessary and sufficient conditions for calibration above involve the positive normal sets $\cN^\psi(\z)$ at various points $\z\in \S_\psi$. Thus in order to use the above results to show that a surrogate $\bpsi$ is (or is not) $\L$-calibrated, one needs to be able to compute or characterize the sets $\cN^\psi(\z)$. Here we give a method for computing these sets for certain surrogates $\bpsi$ at certain points $\z\in\S_\psi$. Specifically, the following result gives an explicit method for computing $\cN^\psi(\z)$ for convex surrogate losses $\bpsi$ operating on a convex surrogate space $\C\subseteq\R^d$, at points $\z=\bpsi(\u) \in\cR_\psi$ for which the subdifferential $\partial \psi_y(\u)$ for each $y\in[n]$ can be described as the convex hull of a finite number of points in $\R^d$; this is particularly applicable for piecewise linear surrogates.\footnote{Recall that a vector function is convex if all its component functions are convex.}${}^{,}$\footnote{Recall that the subdifferential of a convex function $\phi:\R^d\>\bbar{\R}$ at a point $\u_0\in\R^d$ is defined as
$\partial \phi(\u_0) = \big\{ \w\in\R^d: \phi(\u)-\phi(\u_0) \geq \w^\top(\u-\u_0) ~\forall \u\in\R^d \big\}$
and is a convex set in $\R^d$ (e.g.\ see \cite{Bertsekas+03}).}

\begin{lem}
\label{lem:normal-computation}
Let $\C\subseteq\R^d$ be a convex set and let $\bpsi:\C\>\R_+^n$ be convex.
Let $\z=\bpsi(\u)$ for some $\u\in\C$ such that $\forall y\in[n]$, the subdifferential of $\psi_y$ at $\u$ can be written as 
\[
\partial \psi_y(\u) = \conv\big( \{\w^y_1,\ldots,\w^y_{s_y}\} \big)
\]
for some $s_y \in\Z_+$ and $\w^y_1,\ldots,\w^y_{s_y} \in\R^d$.
Let $s=\sum_{y=1}^n s_y$, and let
\[
\A
    =
    \left[ \w^1_1 \ldots \w^1_{s_1} \w^2_1 \ldots \w^2_{s_2} \ldots \ldots \w^n_1 \ldots \w^n_{s_n} \right] \in \R^{d\times s}
    \,;
\hspace{0.75cm}
\B
    =
    \left[ b_{yj} \right] \in \R^{n\times s}
    \,,
\]
where $b_{yj}$ is $1$ if the $j$-th column of $\A$ came from $\{\w^y_1,\ldots,\w^y_{s_y}\}$ and $0$ otherwise.
Then
\vspace{-2pt}
\[
\cN^\psi(\z)
    =
    \Big\{ \p\in\Delta_n: \p = \B\q ~\mbox{for some $\q \in \Null(\A)\cap\Delta_s$} \Big\}
    \,,
\]
where $\Null(\A)\subseteq\R^s$ denotes the null space of the matrix $\A$.
\end{lem}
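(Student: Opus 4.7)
The plan is to reduce the defining condition of $\cN^\psi(\z)$ to a first-order optimality statement at $\u$, and then unpack that statement using the explicit finite-generator description of the subdifferentials together with the bookkeeping matrices $\A$ and $\B$.

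First, I would observe that since $\p^\top \z'$ is linear in $\z'$ and $\S_\psi = \conv(\cR_\psi)$, the infimum appearing in \Def{def:pos-normals} can be taken over $\cR_\psi$ instead of $\S_\psi$. Hence
\[
\p \in \cN^\psi(\z)
~ \iff ~
\u \in \argmin_{\u' \in \C} \sum_{y=1}^n p_y\, \psi_y(\u')
\,,
\]
where the objective $\phi_\p(\u') \= \sum_y p_y \psi_y(\u')$ is convex in $\u'$ (being a non-negative combination of convex functions). Standard convex optimality then says $\u$ minimizes $\phi_\p$ iff $\0 \in \partial \phi_\p(\u)$, and by the Moreau--Rockafellar sum rule together with nonnegative-scaling of subdifferentials,
\[
\partial \phi_\p(\u)
~ = ~
\sum_{y=1}^n p_y\, \partial \psi_y(\u)
~ = ~
\sum_{y=1}^n p_y \cdot \conv\big( \{\w^y_1,\ldots,\w^y_{s_y}\} \big)
\,.
\]

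Second, I would unpack what $\0 \in \sum_y p_y \conv(\{\w^y_1,\ldots,\w^y_{s_y}\})$ means concretely: it is equivalent to the existence of numbers $q^y_j \geq 0$ (playing the role of $p_y \alpha^y_j$ for convex-combination weights $\alpha^y_j$) such that $\sum_j q^y_j = p_y$ for each $y \in [n]$ and $\sum_{y,j} q^y_j \w^y_j = \0$. Stacking the $q^y_j$ in the same column ordering as $\A$ into a vector $\q \in \R^s$, these three conditions translate exactly to $\q \geq \0$, $\B\q = \p$, and $\A\q = \0$. Since $\p \in \Delta_n$ implies $\sum_{y,j} q^y_j = \sum_y p_y = 1$, combined with $\q \geq \0$ this gives $\q \in \Delta_s$, and so $\q \in \Null(\A) \cap \Delta_s$ with $\p = \B\q$, as claimed.

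The converse direction is just a reversal of the same bookkeeping: given $\q \in \Null(\A) \cap \Delta_s$ with $\p = \B\q$, for each $y$ with $p_y > 0$ set $\alpha^y_j = q^y_j / p_y$ to obtain a convex combination $\g_y = \sum_j \alpha^y_j \w^y_j \in \partial \psi_y(\u)$; for $y$ with $p_y = 0$, the constraint $\sum_j q^y_j = p_y = 0$ together with $q^y_j \geq 0$ forces $q^y_j = 0$ for all $j$, so the corresponding terms drop out harmlessly and any $\g_y \in \partial \psi_y(\u)$ may be picked. Then $\sum_y p_y \g_y = \A \q = \0$, establishing $\0 \in \partial \phi_\p(\u)$ and hence $\p \in \cN^\psi(\z)$.

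The main obstacle I anticipate is the clean application of the sum rule for subdifferentials when $\C \subsetneq \R^d$: in general the identity $\partial(\sum_y p_y \psi_y)(\u) = \sum_y p_y \partial \psi_y(\u)$ requires a relative-interior/Slater-type condition. Since each $\psi_y$ is convex and finite on the common convex domain $\C$, and the subdifferentials at $\u$ are assumed to be (nonempty) finitely generated polytopes, this condition is met in the regime of interest (e.g.\ $\u$ in the relative interior of $\C$, or $\C = \R^d$ as in the Crammer--Singer example); modulo this standard point, the rest of the argument is just the dictionary translation between convex-combination representations in $\R^d$ and elements of $\Null(\A) \cap \Delta_s$ via the matrix $\B$.
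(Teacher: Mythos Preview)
Your proposal is correct and follows essentially the same route as the paper's proof: reduce $\cN^\psi(\z)$ to first-order optimality of $\phi_\p(\u') = \sum_y p_y \psi_y(\u')$ at $\u$, apply the subdifferential sum rule to get $\0 \in \sum_y p_y\,\partial\psi_y(\u)$, then unpack the finite convex-hull generators into the $\A$, $\B$, $\q$ bookkeeping. If anything, you are slightly more careful than the paper, which presents the argument as a single chain of equivalences without isolating the $p_y = 0$ case or flagging the constraint-qualification caveat for the sum rule on a proper subset $\C \subsetneq \R^d$.
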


The proof makes use of the fact that a convex function $\phi:\R^d\>\R$ attains its minimum at $\u_0\in\R^d$ iff the subdifferential $\partial \phi(\u_0)$ contains $\0\in\R^d$ (e.g.\ see \cite{Bertsekas+03}). We will also make use of the fact that if $\phi_1,\phi_2:\R^d\>\R$ are convex functions, then the subdifferential of their sum $\phi_1+\phi_2$ at $\u_0$ is is equal to the Minkowski sum of the subdifferentials of $\phi_1$ and $\phi_2$ at $\u_0$:
\[
\partial(\phi_1+\phi_2)(\u_0)
    ~ = ~
    \big\{ \w_1 + \w_2: \w_1\in\partial\phi_1(\u_0), \w_2\in\partial\phi_2(\u_0) \big\}
    \,.
\]

\begin{proof} (Proof of \Lem{lem:normal-computation}) \\[6pt]
We have for all $\p\in\R^n$,
\begin{eqnarray*}
\p\in\cN^\psi(\bpsi(\u))
    & \Longleftrightarrow &
    \p\in\Delta_n, ~ \p^\top \bpsi(\u) \leq \p^\top \z' ~\forall \z'\in\S_\psi
\\
    & \Longleftrightarrow &
    \p\in\Delta_n, ~ \p^\top \bpsi(\u) \leq \p^\top \z' ~\forall \z'\in\cR_\psi
\\
    & \Longleftrightarrow &
    \p\in\Delta_n, ~ \mbox{and the convex function $\phi(\u') = \p^\top \bpsi(\u') = \sum_{y=1}^n p_y \psi_y(\u')$}
\\
    & &
    \mbox{achieves its minimum at $\u' = \u$}
\\
    & \Longleftrightarrow &
    \p\in\Delta_n, ~ \0 \in \sum_{y=1}^n p_y \partial\psi_y(\u)
\\
    & \Longleftrightarrow &
    \p\in\Delta_n, ~ \0 = \sum_{y=1}^n p_y \sum_{j=1}^{s_y} v^y_j \w^y_j ~ \mbox{for some $\v^y \in \Delta_{s_y}$}
\\
    & \Longleftrightarrow &
    \p\in\Delta_n, ~ \0 = \sum_{y=1}^n \sum_{j=1}^{s_y} q^y_j \w^y_j ~ \mbox{for some $\q^y = p_y\v^y$, ~ $\v^y \in \Delta_{s_y}$}
\\
    & \Longleftrightarrow &
    \p\in\Delta_n, \A\q = \0 ~ \mbox{for some $\q=(p_1\v^1,\ldots,p_n\v^n)^\top\in\Delta_s$, ~ $\v^y\in\Delta_{s_y}$}
\\
    & \Longleftrightarrow &
    \p = \B\q ~ \mbox{for some $\q\in\Null(\A)\cap\Delta_s$}
    \,.
\end{eqnarray*}
\end{proof}

We now give examples of computation of positive normal sets using \Lem{lem:normal-computation} for two convex surrogates, both of which operate on the one-dimensional surrogate space $\C=\R$, and as we shall see, turn out to be calibrated w.r.t.\ the ordinal regression loss $\L^\ord$ but not w.r.t.\ the 0-1 loss $\L^\zo$ or the `abstain' loss $\L^\abstain$. As another example of an application of \Lem{lem:normal-computation}, calculations showing computation of positive normal sets of the Crammer-Singer surrogate (as shown in \Fig{fig:pos-normals}) are given in the appendix.

\begin{exmp}[Positive normal sets of `absolute' surrogate]
\label{exmp:pos-normals-abs-surrogate}
Let $n=3$, and let $\C=\R$. Consider the `absolute' surrogate $\bpsi^\abs:\R\>\R_+^3$ defined as follows:
\begin{equation}
\psi^\abs_y(u)
    =
    |u-y| \qquad   \forall y\in[3], \, u\in\R
    \,.
\end{equation}
Clearly, $\bpsi^\abs$ is a convex function (see \Fig{fig:abs-surrogate}).
Moreover, we have 
\[
\cR_\abs ~ = ~ \bpsi^\abs(\R) ~ = ~ \big\{ ( |u-1|, |u-2|, |u-3| )^\top: u \in \R \big\} \subset \R_+^3
	\,.
\]
Now let $u_1 =1$, $u_2 = 2$, and $u_3 = 3$, and let
\begin{eqnarray*}
\z_1 
	& = &
	\bpsi^\abs(u_1) ~ = ~ \bpsi^\abs(1) ~  = ~ (0,1,2)^\top \in \cR_\abs
\\
\z_2 
	& = &
	\bpsi^\abs(u_2) ~ = ~ \bpsi^\abs(2) ~  = ~  (1,0,1)^\top \in \cR_\abs
\\
\z_3 
	& = & 
	\bpsi^\abs(u_3) ~ = ~ \bpsi^\abs(3) ~  = ~  (2,1,0)^\top \in \cR_\abs
	\,.
\end{eqnarray*}
Let us consider computing the positive normal sets of $\bpsi^\abs$ at the 3 points $\z_1,\z_2,\z_3$ above. 
To see that $\z_1$ satisfies the conditions of \Lem{lem:normal-computation}, note that 
\begin{eqnarray*}
\partial\psi^\abs_1(u_1) ~ = ~ \partial\psi^\abs_1(1) & = & [-1,1] = \conv(\{+1,-1\}) \,; \\
\partial\psi^\abs_2(u_1)  ~ = ~ \partial\psi^\abs_2(1) & = & \{-1\}  = \conv(\{-1\}) \,; \\
\partial\psi^\abs_3(u_1) ~ = ~ \partial\psi^\abs_3(1) & = & \{-1\}  = \conv(\{-1\}) \,.
\end{eqnarray*}
Therefore, we can use \Lem{lem:normal-computation} to compute $\cN^\abs(\z_1)$. Here $s=4$, and
\[
\A = \left[ \begin{array}{cccc}
        +1 & -1 & -1 & -1
    \end{array} \right]
    \,;
\hspace{0.5cm}
\B = \left[ \begin{array}{cccc}
        1 & 1 & 0 & 0 \\
        0 & 0 & 1 & 0 \\
        0 & 0 & 0 & 1
    \end{array} \right]
    \,.
\]
This gives
\begin{eqnarray*}
\cN^\abs(\z_1)
    & = &
    \big\{ \p\in\Delta_3: \p = (q_1 + q_2, q_3, q_4) ~\mbox{for some $\q\in\Delta_4, ~ q_1 - q_2 - q_3 - q_4 = 0$} \big\}
\\
    & = &
    \big\{ \p\in\Delta_3: \p = (q_1 + q_2, q_3, q_4) ~\mbox{for some $\q\in\Delta_4, ~ q_1 = \textstyle{\half}$} \big\}
\\
    & = &
    \big\{ \p\in\Delta_3: p_1 \geq \textstyle{\half} \big\}
    \,.
\end{eqnarray*}
It is easy to see that $\z_2$ and $\z_3$ also satisfy the conditions of \Lem{lem:normal-computation}; similar computations then yield
\begin{eqnarray*}
\cN^\abs(\z_2) 
	& = &     
	\big\{ \p\in\Delta_3: p_1 \leq \textstyle{\half}, p_3 \leq \textstyle{\half} \big\}
\\
\cN^\abs(\z_3) 
	& = &     
	\big\{ \p\in\Delta_3: p_3 \geq \textstyle{\half} \big\}
	\,.
\end{eqnarray*}
The positive normal sets above are shown in \Fig{fig:abs-surrogate}. Comparing these with the trigger probability sets in \Fig{fig:trigger-prob}, we have by \Thm{thm:condition-sufficient} that $\bpsi^\abs$ is $\L^\ord$-calibrated, and by \Thm{thm:condition-necessary} that $\bpsi^\abs$ is not calibrated w.r.t.\ $\L^\zo$ or $\L^\abstain$.
\end{exmp}

\begin{figure}[t]
\begin{center}
$\underset{\text{\normalsize{(a)}}}{\includegraphics[width=0.35 \textwidth]{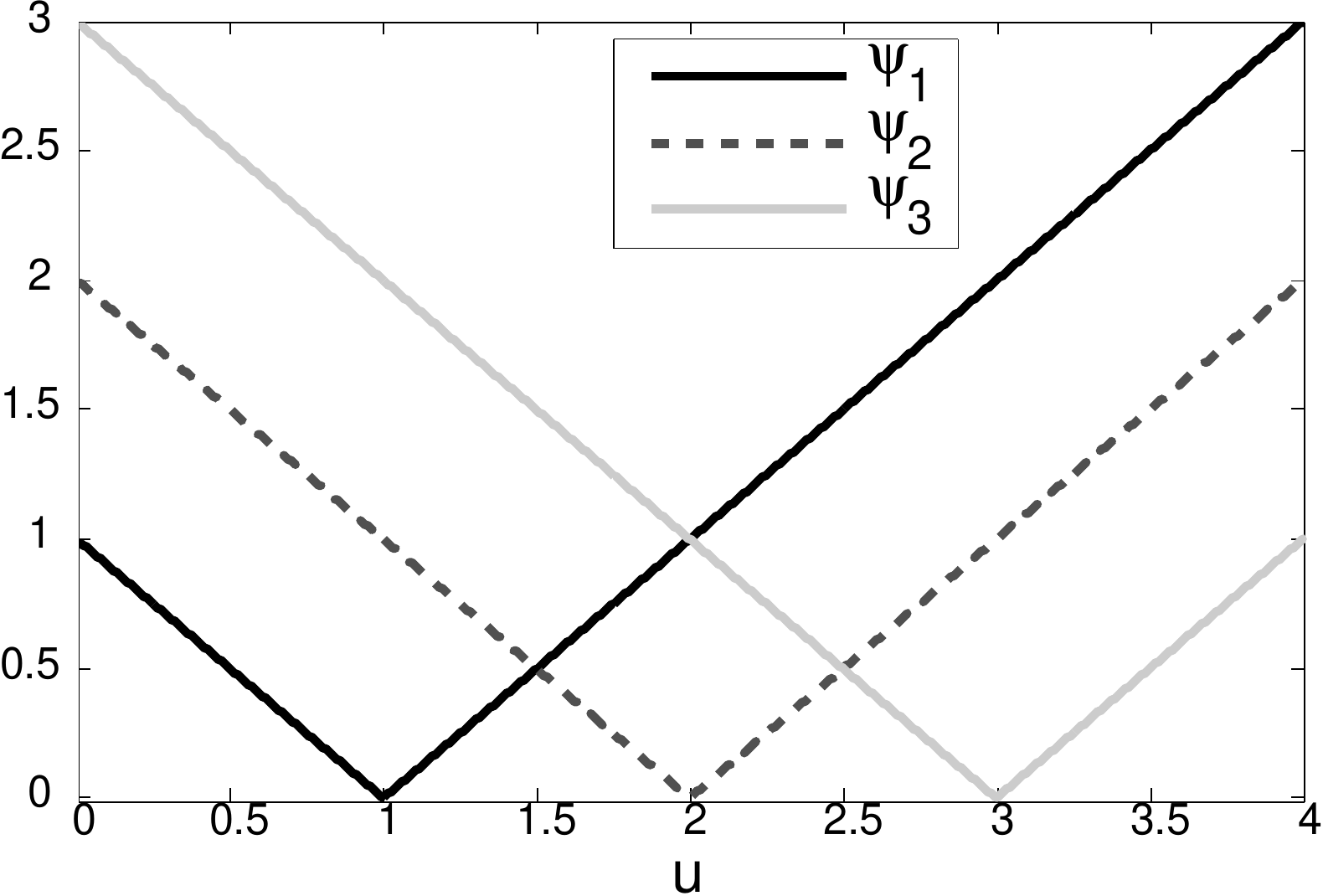}}$
\hspace{1cm}
$\underset{\text{\normalsize{(b)}}}{\includegraphics[width=0.3 \textwidth]{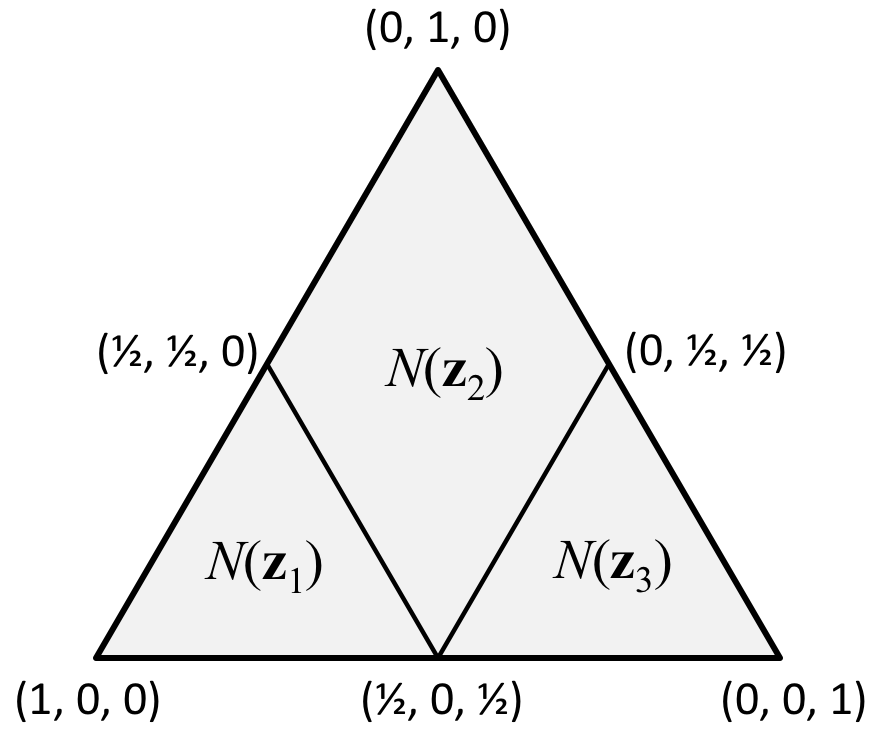}}$
\caption{(a) The `absolute' surrogate $\bpsi^\abs:\R\>\R_+^3$ (for $n=3$), and (b) its positive normal sets at 3 points $\z_i = \bpsi^\abs(u_i)\in\R_+^3$ ($i\in[3]$) for $u_1 = 1, u_2 = 2, u_3 = 3$. See \Exmp{exmp:pos-normals-abs-surrogate} for details.}
\label{fig:abs-surrogate}
\end{center}
\end{figure}

\begin{exmp}[Positive normal sets of `$\epsilon$-insensitive' surrogate]
\label{exmp:pos-normals-eps-surrogate}
Let $n=3$, and let $\C=\R$. Let $\epsilon \in[0,0.5)$, and consider the `$\epsilon$-insensitive' surrogate $\bpsi^\epsilon:\R\>\R_+^3$ defined as follows:
\begin{equation}
\psi^\epsilon_y(u)
    =
    \big( |u-y| - \epsilon \big)_+ \qquad   \forall y\in[3], \, u\in\R
    \,.
\end{equation}
For $\epsilon = 0$, we have $\bpsi^\epsilon = \bpsi^\abs$.
Clearly, $\bpsi^\epsilon$ is a convex function (see \Fig{fig:eps-surrogate}).
Moreover, we have 
\[
\cR_\epsilon 
	~ = ~ \bpsi^\epsilon(\R) 
	~ = ~ \big\{ \big( (|u-1|-\epsilon)_+, (|u-2|-\epsilon)_+, (|u-3|-\epsilon)_+ \big)^\top: u \in \R \big\} \subset \R_+^3
	\,.
\]
For concreteness, we will take $\epsilon = 0.25$ below, but similar computations hold $\forall\epsilon \in (0,0.5)$. 
Let $u_1 =1+\epsilon = 1.25$, $u_2 = 2-\epsilon = 1.75$, $u_3 = 2+\epsilon = 2.25$, and $u_4 = 3-\epsilon = 2.75$, and let
\begin{eqnarray*}
\z_1 
	& = &
	\bpsi^{0.25}(u_1) ~ = ~ \bpsi^{0.25}(1.25) ~  = ~ (0,0.5,1.5)^\top \in \cR_{0.25}
\\
\z_2 
	& = &
	\bpsi^{0.25}(u_2) ~ = ~ \bpsi^{0.25}(1.75) ~  = ~  (0.5,0,1)^\top \in \cR_{0.25}
\\
\z_3 
	& = & 
	\bpsi^{0.25}(u_3) ~ = ~ \bpsi^{0.25}(2.25) ~  = ~  (1,0,0.5)^\top \in \cR_{0.25}
\\
\z_4
	& = & 
	\bpsi^{0.25}(u_4) ~ = ~ \bpsi^{0.25}(2.75) ~  = ~  (1.5,0.5,0)^\top \in \cR_{0.25}
	\,.
\end{eqnarray*}
Let us consider computing the positive normal sets of $\bpsi^{0.25}$ at the 4 points $\z_i$ ($i\in[4]$) above. 
To see that $\z_1$ satisfies the conditions of \Lem{lem:normal-computation}, note that 
\begin{eqnarray*}
\partial\psi^{0.25}_1(u_1) ~ = ~ \partial\psi^{0.25}_1(1.25) & = & [0,1] = \conv(\{0,1\}) \,; \\
\partial\psi^{0.25}_2(u_1)  ~ = ~ \partial\psi^{0.25}_2(1.25) & = & \{-1\}  = \conv(\{-1\}) \,; \\
\partial\psi^{0.25}_3(u_1) ~ = ~ \partial\psi^{0.25}_3(1.25) & = & \{-1\}  = \conv(\{-1\}) \,.
\end{eqnarray*}
Therefore, we can use \Lem{lem:normal-computation} to compute $\cN^{0.25}(\z_1)$. Here $s=4$, and
\[
\A = \left[ \begin{array}{cccc}
        0 & 1 & -1 & -1
    \end{array} \right]
    \,;
\hspace{0.5cm}
\B = \left[ \begin{array}{cccc}
        1 & 1 & 0 & 0 \\
        0 & 0 & 1 & 0 \\
        0 & 0 & 0 & 1
    \end{array} \right]
    \,.
\]
This gives
\begin{eqnarray*}
\cN^{0.25}(\z_1)
    & = &
    \big\{ \p\in\Delta_3: \p = (q_1 + q_2, q_3, q_4) ~\mbox{for some $\q\in\Delta_4, ~ q_2 - q_3 - q_4 = 0$} \big\}
\\
    & = &
    \big\{ \p\in\Delta_3: \p = (q_1 + q_2, q_3, q_4) ~\mbox{for some $\q\in\Delta_4, ~ q_1 + q_2 \geq q_3 + q_4$} \big\}
\\
    & = &
    \big\{ \p\in\Delta_3: p_1 \geq \textstyle{\half} \big\}
    \,.
\end{eqnarray*}
Similarly, to see that $\z_2$ satisfies the conditions of \Lem{lem:normal-computation}, note that 
\begin{eqnarray*}
\partial\psi^{0.25}_1(u_2) ~ = ~ \partial\psi^{0.25}_1(1.75) & = & \{1\} = \conv(\{1\}) \,; \\
\partial\psi^{0.25}_2(u_2)  ~ = ~ \partial\psi^{0.25}_2(1.75) & = & [-1,0]  = \conv(\{-1,0\}) \,; \\
\partial\psi^{0.25}_3(u_2) ~ = ~ \partial\psi^{0.25}_3(1.75) & = & \{-1\}  = \conv(\{-1\}) \,.
\end{eqnarray*}
Again, we can use \Lem{lem:normal-computation} to compute $\cN^{0.25}(\z_2)$; here $s=4$, and
\[
\A = \left[ \begin{array}{cccc}
        1 & -1 & 0 & -1
    \end{array} \right]
    \,;
\hspace{0.5cm}
\B = \left[ \begin{array}{cccc}
        1 & 0 & 0 & 0 \\
        0 & 1 & 1 & 0 \\
        0 & 0 & 0 & 1
    \end{array} \right]
    \,.
\]
This gives
\begin{eqnarray*}
\cN^{0.25}(\z_2)
    & = &
    \big\{ \p\in\Delta_3: \p = (q_1, q_2 + q_3, q_4) ~\mbox{for some $\q\in\Delta_4, ~ q_1 - q_2 - q_4 = 0$} \big\}
\\
    & = &
    \big\{ \p\in\Delta_3: p_1 \geq p_3, ~ p_1 \leq \textstyle{\half} \big\}
    \,.
\end{eqnarray*}
It is easy to see that $\z_3$ and $\z_4$ also satisfy the conditions of \Lem{lem:normal-computation}; similar computations then yield
\begin{eqnarray*}
\cN^{0.25}(\z_3) 
	& = &     
	\big\{ \p\in\Delta_3: p_1 \leq p_3, ~ p_3 \leq \textstyle{\half} \big\}
\\
\cN^{0.25}(\z_4) 
	& = &     
	\big\{ \p\in\Delta_3: p_3 \geq \textstyle{\half} \big\}
	\,.
\end{eqnarray*}
The positive normal sets above are shown in \Fig{fig:eps-surrogate}. Comparing these with the trigger probability sets in \Fig{fig:trigger-prob}, we have by \Thm{thm:condition-sufficient} that $\bpsi^{0.25}$ is $\L^\ord$-calibrated, and by \Thm{thm:condition-necessary} that $\bpsi^{0.25}$ is not calibrated w.r.t.\ $\L^\zo$ or $\L^\abstain$.
\end{exmp}

\begin{figure}[t]
\begin{center}
$\underset{\text{\normalsize{(a)}}}{\includegraphics[width=0.35 \textwidth]{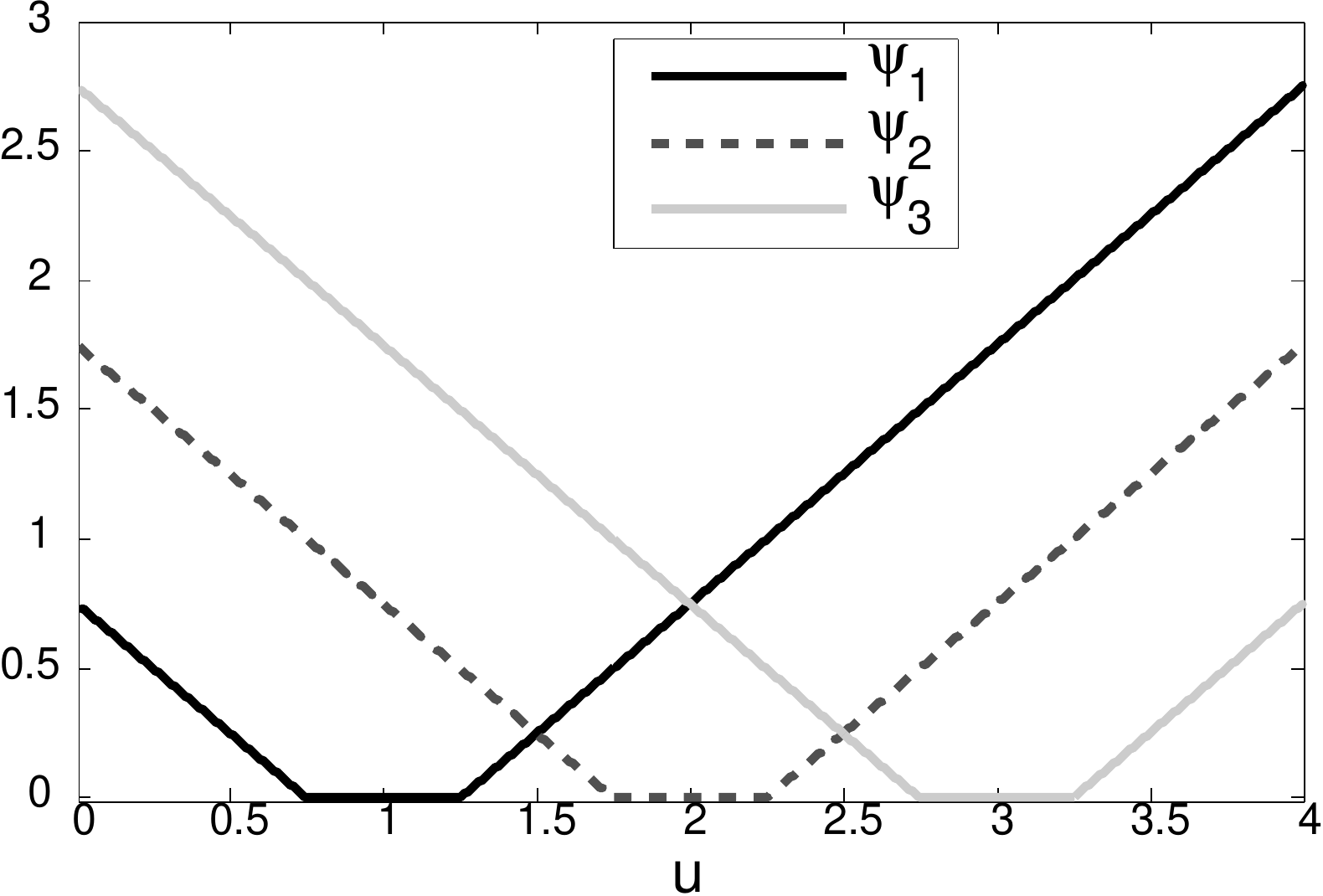}}$
\hspace{1cm}
$\underset{\text{\normalsize{(b)}}}{\includegraphics[width=0.3 \textwidth]{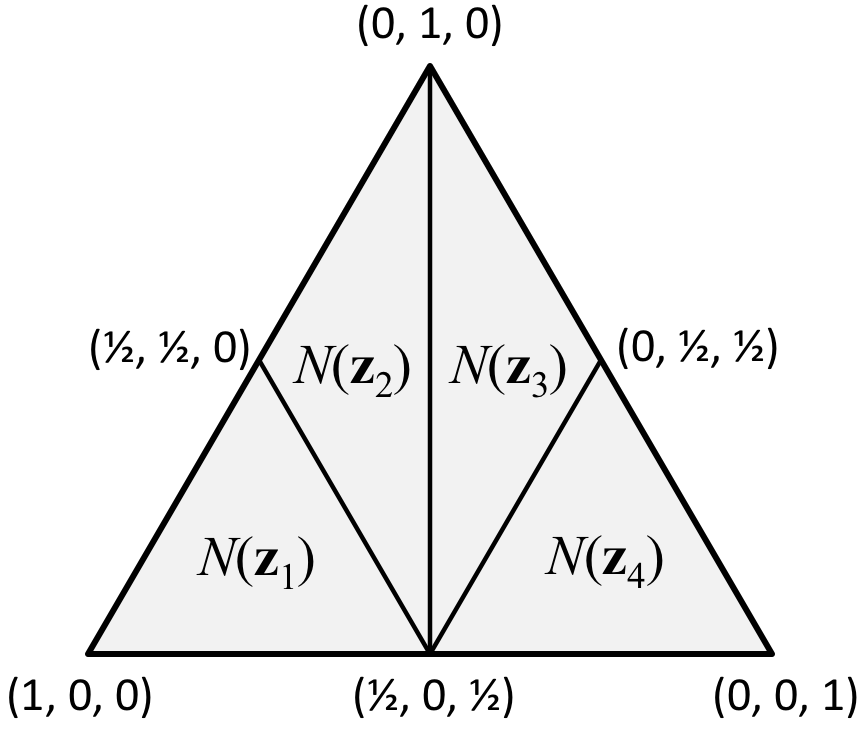}}$
\caption{(a) The `$\epsilon$-insensitive' surrogate $\bpsi^\epsilon:\R\>\R_+^3$ for $\epsilon = 0.25$ (and $n=3$), and (b) its positive normal sets at 4 points $\z_i = \bpsi^{\epsilon}(u_i)\in\R_+^3$ ($i\in[4]$) for $u_1 = 1.25, u_2 = 1.75, u_3 = 2.25, u_4 = 2.75$. See \Exmp{exmp:pos-normals-eps-surrogate} for details.}
\label{fig:eps-surrogate}
\end{center}
\end{figure}

\section{Convex Calibration Dimension}
\label{sec:ccdim}

We now turn to the study of a fundamental quantity associated with the property of $\L$-calibration. Specifically, in Examples~6 and 7 above, we saw that to develop a surrogate calibrated w.r.t.\ to the ordinal regression loss $\L^\ord$ for $n=3$, it was sufficient to consider a surrogate prediction space $\C=\R$, with dimension $d=1$; in addition, the surrogates we considered were convex, and can therefore be used in developing computationally efficient algorithms. In fact the same surrogate prediction space with $d=1$ can be used to develop similar convex surrogate losses calibrated w.r.t.\ the $\L^\ord$ for \emph{any} $n\in\Z_+$. However not all multiclass loss matrices $\L$ have such `low-dimensional' convex surrogates. This raises the natural question of what is the smallest dimension $d$ that supports a convex $\L$-calibrated surrogate for a given multiclass loss $\L$, and leads us to the following definition:

\vspace{4pt}
\begin{defn}[Convex calibration dimension]
Let $\L\in\R_+^{n \times k}$. Define the \emph{convex calibration dimension (CC dimension)} of $\L$ as
\begin{eqnarray*}
\CCdim(\L)
    & \= &
    \min\big\{
        d\in\Z_+: \exists~ \textrm{a convex set $\C\subseteq\R^d$ and a convex surrogate $\bpsi:\C\>\R_+^n$}
    \big.
\\
    & &
    \big.
        ~~~~~~~~~~~\textrm{that is \ $\L$-calibrated}
    \big\}
    \,,
\end{eqnarray*}
if the above set is non-empty, and $\CCdim(\L)=\infty$ otherwise.
\end{defn}

The CC-dimension of a loss matrix $\L$ provides an important measure of the `complexity' of designing convex calibrated surrogates for 
$\L$. Indeed, while the computational complexity of minimizing a surrogate loss, as well as that of converting surrogate predictions into target predictions, can depend on factors other than the dimension $d$ of the surrogate space $\C\subseteq\R^d$, in the absence of other guiding factors, one would in general prefer to use a surrogate in a lower dimension $d$ since this involves learning a smaller number of real-valued functions.

From the above discussion, $\CCdim(\L^\ord) = 1$ for all $n$. In the following, we will be interested in developing an understanding of the CC dimension for general loss matrices $\L$, and in particular in deriving upper and lower bounds on this quantity.

\subsection{Upper Bounds on the Convex Calibration Dimension}
\label{subsec:ccdim-upper-bounds}

We start with a simple result that establishes that the CC dimension of any multiclass loss matrix $\L$ is finite, and in fact is strictly smaller than the number of class labels $n$.
\begin{lem}
\label{lem:surrogate-universal}
Let $\L\in\R_+^{n\times k}$. Let $\C = \big\{ \u\in\R_+^{n-1}: \sum_{j=1}^{n-1} u_j \leq 1 \big\}$, and for each $y\in[n]$, let $\psi_y:\C\>\R_+$ be given by
\[
\psi_y(\u)
    ~ = ~
    \1(y\neq n)\,(u_y - 1)^2 + \sum_{j\in[n-1], j\neq y} u_j{}^2  
    \,.
\]
Then $\bpsi$ is $\L$-calibrated. In particular, since $\bpsi$ is convex, $\CCdim(\L) \leq n-1$.
\end{lem}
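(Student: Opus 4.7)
The plan is to interpret this surrogate as a squared-loss regression on a ``one-hot-ish'' label embedding and then use strict convexity together with a continuity argument to verify calibration.

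First, I would observe that if we let $\e_y\in\R^{n-1}$ denote the $y$-th standard basis vector for $y\in[n-1]$ and set $\e_n=\0\in\R^{n-1}$, then a direct expansion shows $\psi_y(\u)=\|\u-\e_y\|_2^{\,2}$ for every $y\in[n]$. In particular each $\psi_y$ is convex, so $\bpsi$ is a convex surrogate. Summing with weights $p_y$ and completing the square then gives, for any $\p\in\Delta_n$,
\[
\p^\top\bpsi(\u) \;=\; \sum_{y=1}^n p_y\,\|\u-\e_y\|_2^{\,2} \;=\; \|\u-\u^*(\p)\|_2^{\,2} \;+\; C(\p),
\]
where $\u^*(\p)\=(p_1,\ldots,p_{n-1})^\top$ and $C(\p)$ depends only on $\p$. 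Since $\u^*(\p)\in\C$ (the coordinates are nonnegative and sum to $1-p_n\le 1$) and $\p^\top\bpsi(\cdot)$ is strictly convex on $\R^{n-1}$, the infimum over $\C$ is attained uniquely at $\u^*(\p)$.

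Next I would exhibit a predictor. Define $\q:\C\to\Delta_n$ by $\q(\u) \= (u_1,\ldots,u_{n-1},\,1-\sum_{j=1}^{n-1}u_j)^\top$, and let $\pred(\u)\in\argmin_{t\in[k]}\q(\u)^\top\bell_t$, breaking ties by, say, the smallest index. Since $\q(\u^*(\p))=\p$, we have $\pred(\u^*(\p))\in\argmin_t\p^\top\bell_t$, so the unique minimizer of $\p^\top\bpsi$ already lies in the ``good'' set. To complete calibration, I would need to show that the infimum of $\p^\top\bpsi$ over the complementary set $B=\{\u\in\C:\pred(\u)\notin\argmin_t\p^\top\bell_t\}$ is \emph{strictly} greater than $C(\p)=\p^\top\bpsi(\u^*(\p))$.

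The main (and only real) obstacle is that $B$ need not be closed. I would handle this by showing $B$ is bounded away from $\u^*(\p)$ in $\C$: if $\u_m\in B$ with $\u_m\to\u^*(\p)$, then $\q(\u_m)\to\p$ by continuity, so $\q(\u_m)^\top\bell_t\to\p^\top\bell_t$ for every $t\in[k]$; hence for all sufficiently large $m$ the minimum of $\q(\u_m)^\top\bell_t$ over $t$ is attained only by indices $t$ lying in $\argmin_t\p^\top\bell_t$ (since the non-minimizing values remain strictly above the minimum), contradicting $\u_m\in B$. Thus there is some $\delta>0$ with $\|\u-\u^*(\p)\|_2\ge\delta$ for all $\u\in B$, which via the identity displayed above yields $\inf_{\u\in B}\p^\top\bpsi(\u)\ge \delta^2+C(\p)>C(\p)$. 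This is the required strict inequality, so $\bpsi$ is $(\L,\Delta_n)$-calibrated, and since $\C\subseteq\R^{n-1}$ we conclude $\CCdim(\L)\le n-1$.
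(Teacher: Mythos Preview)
Your proof is correct and follows essentially the same approach as the paper: define the predictor via the ``decode $\u$ into a probability vector and minimize expected loss'' map, identify the unique minimizer $\u^*(\p)=(p_1,\ldots,p_{n-1})^\top$, and use a continuity argument to show the bad set is bounded away from $\u^*(\p)$, then conclude via strict convexity. Your explicit identity $\p^\top\bpsi(\u)=\|\u-\u^*(\p)\|_2^{\,2}+C(\p)$ is a cleaner presentation than the paper's, which invokes strict convexity more abstractly, but the argument is structurally the same.
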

It may appear surprising that the convex surrogate $\bpsi$ in the above lemma,
operating on a surrogate space $\C\subset\R^{n-1}$, 
is $\L$-calibrated for \emph{all} multiclass losses $\L$ on $n$ classes. 
However this makes intuitive sense, since in principle, for any multiclass problem, if one can estimate the conditional probabilities of the $n$ classes accurately (which requires estimating $n-1$ real-valued functions on $\X$), then one can predict a target label that minimizes the expected loss according to these probabilities. Minimizing the above surrogate effectively corresponds to such class probability estimation. Indeed, the above lemma can be shown to hold for any surrogate that is a strictly proper composite multiclass loss \citep{Vernet+11}.

In practice, when the number of class labels $n$ is large (such as in a sequence labeling task, where $n$ is exponential in the length of the input sequence), the above result is not very helpful; in such cases, it is of interest to develop algorithms operating on a surrogate prediction space in a lower-dimensional space. Next we give a different upper bound on the CC dimension that depends on the loss $\L$, and for certain losses, can be significantly tighter than the general bound above.

\vspace{4pt}
\begin{thm}
\label{thm:ccdim-upper-bound}
Let $\L\in\R_+^{n\times k}$. Then $\CCdim(\L) \leq \affdim(\L)$.
\end{thm}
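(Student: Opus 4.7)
Let $d = \affdim(\L)$. The plan is to exhibit an explicit convex surrogate on $\C = \R^d$ together with a prediction map $\pred$, built from the affine decomposition of the columns of $\L$ and a squared-loss ``class-probability-estimation'' style construction, and then to verify the calibration condition of \Def{def:calibration} directly.

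\textbf{Setup.} By definition of $\affdim$, we can choose $\A \in \R^{n \times d}$, $\b \in \R^n$, and $\v_1, \ldots, \v_k \in \R^d$ such that
\[
\bell_t \ = \ \A \v_t + \b \qquad \forall t \in [k].
\]
Let $\a_1, \ldots, \a_n \in \R^d$ denote the rows of $\A$, viewed as column vectors. Take $\C = \R^d$ and define the candidate surrogate $\bpsi: \R^d \to \R_+^n$ and prediction map $\pred: \R^d \to [k]$ by
\[
\psi_y(\u) \ = \ \|\u - \a_y\|^2, \qquad \pred(\u) \in \argmin_{t \in [k]} \u^\top \v_t,
\]
with an arbitrary fixed tie-breaking rule for $\pred$. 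Each $\psi_y$ is convex and non-negative, so $\bpsi$ is a convex surrogate in $d$ dimensions.

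\textbf{Calibration argument.} Fix $\p \in \Delta_n$ and set $T^*(\p) = \argmin_{t \in [k]} \p^\top \bell_t$. The conditional risk $f_\p(\u) = \p^\top \bpsi(\u) = \sum_y p_y \|\u - \a_y\|^2$ is strictly convex, continuous, and coercive, with unique minimizer $\u^*(\p) = \A^\top \p$. At this minimizer,
\[
(\u^*(\p))^\top \v_t \ = \ \p^\top (\A \v_t) \ = \ \p^\top \bell_t - \p^\top \b,
\]
so $\argmin_t (\u^*(\p))^\top \v_t = T^*(\p)$. Continuity of the $k$ linear functions $\u \mapsto \u^\top \v_t$, combined with the strict gap enjoyed by the non-minimizing indices at $\u^*(\p)$, yields an open neighborhood $U$ of $\u^*(\p)$ on which $\argmin_t \u^\top \v_t \subseteq T^*(\p)$, and hence $\pred(\u) \in T^*(\p)$ on $U$. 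Consequently the ``bad'' set $F_\p = \{\u \in \R^d : \pred(\u) \notin T^*(\p)\}$ is contained in the closed set $U^c$. Strict convexity and coercivity of $f_\p$ imply that its infimum over the closed set $U^c$ is attained at some $\u_0 \neq \u^*(\p)$, giving $f_\p(\u_0) > f_\p(\u^*(\p))$, hence $\inf_{F_\p} f_\p \geq f_\p(\u_0) > \inf_\u f_\p(\u)$. This is exactly the $\L$-calibration condition of \Def{def:calibration}, so $\CCdim(\L) \leq d = \affdim(\L)$.

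\textbf{Main obstacle.} The principal subtlety is that $\pred$ is in general discontinuous, so $F_\p$ need not be closed; the strict inequality therefore cannot be read off directly from uniqueness of the surrogate minimizer. The key step that resolves this is sandwiching $F_\p$ inside the closed set $U^c$ obtained from continuity of the $k$ linear functions defining $\pred$, after which coercivity together with strict convexity of the squared-loss conditional risk supplies the required strict gap.
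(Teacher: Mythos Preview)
Your proof is correct, but it takes a genuinely different route from the paper's. The paper constructs a \emph{linear} surrogate on the compact set $\C=\conv(\{\u_1,\ldots,\u_k\})\subset\R^d$ chosen so that $\bpsi(\u_t)=\bell_t$ and hence $\S_\psi=\conv(\{\bell_1,\ldots,\bell_k\})$; it then observes that $\cN^\psi(\bell_t)=\Q^\L_t$ for every $t$ and invokes the sufficient condition \Thm{thm:condition-sufficient}. Your approach instead builds a \emph{quadratic} surrogate $\psi_y(\u)=\|\u-\a_y\|^2$ on all of $\R^d$ (where the $\a_y$ are the rows of $\A$ in the affine factorization $\bell_t=\A\v_t+\b$), exploits that its conditional risk has the unique minimizer $\A^\top\p$, and verifies \Def{def:calibration} directly via a continuity--coercivity--strict-convexity argument, without appealing to \Thm{thm:condition-sufficient} at all. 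Your construction is essentially the ``class-probability-estimation'' surrogate of \Lem{lem:surrogate-universal} transported to the $d$-dimensional compressed space, which makes it more self-contained and yields a smooth surrogate; the paper's construction is more geometric, gives a surrogate whose image is exactly the convex hull of the loss columns, and serves as a clean illustration of the positive-normal/trigger-probability machinery developed in \Sec{sec:conditions}.
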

\begin{proof}
Let $\affdim(\L) = d$. We will construct a convex $\L$-calibrated surrogate loss $\bpsi$ with surrogate prediction space $\C\subseteq\R^d$.

Let $\V\subseteq\R^n$ denote the ($d$-dimensional) subspace parallel to the affine hull of the column vectors of $\L$, and let $\r\in\R^n$ be the corresponding translation vector, so that $\V = \aff(\{\bell_1,\ldots,\bell_k\}) + \r$. Let $\v_1,\ldots,\v_d \in\V$ be $d$ linearly independent vectors in $\V$.
Let $\{\e_1,\ldots,\e_d\}$ denote the standard basis in $\R^d$, and define a linear function $\tilde{\bpsi}:\R^d\>\R^n$ by 
\[
\tilde{\bpsi}(\e_j) = \v_j 
	~~~~ \forall j \in [d] 
	\,.
\]
Then for each $\v\in\V$, there exists a unique vector $\u\in\R^d$ such that $\tilde{\bpsi}(\u)=\v$. In particular, since $\bell_t + \r \in\V ~\forall t\in[k]$, there exist unique vectors $\u_1,\ldots,\u_k\in\R^d$ such that for each $t\in[k]$, $\tilde{\bpsi}(\u_t) = \bell_t + \r$. Let $\C = \conv(\{\u_1,\ldots,\u_k\}) \subseteq\R^d$, and define $\bpsi:\C\>\R_+^n$ as
\[
\bpsi(\u) = \tilde{\bpsi}(\u) - \r
	~~~~ \forall \u \in \C 
	\,.
\]
To see that $\bpsi(\u) \in \R_+^n ~\forall \u\in\C$, note that for any $\u\in\C$, $\exists \balpha\in\Delta_k$ such that $\u = \sum_{t=1}^k \alpha_t \u_t$, which gives $\bpsi(\u) = \tilde{\bpsi}(\u) - \r = \big( \sum_{t=1}^k \alpha_t \tilde{\bpsi}(\u_t) \big) - \r = \big( \sum_{t=1}^k \alpha_t (\bell_t + \r) \big) - \r = \sum_{t=1}^k \alpha_t \bell_t$ (and $\bell_t\in\R_+^n ~\forall t\in[k]$). 
The function $\bpsi$ is clearly convex. To show $\bpsi$ is $\L$-calibrated, we will use \Thm{thm:condition-sufficient}. Specifically, consider the $k$ points $\z_t = \bpsi(\u_t) = \bell_t \in\cR_\psi$ for $t\in[k]$. 
By definition of $\bpsi$, we have $\S_\psi = \conv(\bpsi(\C)) = \conv(\{\bell_1,\ldots,\bell_k\})$; from the definitions of positive normals and trigger probabilities, it then follows that $\cN^\psi(\z_t)=\cN^\psi(\bell_t)=\Q^\L_t$ for all $t\in[k]$. Thus by Theorem \ref{thm:condition-sufficient}, $\bpsi$ is $\L$-calibrated.
\end{proof}

Since $\affdim(\L)$ is equal to either $\rank(\L)$ or $\rank(\L) - 1$, this immediately gives us the following corollary:

\begin{cor}
\label{cor:ccdim-upper-bound}
Let $\L\in\R_+^{n\times k}$. Then $\CCdim(\L) \leq \rank(\L)$.
\end{cor}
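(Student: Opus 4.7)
The plan is to derive this immediately from \Thm{thm:ccdim-upper-bound}, which gives $\CCdim(\L) \leq \affdim(\L)$. So all that remains is to show $\affdim(\L) \leq \rank(\L)$; in fact, as hinted by the author's remark preceding the statement, one can say more, namely that $\affdim(\L)$ equals either $\rank(\L)$ or $\rank(\L) - 1$.

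The key step is an elementary fact from linear algebra: the subspace $\V$ parallel to the affine hull $\aff(\{\bell_1,\ldots,\bell_k\})$ is given by $\V = \span(\{\bell_t - \bell_1 : t \in [k]\})$, which is contained in the column span $\W = \span(\{\bell_1,\ldots,\bell_k\})$. Hence $\affdim(\L) = \dim(\V) \leq \dim(\W) = \rank(\L)$.

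To pin down the dichotomy (not strictly needed for the corollary, but worth recording for use later in the paper), I would split into cases according to whether $\0 \in \aff(\{\bell_1,\ldots,\bell_k\})$. If yes, then the affine hull is itself a linear subspace of $\R^n$ and coincides with $\W$, giving $\affdim(\L) = \rank(\L)$. If no, then $\W = \V + \span(\{\bell_1\})$ with $\bell_1 \notin \V$, so $\dim(\W) = \dim(\V) + 1$, giving $\affdim(\L) = \rank(\L) - 1$. In either case, $\affdim(\L) \leq \rank(\L)$, and combining with \Thm{thm:ccdim-upper-bound} yields the claim.

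There is no real obstacle here; the corollary is a one-line consequence of the preceding theorem together with the standard bound on affine dimension by rank.
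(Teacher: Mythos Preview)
Your proposal is correct and matches the paper's approach exactly: the paper's proof is the one-liner ``Follows immediately from \Thm{thm:ccdim-upper-bound} and the fact that $\affdim(\L) \leq \rank(\L)$.'' Your additional justification of the dichotomy $\affdim(\L) \in \{\rank(\L)-1, \rank(\L)\}$ is fine but not used in the paper's proof of this corollary.
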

\begin{proof}
Follows immediately from \Thm{thm:ccdim-upper-bound} and the fact that $\affdim(\L) \leq \rank(\L)$.
\end{proof}

\begin{exmp}[CC dimension of Hamming loss]
\label{exmp:ccdim-hamming}
Let $n=2^r$ for some $r\in\Z_+$, and consider the Hamming loss $\L^\Ham\in\R_+^{n\times n}$ defined in \Exmp{exmp:loss-hamming}. As in \Exmp{exmp:loss-hamming}, for each $z\in\{0,\ldots,2^r-1\}$, let $z_i\in\{0,1\}$ denote the $i$-th bit in the $r$-bit binary representation of $z$. 
For each $y\in[n]$, define $\bsigma_y\in\{\pm1\}^r$ as 
\[
\sigma_{yi} 
	~ = ~ 2\, (y-1)_i - 1  
	~ = ~ \begin{cases}
	+1 & ~~\mbox{if $(y-1)_i = 1$} \\
	-1 & ~~\mbox{otherwise.}	
\end{cases}
\]
Then we have 
\begin{eqnarray*}
\ell^\Ham_{yt} 
	& = & 
	\sum_{i=1}^r \1\big( (y-1)_i \neq (t-1)_i \big) 
\\
	& = & 
	\sum_{i=1}^r \bigg( \frac{1-\sigma_{yi}\sigma_{ti}}{2} \bigg)
\\	
	& = & 
	\frac{r}{2} - \sum_{i=1}^r \frac{\sigma_{yi}\sigma_{ti}}{2}
	~~~~\forall y,t\in[n]
	\,.
\end{eqnarray*}
Thus $\affdim(\L^\Ham)\leq r$, and therefore by \Thm{thm:ccdim-upper-bound}, we have $\CCdim(\L^{\Ham})\leq r$. This is a significantly tighter upper bound than the bound of $2^r-1$ given by \Lem{lem:surrogate-universal}.
\end{exmp}

\subsection{Lower Bound on the Convex Calibration Dimension}
\label{subsec:ccdim-lower-bound}

In this section we give a lower bound on the CC dimension of a loss matrix $\L$ and illustrate it by using it to calculate the CC dimension of the 0-1 loss. In \Sec{sec:ranking} we will explore applications of the lower bound to obtaining impossibility results on the existence of convex calibrated surrogates in low-dimensional surrogate spaces for certain types of subset ranking losses. We will need the following definition:

\begin{defn}[Feasible subspace dimension]
The \emph{feasible subspace dimension} of a convex set $\Q\subseteq\R^n$ at a point $\p\in\Q$, denoted by $\mu_\Q(\p)$, is defined as the dimension of the subspace $\F_\Q(\p)\cap(-\F_\Q(\p))$, where $\F_\Q(\p)$ is the cone of feasible directions of $\Q$ at $\p$.\footnote{For a set $\Q\subseteq\R^n$ and point $\p\in\Q$, the cone of feasible directions of $\Q$ at $\p$ is defined as \\ $\F_\Q(\p) = \{\v\in\R^n:~ \exists \epsilon_0>0 ~\textrm{such that}~ \p+\epsilon\v \in \Q ~\forall \epsilon\in(0,\epsilon_0)\}$.} 
\end{defn}

In essence, the feasible subspace dimension of a convex set $Q$ at a point $\p\in\Q$ is simply the dimension of the smallest face of $\Q$ containing $\p$; see \Fig{fig:feasible-subspace-dim} for an illustration. 

\begin{figure}[t]
\begin{center}
\begin{subfigure}[b]{0.3\textwidth}
\includegraphics[width=\textwidth]{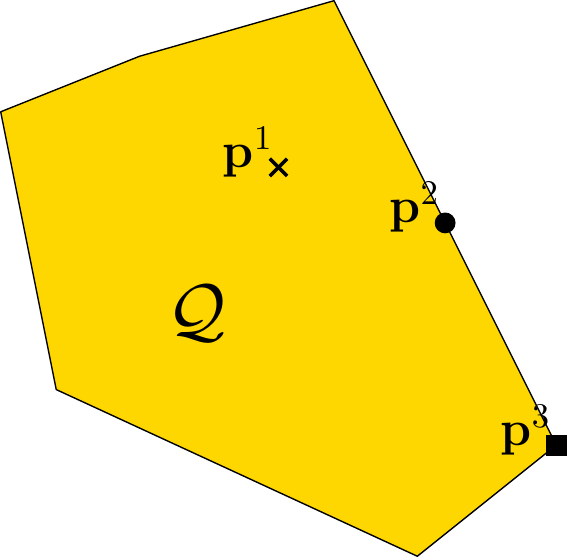}
\caption{Convex set $\Q$ }
\end{subfigure}
\\
\vspace{1em}
\begin{subfigure}[b]{0.31\textwidth}
\includegraphics[width=\textwidth]{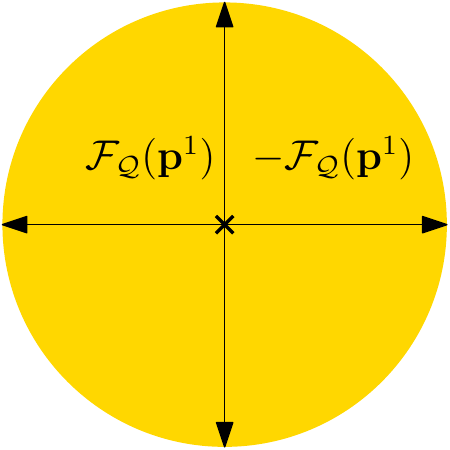}
\caption{\scriptsize $\dim(\F_\Q(\p^1)\cap(-\F_\Q(\p^1)))=2$}
\end{subfigure}
~~
\begin{subfigure}[b]{0.31\textwidth}
\includegraphics[width=\textwidth]{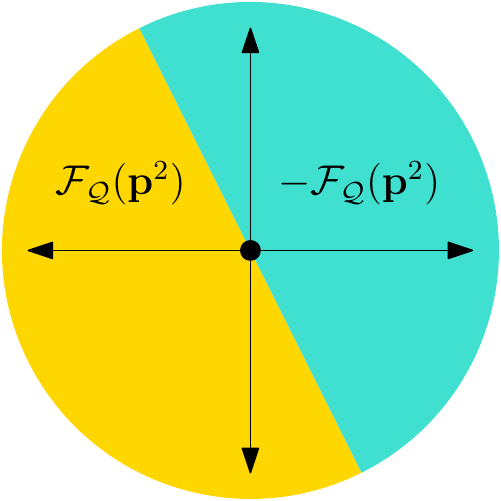}
\caption{\scriptsize $\dim(\F_\Q(\p^2)\cap(-\F_\Q(\p^2)))=1$ }
\end{subfigure}
~~
\begin{subfigure}[b]{0.31\textwidth}
\includegraphics[width=\textwidth]{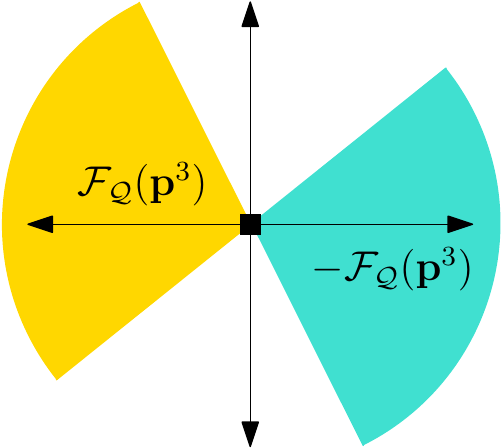}
\caption{\scriptsize $\dim(\F_\Q(\p^3)\cap(-\F_\Q(\p^3)))=0$ }
\end{subfigure}
\end{center}
\caption[Illustration of feasible subspace dimension $\mu_\Q(\p)$]{Illustration of feasible subspace dimension $\mu_\Q(\p)$ of a 2-dimensional convex set $\Q$ at three points $\p=\p^1,\p^2,\p^3$. Here $\mu_\Q(\p^1)=2$, $\mu_\Q(\p^2)=1$, and $\mu_\Q(\p^3)=0$.}
\label{fig:feasible-subspace-dim}
\end{figure}

Both the proof of the lower bound we will provide below and its applications make use of the following lemma, which gives a method to calculate the feasible subspace dimension for certain convex sets $\Q$ and points $\p\in\Q$:

\begin{lem}
\label{lem:feasible-subspace-dimension}
Let $\Q = \big\{ \q\in\R^n: \A^1 \q \leq \b^1, \A^2 \q \leq \b^2, \A^3 \q = \b^3 \big\}$. Let $\p\in\Q$ be such that $\A^1\p=\b^1$, $\A^2\p<\b^2$. Then $\mu_\Q(\p) = \nullity\Big( \Big[ \begin{matrix} \A^1 \\ \A^3 \end{matrix} \Big] \Big)$.
\end{lem}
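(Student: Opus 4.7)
The plan is to directly unpack the definition of $\mu_\Q(\p)$ by computing $\F_\Q(\p)$ explicitly, then intersecting with $-\F_\Q(\p)$ and identifying the result as a null space. The key observation is that the active/inactive structure of the constraints at $\p$ makes the cone of feasible directions very clean.

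First I would characterize $\F_\Q(\p)$. A direction $\v\in\R^n$ lies in $\F_\Q(\p)$ iff $\p+\epsilon\v\in\Q$ for all sufficiently small $\epsilon>0$, which translates into three conditions on $\v$. For the first block, since $\A^1\p=\b^1$ (active), $\A^1(\p+\epsilon\v)\leq\b^1$ for small $\epsilon>0$ forces $\A^1\v\leq\0$. For the second block, since $\A^2\p<\b^2$ strictly, by continuity there is an $\epsilon_0>0$ such that $\A^2(\p+\epsilon\v)\leq\b^2$ for every $\epsilon\in(0,\epsilon_0)$ regardless of $\v$, so this block contributes no constraint. For the equality block, $\A^3(\p+\epsilon\v)=\b^3$ combined with $\A^3\p=\b^3$ forces $\A^3\v=\0$ (this must hold for all small positive $\epsilon$, hence for $\epsilon\downarrow 0$ it gives the equation, and conversely it suffices). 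So
\[
\F_\Q(\p) \;=\; \{\v\in\R^n : \A^1\v\leq\0,\ \A^3\v=\0\}.
\]

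Next I would intersect with $-\F_\Q(\p)=\{\v:\A^1\v\geq\0,\ \A^3\v=\0\}$. The combined conditions $\A^1\v\leq\0$ and $\A^1\v\geq\0$ collapse to $\A^1\v=\0$, and $\A^3\v=\0$ is preserved, yielding
\[
\F_\Q(\p)\cap(-\F_\Q(\p)) \;=\; \Null\!\left(\begin{bmatrix}\A^1\\ \A^3\end{bmatrix}\right).
\]
Taking dimensions gives $\mu_\Q(\p)=\nullity\!\left(\begin{bmatrix}\A^1\\ \A^3\end{bmatrix}\right)$, as claimed.

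I do not expect any serious obstacle here; the only subtle point is justifying that the strict inactivity $\A^2\p<\b^2$ genuinely removes that block from consideration, which follows by picking $\epsilon_0>0$ small enough that $\A^2(\p+\epsilon\v)\leq\A^2\p+\epsilon_0\|\A^2\v\|_\infty\1\leq\b^2$. I would flag this explicitly in the write-up to make clear that the lemma really needs strict inactivity of the second block (otherwise one would inherit additional inequality constraints from $\A^2$ as well).
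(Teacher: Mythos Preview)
Your proposal is correct and follows essentially the same approach as the paper: both arguments exploit the active/inactive split of the constraints at $\p$ to show that $\F_\Q(\p)\cap(-\F_\Q(\p))=\Null\big(\big[\begin{smallmatrix}\A^1\\\A^3\end{smallmatrix}\big]\big)$. The only cosmetic difference is that you first compute $\F_\Q(\p)$ explicitly and then intersect, whereas the paper proves the two inclusions for the intersection directly without isolating $\F_\Q(\p)$ as an intermediate object.
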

\begin{proof} 
We will show that $\F_\Q(\p)\cap(-\F_\Q(\p)) = \Null\Big( \Big[ \begin{matrix} \A^1 \\ \A^3 \end{matrix} \Big] \Big)$, from which the lemma follows.
\\[6pt]
First, let $\v \in \Null\Big( \Big[ \begin{matrix} \A^1 \\ \A^3 \end{matrix} \Big] \Big)$. Then for $\epsilon > 0$, we have
\begin{eqnarray*}
 \A^1 (\p+\epsilon\v) 
 	& = &
	\A^1\p + \epsilon \A^1\v  ~ = ~  \A^1\p + \0 ~ = ~ \b^1
\\
\A^2(\p+\epsilon \v)
	& < &
	\b^2 ~~~ \text{for small enough $\epsilon$,  since $\A^2\p<\b^2$}
\\
\A^3 (\p+\epsilon\v)
	& = &
	\A^3\p +  \epsilon \A^3\v  ~ = ~  \A^3\p + \0 ~ = ~ \b^3
	\,.
\end{eqnarray*}
Thus $\v\in\F_\Q(\p)$. Similarly, we can show $-\v\in\F_\Q(\p)$. Thus $\v\in \F_\Q(\p) \cap (-\F_\Q(\p))$, giving $\Null\Big( \Big[ \begin{matrix} \A^1 \\ \A^3 \end{matrix} \Big] \Big) \subseteq \F_\Q(\p)\cap(-\F_\Q(\p))$.
Now let $\v\in \F_\Q(\p)\cap(-\F_\Q(\p))$. Then for small enough $\epsilon>0$, we have both $\A^1(\p+\epsilon \v)\leq \b^1$ and $\A^1(\p-\epsilon \v)\leq \b^1$. Since $\A^1\p=\b^1$, this gives $\A^1 \v =\0$. Similarly, for small enough $\epsilon>0$, we have $\A^3(\p+\epsilon \v) = \b^3$; since $\A^3\p = \b^3$, this gives $\A^3 \v = \0$. Thus $\Big[ \begin{matrix} \A^1 \\ \A^3 \end{matrix} \Big] \v = \0$, giving $\F_\Q(\p)\cap(-\F_\Q(\p))\subseteq \Null\Big( \Big[ \begin{matrix} \A^1 \\ \A^3 \end{matrix} \Big] \Big)$.
\end{proof}

The following gives a lower bound on the CC dimension of a loss matrix $\L$ in terms of the feasible subspace dimension of the trigger probability sets $\Q^{\L}_t$ at points $\p\in\Q^\L_t$:

\begin{thm}
\label{thm:ccdim-lower-bound}
Let $\L\in\R_+^{n\times k}$. Let $\p\in \Delta_n$ and $t\in\arg\min_{t'} \p^\top \bell_{t'}$ (equivalently, let $\p\in\Q^\L_t$). 
Then 
\[
\CCdim(\L)
    \geq
    \| \p \|_0 - \mu_{\Q^{\L}_t}(\p) - 1
    \,.
\]
\end{thm}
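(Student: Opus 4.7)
Plan: The proof proceeds by the contrapositive. Suppose $\bpsi:\C\to\R_+^n$ is a convex $\L$-calibrated surrogate with $\C\subseteq\R^d$; the goal is to show $d\geq\|\p\|_0-\mu_{\Q^\L_t}(\p)-1$. The strategy couples a subgradient/dimension-count argument---which exhibits a large affine piece of simplex directions through $\p$ lying inside a single positive normal set of $\bpsi$---with the necessary condition of \Thm{thm:condition-necessary-sequence}, which forces that affine piece to sit inside some trigger set $\Q^\L_{t^*}$. To ensure the resulting index $t^*$ coincides with the given $t$, I preface the construction by replacing $\p$ with a perturbation $\tilde\p$ in the relative interior of $\Q^\L_t$, lying on the same face of $\Delta_n$ as $\p$ (so that $\tilde\p$ has $t$ as unique $\L$-minimizer), and transfer the conclusion back to $\p$ via $\tilde\p\to\p$ at the end.

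The heart of the argument is the dimension count. Suppose first that the infimum of $\tilde\p^\top\bpsi$ is attained at some $\u^*\in\C$. By the first-order optimality condition and the subdifferential sum rule, there exist $\w_y\in\partial\psi_y(\u^*)\subseteq\R^d$ with $\sum_y\tilde p_y\w_y=\0$. For any $\v\in\R^n$ satisfying the three linear systems (i) $\sum_y v_y\w_y=\0$ ($d$ scalar equations), (ii) $\sum_y v_y=0$, and (iii) $v_y=0$ whenever $p_y=0$ ($n-\|\p\|_0$ scalar equations, keeping $\tilde\p+\epsilon\v$ on the same face of $\Delta_n$), the same $\w_y$ still certify that $\u^*$ is a minimizer of $(\tilde\p+\epsilon\v)^\top\bpsi$, so $\tilde\p+\epsilon\v\in\cN^\psi(\bpsi(\u^*))$ for all small real $\epsilon$. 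Counting equations shows the solution space $W\subseteq\R^n$ has $\dim W\geq\|\p\|_0-1-d$. Invoking \Thm{thm:condition-necessary-sequence} on the constant sequence $\bpsi(\u^*)$ yields $\cN^\psi(\bpsi(\u^*))\subseteq\Q^\L_{t^*}$ for some $t^*$; since $\tilde\p\in\cN^\psi(\bpsi(\u^*))$ has $t$ as its unique $\L$-minimizer, $t^*=t$. Hence $W\subseteq\F_{\Q^\L_t}(\tilde\p)\cap(-\F_{\Q^\L_t}(\tilde\p))$, and a Grassmannian-compactness limit as $\tilde\p\to\p$ transfers this to $W\subseteq\F_{\Q^\L_t}(\p)\cap(-\F_{\Q^\L_t}(\p))$, yielding $\mu_{\Q^\L_t}(\p)\geq\dim W\geq\|\p\|_0-1-d$, which rearranges to the claim.

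The main obstacle is the case where the infimum of $\tilde\p^\top\bpsi$ is not attained on $\C$: then the certifying subgradients $\w_y$ must be constructed as limits of approximate subgradients along a minimizing sequence $\{\u_m\}\subseteq\C$, and one must ensure these limits remain finite---for example by passing to subsequences and using $\bpsi\geq\0$ to control the coordinates of $\z_m=\bpsi(\u_m)$ indexed by $\{y:\tilde p_y>0\}$ (the coordinates with $\tilde p_y=0$ can escape to $+\infty$, but they do not enter the relevant equations in (i)--(iii)). This is precisely why \Def{def:pos-normals} and \Thm{thm:condition-necessary-sequence} were strengthened to apply to sequences: the object $\cN^\psi(\{\z_m\})$ then plays the role that $\cN^\psi(\bpsi(\u^*))$ did above, and the same dimension count, now applied to the limiting subgradients, produces the subspace $W$ with $\dim W\geq\|\p\|_0-1-d$ and closes the argument.
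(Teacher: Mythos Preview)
Your overall architecture matches the paper's: build a large affine patch through $\p$ inside a single positive normal set via a subgradient dimension count, invoke the sequence necessary condition (\Thm{thm:condition-necessary-sequence}) to place that patch inside some $\Q^\L_{t^*}$, and read off a feasible-subspace bound. You also correctly identify that the non-attained case is precisely why the sequence version of positive normals is needed.

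The genuine gap is the transfer from $\tilde\p$ back to $\p$. You perturb to $\tilde\p$ with $t$ as unique minimizer so that the necessary condition forces $t^*=t$, obtain $W\subseteq\F_{\Q^\L_t}(\tilde\p)\cap(-\F_{\Q^\L_t}(\tilde\p))$, and then assert that a ``Grassmannian-compactness limit as $\tilde\p\to\p$'' gives $W\subseteq\F_{\Q^\L_t}(\p)\cap(-\F_{\Q^\L_t}(\p))$. This fails: the feasible subspace of a polytope is not lower semicontinuous in the base point---it can drop at a boundary point. For $\Q=[0,1]$, $\p=0$, $\tilde\p=\delta\downarrow0$, the feasible subspace at every $\tilde\p$ is all of $\R$ but at $\p$ is $\{0\}$. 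In your setting $W=W(\tilde\p)$ depends on $\tilde\p$ through the subgradients at the minimizer of $\tilde\p^\top\bpsi$, and there is no uniform $\epsilon>0$ for which $\tilde\p+\epsilon W(\tilde\p)\subseteq\Q^\L_t$ as $\tilde\p\to\p$, so the inclusion does not survive the limit. (A preliminary issue: your perturbation $\tilde\p$, required to lie in $\relint(\Q^\L_t)$ and on the same face of $\Delta_n$ as $\p$, need not exist---these constraints can be incompatible.)

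The paper sidesteps this by working at $\p$ directly and accepting that the necessary condition only supplies \emph{some} $t_0$ with $\H\subseteq\Q^\L_{t_0}$. It then proves a short auxiliary result (\Lem{lem:cone-dims-equal}): for $\p\in\relint(\Delta_n)$ and any $t_1,t_2\in\argmin_{t'}\p^\top\bell_{t'}$, one has $\mu_{\Q^\L_{t_1}}(\p)=\mu_{\Q^\L_{t_2}}(\p)$, because the active constraints at $\p$ in the two trigger sets have the same row space. This converts the bound on $\mu_{\Q^\L_{t_0}}(\p)$ into one on $\mu_{\Q^\L_t}(\p)$ without any limiting procedure; the case $\p\notin\relint(\Delta_n)$ is handled by projecting to the supporting face. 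A secondary technical point: in the non-attained case the paper does not take limits of the $\epsilon$-subgradients $\w^m_y$ themselves (which need not remain bounded), but rather of orthonormal bases for the null spaces $\Null(\A^m)\cap\{\e_n\}^\perp$, which live on a compact set.
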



The above lower bound allows us to calculate precisely the CC dimension of the 0-1 loss:

\begin{exmp}[CC dimension of 0-1 loss]
\label{exmp:ccdim-0-1}
Let $n\in\Z_+$, and consider the 0-1 loss $\L^\zo\in\R_+^{n\times n}$ defined in Example~1. Take $\p = (\frac{1}{n},\ldots,\frac{1}{n})^\top \in \Delta_n$. Then $\p\in\Q^\zo_t$ for all $t\in[k]=[n]$ (see \Fig{fig:trigger-prob}); in particular, we have $\p\in\Q^\zo_1$. Now $\Q^\zo_1$ can be written as 
\begin{eqnarray*}
\Q^\zo_1
	& = &
	\big\{ \q\in\Delta_n: q_1 \geq q_y ~\forall y\in\{2,\ldots,n\} \big\}
\\
	& = &
	\big\{ \q\in\R^n: \big[ {-\e_{n-1}} \,\,\, \I_{n-1} \big] \q \leq \0, -\q \leq \0, \e_n^\top \q = 1 \}
	\,, 
\end{eqnarray*}
where $\e_{n-1}$, $\e_n$ denote the $(n-1)\times 1$ and $n\times 1$ all ones vectors, respectively, and $\I_{n-1}$ denotes the $(n-1)\times(n-1)$ identity matrix.
Moreover, we have $\big[ {-\e_{n-1}} \,\,\, \I_{n-1} \big] \p = 0$, $-\p < \0$. Therefore, by \Lem{lem:feasible-subspace-dimension}, we have 
\[
\mu_{\Q^\zo_1}(\p) 
	= 
	\nullity\left( \left[ \begin{matrix}  
		{-\e_{n-1}} \,\,\, \I_{n-1}  \\[2pt] 
		\e_n^\top 
	\end{matrix} \right] \right) 
	= 
	\nullity\left( \left[ \begin{matrix} 
		-1 \,\, 1 \,\, 0 \,\, \ldots \,\, 0 \\
		-1 \,\, 0 \,\, 1 \,\, \ldots \,\, 0 \\
		\vdots \\
		-1 \,\, 0 \,\, 0 \,\, \ldots \,\, 1 \\
		 ~~~ 1 \,\, 1 \,\, 1 \,\, \ldots \,\, 1 
	\end{matrix} \right] \right)
	= 
	0 
	\,.
\]
Moreover, $\|\p\|_0 = n$. Thus by \Thm{thm:ccdim-lower-bound}, we have $\CCdim(\L^\zo) \geq n-1$. Combined with the upper bound of \Lem{lem:surrogate-universal}, this gives $\CCdim(\L^\zo) = n-1$.
\end{exmp}

\subsection{Tightness of Bounds}
\label{subsec:tight}

The upper and lower bounds above are not necessarily tight in general. For example, for the $n$-class ordinal regression loss of Example~2, we know that $\CCdim(\L^\ord) = 1$; however the upper bound of \Thm{thm:ccdim-upper-bound} only gives $\CCdim(\L^\ord) \leq n-1$. Similarly, for the $n$-class abstain loss of Example~4, it can be shown that $\CCdim(\L^\abstain) = O(\ln n)$ (in fact we conjecture it to be $\Theta(\ln n)$) \citep{Ramaswamy+15}, whereas the upper bound of \Thm{thm:ccdim-upper-bound} gives $\CCdim(\L^\abstain) \leq n$, and the lower bound of \Thm{thm:ccdim-lower-bound} yields only $\CCdim(\L^\abstain) \geq 1$. However, as we show below, for certain losses $\L$, the bounds of Theorems \ref{thm:ccdim-upper-bound} and \ref{thm:ccdim-lower-bound} are in fact tight (upto an additive constant of $1$).

\begin{lem}
\label{lem:tight}
Let $\L\in\R_+^{n \times k}$. Let $\p\in\relint(\Delta_n)$ and $c\in\R_+$ be such that $\p^\top \bell_t = c ~\forall t\in[k]$.
Then $\forall t\in[k]$, 
\[
\mu_{\Q^\L_t}(\p) \leq n - \affdim(\L)
	\,.
\]
\end{lem}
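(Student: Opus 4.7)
The plan is to apply the feasible subspace dimension formula from \Lem{lem:feasible-subspace-dimension} after writing $\Q^\L_t$ explicitly as a polyhedron, then identify exactly which constraints are active at $\p$.

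First, I would write the trigger probability set $\Q^\L_t$ as
\[
\Q^\L_t ~=~ \big\{ \q\in\R^n : (\bell_t - \bell_{t'})^\top \q \leq 0 ~\forall t'\in[k], ~ -\q \leq \0, ~ \1^\top\q = 1 \big\}
	\,,
\]
which matches the form required by \Lem{lem:feasible-subspace-dimension} with $\A^1$ the $k\times n$ matrix whose rows are $(\bell_t-\bell_{t'})^\top$ for $t'\in[k]$, $\A^2 = -\I_n$, and $\A^3 = \1^\top$. The hypothesis $\p^\top\bell_{t'} = c$ for all $t'\in[k]$ makes every inequality $(\bell_t-\bell_{t'})^\top \p \leq 0$ tight at $\p$, so $\A^1\p = \0$; the hypothesis $\p\in\relint(\Delta_n)$ gives $-\p < \0$ strictly; and of course $\1^\top\p = 1$. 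Thus \Lem{lem:feasible-subspace-dimension} applies and yields
\[
\mu_{\Q^\L_t}(\p)
	~=~ \nullity\!\left( \begin{bmatrix} \A^1 \\ \A^3 \end{bmatrix} \right)
	~=~ n - \rank\!\left( \begin{bmatrix} \A^1 \\ \A^3 \end{bmatrix} \right)
	\,.
\]

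Next I would relate $\rank(\A^1)$ to $\affdim(\L)$. The row space of $\A^1$ is exactly $\span\{\bell_t - \bell_{t'} : t'\in[k]\}$, which is the subspace parallel to $\aff(\{\bell_1,\ldots,\bell_k\})$; by definition this has dimension $\affdim(\L)$. Hence $\rank(\A^1) = \affdim(\L)$, and appending the extra row $\1^\top$ can only increase the rank, so
\[
\rank\!\left( \begin{bmatrix} \A^1 \\ \A^3 \end{bmatrix} \right)
	~\geq~ \rank(\A^1) ~=~ \affdim(\L)
	\,.
\]
Combining the two displays gives $\mu_{\Q^\L_t}(\p) \leq n - \affdim(\L)$, as desired.

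There is no real obstacle here; the only subtlety is in verifying the preconditions of \Lem{lem:feasible-subspace-dimension} — specifically that $\A^1\p = \b^1$ (this uses the equal-loss hypothesis $\p^\top\bell_{t'}=c$) and that $\A^2\p < \b^2$ (this uses $\p\in\relint(\Delta_n)$, so no coordinate of $\p$ is zero). Both are immediate from the assumptions, which is why the lemma's hypotheses are stated exactly as they are.
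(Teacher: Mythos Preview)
Your proof is correct and is essentially identical to the paper's: both write $\Q^\L_t$ as a polyhedron, observe that the equal-loss hypothesis makes all the $(\bell_t-\bell_{t'})^\top\q\leq 0$ constraints active at $\p$ while $\p\in\relint(\Delta_n)$ makes the nonnegativity constraints strict, apply \Lem{lem:feasible-subspace-dimension}, and then bound the resulting rank below by $\affdim(\L)$. The only cosmetic difference is that the paper carries out the computation for $t=1$ and then remarks that the same argument works for all $t$, whereas you treat general $t$ from the start.
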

\begin{proof}
Since $\p^\top \bell_t = c ~\forall t\in[k]$, we have $\p\in\Q^\L_t ~\forall t\in[k]$. In particular, we have $\p\in\Q^\L_1$.
Now 
\[
\Q^\L_1 
	~ = ~
\left\{ \q\in\R^n: 
	\left[ \begin{matrix}
		(\bell_2-\bell_1)^\top \\
		\vdots \\
		(\bell_k-\bell_1)^\top
	\end{matrix} \right] \q \geq \0, 
	-\q \leq \0, 
	\e_n^\top \q=1 
\right\}
	\,.
\]
Moreover, $\left[ \begin{matrix} (\bell_2-\bell_1)^\top \\ \vdots \\ (\bell_k-\bell_1)^\top \end{matrix} \right] \p = \0$ and $-\p < \0$. Therefore, by \Lem{lem:feasible-subspace-dimension}, we have 
\[
\mu_{\Q^\L_1}(\p) 
	~ = ~ 
	\nullity\left( \left[ \begin{matrix}
		(\bell_2 - \bell_1)^\top \\ 
		\vdots \\
		(\bell_k - \bell_1)^\top \\
		\e_n^\top
	\end{matrix}
	\right] \right)
	~ = ~
	n - \rank\left( \left[ \begin{matrix}
		(\bell_2 - \bell_1)^\top \\ 
		\vdots \\
		(\bell_k - \bell_1)^\top \\
		\e_n^\top
	\end{matrix}
	\right] \right)
	~ \leq ~ 
	n - \affdim(\L)
	\,.
\]
A similar proof holds for $\mu_{\Q^\L_t}(\p)$ for all other $t\in[k]$.
\end{proof}

Combining the above result with \Thm{thm:ccdim-lower-bound} immediately gives the following:

\begin{thm}
\label{thm:tight}
Let $\L\in\R_+^{n \times k}$. If $\exists \p\in\relint(\Delta_n), c\in\R_+$ such that $\p^\top \bell_t = c ~\forall t\in[k]$,
then 
\[
\CCdim(\L) \geq \affdim(\L) - 1
	\,.
\]
\end{thm}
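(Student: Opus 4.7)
The plan is to deduce Theorem~\ref{thm:tight} as an almost immediate consequence of combining Lemma~\ref{lem:tight} (just proved) with the general lower bound Theorem~\ref{thm:ccdim-lower-bound}. The key observation is that the hypothesis gives us a particularly favorable choice of $\p$ to feed into Theorem~\ref{thm:ccdim-lower-bound}, namely one that lies in \emph{every} trigger probability set and has full support on $[n]$.

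Concretely, I would proceed as follows. First, note that since $\p^\top \bell_t = c$ for all $t \in [k]$, the vector $\p$ trivially satisfies $t \in \argmin_{t'\in[k]} \p^\top \bell_{t'}$ for every $t$; hence $\p \in \Q^\L_t$ for each $t \in [k]$, and Theorem~\ref{thm:ccdim-lower-bound} applies. Second, because $\p \in \relint(\Delta_n)$, we have $p_y > 0$ for every $y \in [n]$, so $\|\p\|_0 = n$. Therefore, Theorem~\ref{thm:ccdim-lower-bound} (applied at any fixed $t$, say $t=1$) yields
\[
\CCdim(\L) \;\geq\; \|\p\|_0 - \mu_{\Q^\L_1}(\p) - 1 \;=\; n - \mu_{\Q^\L_1}(\p) - 1.
\]

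Third, plug in the bound $\mu_{\Q^\L_1}(\p) \leq n - \affdim(\L)$ supplied by Lemma~\ref{lem:tight} to conclude
\[
\CCdim(\L) \;\geq\; n - \bigl(n - \affdim(\L)\bigr) - 1 \;=\; \affdim(\L) - 1,
\]
which is the desired inequality.

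There is essentially no obstacle here: the genuine work was already done in the two ingredients (Theorem~\ref{thm:ccdim-lower-bound} and Lemma~\ref{lem:tight}). The only thing worth being careful about is verifying that the hypothesis of Theorem~\ref{thm:ccdim-lower-bound} is met (i.e., that $\p \in \Q^\L_t$ for some $t$, which is automatic from $\p^\top\bell_t = c$ for all $t$) and that the $\relint$ assumption on $\p$ is what ensures $\|\p\|_0$ attains the maximal value $n$, making the lower bound as strong as possible.
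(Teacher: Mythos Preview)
Your proposal is correct and follows exactly the approach of the paper, which simply states that the result ``follows immediately from Theorem~\ref{thm:ccdim-lower-bound} and Lemma~\ref{lem:tight}.'' You have merely spelled out the two-line computation that makes this immediate: using $\|\p\|_0 = n$ (from $\p\in\relint(\Delta_n)$) together with the bound $\mu_{\Q^\L_t}(\p)\leq n-\affdim(\L)$ from Lemma~\ref{lem:tight} inside the lower bound of Theorem~\ref{thm:ccdim-lower-bound}.
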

\begin{proof}
Follows immediately from \Thm{thm:ccdim-lower-bound} and \Lem{lem:tight}.
\end{proof}


Intuitively, the condition that $\exists \p\in\relint(\Delta_n), c\in\R_+$ such that $\p^\top \bell_t = c ~\forall t\in[k]$ in \Lem{lem:tight} and \Thm{thm:tight} above captures the essence of a hard problem: in this case, if the underlying label probability distribution $\p'$ is very close to $\p$, then it becomes hard to decide which element $t\in[k]$ is an optimal prediction.
This is essentially what leads the lower bound on the CC-dimension to become tight in this case.

A particularly useful application of \Thm{thm:tight} is to losses $\L$ whose columns $\bell_t$ can be obtained from one another by permuting entries:

\begin{cor}
\label{cor:col-perm}
Let $\L\in\R_+^{n \times k}$ be such that all columns of $\L$ can be obtained from one another by permuting  entries, i.e.\ $\forall t_1, t_2 \in[k], ~\exists \sigma\in\Pi_n$ such that $\ell_{y,t_2} = \ell_{\sigma(y),t_1} ~\forall y\in[n]$. Then 
\[
\CCdim(\L) \geq \affdim(\L) - 1
	\,.
\]
\end{cor}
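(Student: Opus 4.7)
The plan is to apply \Thm{thm:tight} directly, after exhibiting an appropriate $\p \in \relint(\Delta_n)$ and $c \in \R_+$ that satisfy its hypothesis. The natural candidate is the uniform distribution $\p = (\tfrac{1}{n}, \ldots, \tfrac{1}{n})^\top$, which lies in $\relint(\Delta_n)$ since all its coordinates are strictly positive.

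The key observation is that for this $\p$, we have
\[
\p^\top \bell_t \;=\; \frac{1}{n} \sum_{y=1}^n \ell_{y,t}\,,
\]
which is simply $1/n$ times the sum of the entries in column $t$ of $\L$. Since by assumption any two columns of $\L$ are related by a permutation of entries, this column sum is invariant across $t \in [k]$. Hence there is a constant $c \in \R_+$ such that $\p^\top \bell_t = c$ for all $t \in [k]$.

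With this verification in hand, \Thm{thm:tight} immediately yields $\CCdim(\L) \geq \affdim(\L) - 1$, completing the proof. There is no real obstacle here; the only thing to check is the invariance of the column sum under permutation, which is essentially a one-line observation. The corollary is therefore a clean specialization of \Thm{thm:tight} to a structurally natural class of loss matrices (e.g., subset ranking losses where the columns correspond to different rankings of a fixed set of relevance-dependent penalties).
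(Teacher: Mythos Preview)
Your proposal is correct and matches the paper's own proof essentially verbatim: the paper also takes $\p = (\tfrac{1}{n},\ldots,\tfrac{1}{n})^\top \in \relint(\Delta_n)$, sets $c = \frac{\|\bell_1\|_1}{n}$, observes that the column-permutation assumption forces $\p^\top \bell_t = c$ for all $t\in[k]$, and then invokes \Thm{thm:tight}.
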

\begin{proof}
Let $\p=\big( \frac{1}{n}, \ldots, \frac{1}{n} \big)^\top \in \relint(\Delta_n)$. Let $c = \frac{\| \bell_1\|_1}{n}$. Then under the given condition, $\p^\top\bell_t = c ~\forall t\in[k]$. The result then follows from \Thm{thm:tight}.
\end{proof}

We will use the above corollary in establishing lower bounds on the CC dimension of certain subset ranking losses below.

\section{Applications to Subset Ranking}
\label{sec:ranking}

We now consider applications of the above framework to analyzing various subset ranking problems, where each instance $x\in\X$ consists of a query together with a set of $r$ documents (for simplicity, $r\in\Z_+$ here is fixed), and the goal is to learn a prediction model which given such an instance predicts a ranking (permutation) of the $r$ documents \citep{CossockZh08}.\footnote{The term `subset ranking' here refers to the fact that in a query-based setting, each instance involves a different `subset' of documents to be ranked; see \citep{CossockZh08}.}
We consider three popular losses used for subset ranking: the normalized discounted cumulative gain (NDCG) loss, the pairwise disagreement (PD) loss, and the mean average precision (MAP) loss.\footnote{Note that NDCG and MAP are generally expressed as gains, where a higher value corresponds to better performance; we can express them as non-negative losses by subtracting them from a suitable constant.} Each of these subset ranking losses can be viewed as a specific type of multiclass loss acting on a certain label space $\Y$ and prediction space $\hat{\Y}$. In particular, for the NDCG loss, the label space $\Y$ contains $r$-dimensional multi-valued relevance vectors; for PD loss, $\Y$ contains directed acyclic graphs on $r$ nodes; and for MAP loss, $\Y$ contains $r$-dimensional binary relevance vectors. In each case, the prediction space $\hat{\Y}$ is the set of permutations of $r$ objects: $\hat{\Y} = \Pi_r$. We study the convex calibration dimension of these losses below. 
Specifically, we show that the CC dimension of the NDCG loss is upper bounded by $r$ (\Sec{subsec:NDCG}), and that 
of both the PD and MAP losses is lower bounded by a quadratic function of $r$ (Sections~\ref{subsec:PD} and \ref{subsec:MAP}).
Our result on the CC dimension of the NDCG  loss is consistent with previous results in the literature showing the existence of $r$-dimensional convex calibrated surrogates for NDCG \citep{Ravikumar+11, Buffoni+11};
our results on the CC dimension of the PD and MAP losses strengthen previous results of \cite{Calauzenes+12}, who showed non-existence of $r$-dimensional convex calibrated surrogates (with a fixed argsort predictor) for PD and MAP.

\subsection{Normalized Discounted Cumulative Gain (NDCG)}
\label{subsec:NDCG}

The NDCG loss is widely used in information retrieval applications \citep{JarvelinKe00}. Here $\Y$ is the set of $r$-dimensional relevance vectors with say $s$ relevance levels, $\Y = \{0,1,\ldots,s-1\}^r$, and $\hat{\Y}$ is the set of permutations of $r$ objects, $\hat{\Y}=\Pi_r$ (thus here $n =|\Y|=s^r$ and $k = |\hat{\Y}|=r!$). 
The loss on predicting a permutation $\sigma\in\Pi_r$ when the true label is $\y\in\{0,1,\ldots,s-1\}^r$ is given by
\[
\ell^\NDCG(\y, \sigma)
	~ = ~
	1-\frac{1}{z(\y)} \sum_{i=1}^r \frac{2^{y_i}-1}{\log_2(\sigma(i)+1)}
	\,,
\]
where $z(\y)$ is a normalizer that ensures the loss is non-negative and depends only on $\y$.
The NDCG loss can therefore be viewed as a multiclass loss matrix $\L^{\NDCG}\in\R_+^{s^r\times r!}$. 
Clearly, $\affdim(\L^\NDCG) \leq r$, and therefore by \Thm{thm:ccdim-upper-bound}, we have 
\[
\CCdim(\L^\NDCG) ~ \leq ~ r
	\,.
\]
Indeed, previous results in the literature have shown the existence of $r$-dimensional convex calibrated surrogates for NDCG \citep{Ravikumar+11, Buffoni+11}.

\subsection{Pairwise Disagreement (PD)}
\label{subsec:PD}

Here the label space $\Y$ is the set of all directed acyclic graphs (DAGs) on $r$ vertices, which we shall denote as $\G_r$; for each directed edge $(i,j)$ in a graph $G\in\G_r$ associated with an instance $x\in\X$, the $i$-th document in the document set in $x$ is preferred over the $j$-th document. The prediction space $\hat{\Y}$ is again the set of permutations of $r$ objects, $\hat{\Y}=\Pi_r$. The loss on predicting a permutation $\sigma\in\Pi_r$ when the true label is $G\in\G_r$ is given by
\begin{eqnarray*}
\ell^\pair(G, \sigma)
	& = &
	\sum_{(i,j)\in G}
		\1\big( \sigma(i) > \sigma(j) \big) 
\\
	& = & 
	\sum_{i=1}^r \sum_{j=1}^r 
		\1\big( (i,j)\in G \big) \cdot \1\big( \sigma(i)> \sigma(j) \big) 
\\
	& = &
	\sum_{i=1}^r \sum_{j=1}^{i-1} 
		\1\big( (i,j)\in G \big) \cdot \1\big( \sigma(i)> \sigma(j) \big) + 
		\1\big( (j,i)\in G \big) \cdot \Big( 1-\1\big(\sigma(i)> \sigma(j) \big) \Big) 
\\
	& = & 
	\sum_{i=1}^r \sum_{j=1}^{i-1} 
		\Big( \1\big( (i,j)\in G \big) - \1\big( (j,i)\in G \big) \Big) \cdot \1\big( \sigma(i)> \sigma(j) \big) + 
	\sum_{i=1}^r \sum_{j=1}^{i-1} 
		\1\big( (j,i)\in G \big)
	\,.
\end{eqnarray*}
The PD loss can be viewed as a multiclass loss matrix $\L^\pair\in\R_+^{|\G_r|\times r!}$.
Note that the second term in the sum above depends only the label $G$; removing this term amounts to simply subtracting a fixed vector from each column of the loss matrix, which does not change the properties of the minimizer of the loss or its CC dimension. We can therefore consider the following loss instead:
\[
\tilde{\ell}^\pair(G,\sigma)
	~ = ~
	\sum_{i=1}^r \sum_{j=1}^{i-1} 
		\Big( \1\big( (i,j)\in G \big) - \1\big( (j,i)\in G \big) \Big) \cdot \1\big( \sigma(i)> \sigma(j) \big) 
	\,.
\]
The resulting loss matrix $\tilde{\L}^\pair$ clearly has rank at most $\frac{r(r-1)}{2}$. Therefore, by \Cor{cor:ccdim-upper-bound}, we have 
\[
\CCdim(\L^\pair) 
	~ = ~ 
	\CCdim(\tilde{\L}^\pair) 
	~ \leq ~
	\frac{r(r-1)}{2} 
	\,.
\] 
In fact one can show that the rank of $\tilde{\L}^\pair$ is exactly $\frac{r(r-1)}{2}$:

\begin{prop}
\label{prop:PD-rank}
$\rank(\tilde{\L}^\pair) = \frac{r(r-1)}{2}$.
\end{prop}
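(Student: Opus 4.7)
The plan is to prove the matching lower bound $\rank(\tilde{\L}^\pair) \geq \frac{r(r-1)}{2}$ by exhibiting $\frac{r(r-1)}{2}$ linearly independent rows of $\tilde{\L}^\pair$. For each ordered pair $(i,j)$ with $i>j$, let $G_{ij} \in \G_r$ be the single-edge DAG with only the edge $(i,j)$. Then from the definition of $\tilde{\ell}^\pair$, the $G_{ij}$-row of $\tilde{\L}^\pair$ is precisely the vector $\b_{ij} = \big( \1(\sigma(i)>\sigma(j)) \big)_{\sigma\in\Pi_r}$. So it suffices to show that the family $\{\b_{ij}\}_{i>j}$ is linearly independent in $\R^{r!}$.

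To show independence, suppose $\sum_{i>j} c_{ij}\,\b_{ij} = \0$. I would fix an arbitrary unordered pair $\{a,b\}\subseteq[r]$, set $(\hat{i},\hat{j}) = (\max(a,b),\min(a,b))$, and construct two permutations $\sigma,\sigma' \in \Pi_r$ that differ only at positions $a,b$ and only in the values $1,2$: namely, pick any $\sigma$ with $\sigma(a)=1$, $\sigma(b)=2$, and $\sigma(c)\in\{3,\ldots,r\}$ for $c \notin \{a,b\}$, and let $\sigma'$ be the same as $\sigma$ except $\sigma'(a)=2,\sigma'(b)=1$. The key observation is that for every pair $(i,j)$ with $i>j$ other than $(\hat{i},\hat{j})$, the indicator $\1(\sigma(i)>\sigma(j))$ agrees with $\1(\sigma'(i)>\sigma'(j))$: if $\{i,j\}$ is disjoint from $\{a,b\}$ there is nothing to check, and if exactly one of $i,j$ lies in $\{a,b\}$ then the other takes a value in $\{3,\ldots,r\}$, which lies above both $1$ and $2$, so the comparison is preserved. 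Only the $(\hat{i},\hat{j})$-indicator flips between the two permutations. Taking the difference of the zero relation at $\sigma$ and $\sigma'$ therefore yields $\pm c_{\hat{i},\hat{j}} = 0$; since $\{a,b\}$ was arbitrary, all $c_{ij}=0$.

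Combining this with the upper bound $\rank(\tilde{\L}^\pair)\leq \frac{r(r-1)}{2}$ already derived from the factorization $\tilde{\ell}^\pair(G,\sigma) = \sum_{i>j}\bigl(\1((i,j)\in G)-\1((j,i)\in G)\bigr)\cdot \1(\sigma(i)>\sigma(j))$ (which expresses every column as a linear combination of $\frac{r(r-1)}{2}$ fixed vectors), yields equality. The only subtle point, and the main obstacle in the argument, is choosing the pair of permutations $\sigma,\sigma'$ carefully enough to isolate a single coefficient $c_{\hat{i},\hat{j}}$. Placing the values $1,2$ at positions $a,b$ does this cleanly because every remaining coordinate in $\{3,\ldots,r\}$ lies on the same side of both $1$ and $2$, so comparisons with a third index $c$ are unaffected by the swap. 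This is the one place where the combinatorics needs to be done with care; the rest of the argument is routine linear algebra.
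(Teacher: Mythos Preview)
Your proof is correct and follows essentially the same line as the paper's: select the $\binom{r}{2}$ rows of $\tilde{\L}^\pair$ indexed by single-edge DAGs and show they are linearly independent by a swap-two-positions argument that isolates one coefficient at a time, then combine with the already-established upper bound. In fact, your version is slightly more careful than the paper's: the paper asserts that for \emph{any} two permutations $\sigma,\sigma'$ differing by a swap of the values at positions $a$ and $b$, all single-edge rows other than $(a,b)$ take the same value at $\sigma$ and $\sigma'$, but this is not true for an arbitrary such swap (e.g.\ $r=3$, $a=1$, $b=2$, $\sigma=(1,3,2)$, $\sigma'=(3,1,2)$ also flips the $(1,3)$-row); your explicit choice of placing the values $1$ and $2$ at positions $a$ and $b$, so that every remaining position holds a value in $\{3,\ldots,r\}$ lying above both, is exactly what is needed to make the isolation step go through.
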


Moreover, it is easy to see that the columns of $\tilde{\L}^\pair$ can all be obtained from one another by permuting entries. Therefore, by \Cor{cor:col-perm}, we also have 
\[
\CCdim(\L^\pair) 
	~ = ~ 
	\CCdim(\tilde{\L}^\pair) 
	~ \geq ~
	\frac{r(r-1)}{2} - 2
	\,.
\]
Informally, this implies that a convex surrogate that achieves calibration w.r.t.\ $\L^\pair$ over the full probability simplex must effectively `estimate' all edge weights.
Formally, this strengthens previous results of \cite{Duchi+10} and \cite{Calauzenes+12}. In particular, \cite{Duchi+10} showed that certain popular $r$-dimensional convex surrogates are not calibrated for the PD loss, and conjectured that such convex calibrated surrogates (in $r$ dimensions) do not exist; \cite{Calauzenes+12} showed that indeed there do not exist any $r$-dimensional convex surrogates that use argsort as the predictor and are calibrated for the PD loss. The above result allows us to go further and conclude that in fact, one cannot design convex calibrated surrogates for the PD loss in any prediction space of less than $\frac{r(r-1)}{2} - 2$ dimensions (regardless of the predictor used). 

\subsection{Mean Average Precision (MAP)}
\label{subsec:MAP}

Here the label space $\Y$ is the set of all (non-zero) $r$-dimensional binary relevance vectors, $\Y = \{0,1\}^r\setminus \{\0\}$, and the prediction space $\hat{\Y}$ is again the set of permutations of $r$ objects, $\hat{\Y}=\Pi_r$. The loss on predicting a permutation $\sigma\in\Pi_r$ when the true label is $\y\in\{0,1\}^r\setminus\{\0\}$ is given by
\begin{eqnarray}
\ell^\MAP(\y,\sigma)
	& = &
	1 - \frac{1}{\|\y\|_1} \sum_{i:y_i=1} \frac{1}{\sigma(i)}\sum_{j=1}^{\sigma(i)} y_{\sigma^{-1}(j)} 
\nonumber \\
	& = & 
	1 - \frac{1}{\|\y\|_1} \sum_{i=1}^r \frac{y_i}{\sigma(i)} \sum_{j=1}^{\sigma(i)}  y_{\sigma^{-1}(j)}
\nonumber \\
	& = & 
	1 - \frac{1}{\|\y\|_1} \sum_{i=1}^r  \sum_{j=1}^{i}  \frac{y_{\sigma^{-1}(i)} \, y_{\sigma^{-1}(j)}}{i} 
\nonumber \\
	& = & 
	1 - \frac{1}{\|\y\|_1} \sum_{i=1}^r \sum_{j=1}^i \frac{y_i \, y_j }{\max(\sigma(i),\sigma(j))}
\label{eqn:MAP-low-rank}	
\end{eqnarray}
Thus the MAP loss can be viewed as a multiclass loss matrix $\L^\MAP\in\R_+^{(2^r-1)\times r!}$. Clearly, $\affdim(\L^\MAP)\leq \frac{r(r+1)}{2}$, and therefore by \Thm{thm:ccdim-upper-bound}, we have 
\[
\CCdim(\L^\MAP) 
	~ \leq ~
	\frac{r(r+1)}{2}
	\,.
\]
One can also show the following lower bound on the rank of $\L^\MAP$:

\begin{prop}
\label{prop:MAP-rank-lower-bound}
$\rank(\L^\MAP)\geq \frac{r(r-1)}{2}-2$.
\end{prop}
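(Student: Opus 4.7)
My plan is to pull a family of $\binom{r}{2}$ simpler vectors out of the row-span of $\L^\MAP$ using the bilinear form in Equation~(\ref{eqn:MAP-low-rank}), then lower-bound the dimension of the subspace those vectors span in $\R^{r!}$. From (\ref{eqn:MAP-low-rank}) the rows indexed by $\y=\e_i$ and $\y=\e_a+\e_b$ are
\[
\L^\MAP_{\e_i}(\sigma) = 1-\frac{1}{\sigma(i)},
\qquad
\L^\MAP_{\e_a+\e_b}(\sigma) = 1-\frac{1}{2}\!\left(\frac{1}{\sigma(a)}+\frac{1}{\sigma(b)}+\frac{1}{\max(\sigma(a),\sigma(b))}\right),
\]
and a direct calculation collapses the combination
\[
\L^\MAP_{\e_a+\e_b} - \tfrac{1}{2}\L^\MAP_{\e_a} - \tfrac{1}{2}\L^\MAP_{\e_b} \;=\; -\tfrac{1}{2}\,g_{ab},
\]
where $g_{ab}(\sigma)\=1/\max(\sigma(a),\sigma(b))$. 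Hence every $g_{ab}$ with $1\le a<b\le r$ lies in the row-span of $\L^\MAP$, so $\rank(\L^\MAP)\geq \dim\operatorname{span}\{g_{ab}\colon a<b\}$.

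\medskip

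The remaining step is to show that the $\binom{r}{2}$ vectors $\{g_{ab}\}$ in $\R^{r!}$ span a subspace of dimension at least $\binom{r}{2}-2$. For this, I build for each pair $\{a,b\}$ a \emph{distinguishing} permutation $\sigma_{ab}\in\Pi_r$ that sends $\{a,b\}$ to positions $\{1,2\}$ and the remaining elements of $[r]$ to positions $\{3,\ldots,r\}$ in a fixed canonical order. On such a $\sigma_{ab}$ one has $g_{ab}(\sigma_{ab})=1/2$, while every other $g_{a'b'}(\sigma_{ab})\le 1/3$ (the latter because the two positions $\sigma_{ab}(a'),\sigma_{ab}(b')$ contain at most one element of $\{1,2\}$). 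The square matrix $G$ with $G_{(a',b'),(a,b)}=g_{a'b'}(\sigma_{ab})$ therefore has uniform diagonal $1/2$ and small, highly structured off-diagonal entries whose values depend only on the intersection pattern $|\{a',b'\}\cap\{a,b\}|$. The plan is to exploit the diagonal action of $S_r$ on pairs, under which $G$ is equivariant, to block-diagonalize $G$ into small isotypic components, verify invertibility of all but at most two of them, and conclude $\rank(G)\geq \binom{r}{2}-2$. Combined with the reduction above, this yields $\rank(\L^\MAP)\geq \binom{r}{2}-2 = \frac{r(r-1)}{2}-2$.

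\medskip

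\textbf{Main obstacle.} The crux is the rank bound on $G$: diagonal dominance \emph{fails} for $r\geq 3$, since $(\binom{r}{2}-1)\cdot \tfrac13$ already exceeds the diagonal $\tfrac12$, so invertibility cannot be read off from Gershgorin. The argument must instead exploit either (i) the $S_r$-symmetry of the family $\{g_{ab}\}$, reducing $G$ to much smaller irreducible blocks whose determinants can be checked by hand, or (ii) an induction on $r$: assuming linear independence of the $g_{ab}$ for pairs in $[r-1]$, adjoin the $r-1$ new vectors $g_{i,r}$ and show the enlarged family is independent up to a controlled-codimension subspace. In either route the buffer of $-2$ in the stated bound is comfortable, absorbing at most two exceptional isotypic components or two boundary adjustments in the induction; a sharper argument would likely give the stronger $\rank(\L^\MAP)\ge \binom{r}{2}$, but the slightly weaker bound suffices to drive a lower bound on $\CCdim(\L^\MAP)$ that is quadratic in $r$ in the next application of \Thm{thm:ccdim-lower-bound} (or of \Cor{cor:col-perm}, since the columns of $\L^\MAP$ are permutations of one another).
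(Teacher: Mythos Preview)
Your first reduction is correct and actually cleaner than the paper's: you pull the vectors $g_{ab}(\sigma)=1/\max(\sigma(a),\sigma(b))$ directly out of the row span of $\L^\MAP$, whereas the paper factors $\L^\MAP=\e\e^\top-\A\B$ (the $g_{ab}$ are the off-diagonal rows of $\B$) and then has to separately argue that $\A$ is within one of full column rank just to pass from $\rank(\A\B)$ to $\rank(\B)$.

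The genuine gap is in your second step. The assertion that the off-diagonal entries of $G$ ``depend only on the intersection pattern $|\{a',b'\}\cap\{a,b\}|$'' is false for the $\sigma_{ab}$ you describe. Sending $[r]\setminus\{a,b\}$ to $\{3,\ldots,r\}$ in a \emph{fixed canonical order} breaks the $S_r$-symmetry: different pairs $\{a',b'\}$ disjoint from $\{a,b\}$ land on different positions, so $g_{a'b'}(\sigma_{ab})$ varies (e.g.\ for $r=5$, $\{a,b\}=\{1,2\}$ with the obvious ordering: $g_{34}(\sigma_{12})=\tfrac14$ but $g_{35}(\sigma_{12})=\tfrac15$), and the same happens for intersection-one pairs. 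Consequently $G$ does \emph{not} lie in the Johnson scheme and cannot be block-diagonalized as you propose. A repair along your route (i) is to replace each column by the \emph{average} over all $\sigma$ with $\{\sigma(a),\sigma(b)\}=\{1,2\}$; that averaged matrix genuinely has Johnson-scheme structure, still lower-bounds $\dim\span\{g_{ab}\}$, and has only three eigenvalues to check --- but you would then have to compute them.

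The paper instead carries out essentially your route (ii). It proves $\rank(\B_r)\ge\binom{r}{2}$ by induction on $r$: split the rows of $\B_r$ into $j<r$ versus $j=r$ and the columns into $\sigma(r)=r$ versus $\sigma(r)\neq r$; the top-left block is $\B_{r-1}$, and one exhibits $r-1$ explicit columns $\sigma^1,\ldots,\sigma^{r-1}$ with $\sigma^j(j)=r$, $\sigma^j(r)=r-1$ whose restrictions to the $j=r$ rows are visibly independent and outside the span contributed by the $\sigma(r)=r$ columns. This gives $\rank(\B_r)\ge\rank(\B_{r-1})+(r-1)$ and unwinds to $\binom{r}{2}$. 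Combining this with either your row-span reduction or the paper's factorization yields the stated bound.
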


Again, it is easy to see that the columns of $\L^\MAP$ can all be obtained from one another by permuting entries, and therefore by \Cor{cor:col-perm}, we have 
\[
\CCdim(\L^\MAP) 
	~ \geq ~
	\frac{r(r-1)}{2} - 4
	\,.
\]
This again strengthens a previous result of \cite{Calauzenes+12}, who showed that there do not exist any $r$-dimensional convex surrogates that use argsort as the predictor and are calibrated for the MAP loss. As with the PD loss, the above result allows us to go further and conclude that in fact, one cannot design convex calibrated surrogates for the MAP loss in any prediction space of less than $\frac{r(r-1)}{2} - 4$ dimensions (regardless of the predictor used).

\section{Conclusion}
\label{sec:concl}

We have developed a unified framework for studying consistency properties of surrogate risk minimization algorithms for general multiclass learning problems, defined by a general multiclass loss matrix. In particular, we have introduced the notion of \emph{convex calibration dimension} (CC dimension) of a multiclass loss matrix, a fundamental quantity that measures the smallest `size' of a prediction space in which it is possible to design a convex surrogate that is calibrated with respect to the given loss matrix, 
and have used this to analyze consistency properties of surrogate losses for various multiclass learning problems. 

Our study both generalizes previous results and sheds new light on various multiclass losses. For example, our analysis shows that for the $n$-class 0-1 loss, any convex calibrated surrogate must necessarily entail learning at least $n-1$ real-valued functions, thus showing that the calibrated multiclass surrogate of \cite{Lee+04}, whose minimization entails learning $n$ real-valued functions, is essentially not improvable (in the sense of the number of real-valued functions that need to be learned). Another implication of our study is to the pairwise disagreement (PD) and mean average precision (MAP) losses for subset ranking: while previous results have shown that for subset ranking problems with $r$ documents per query, there do not exist $r$-dimensional convex calibrated surrogates for the PD and MAP losses, our analysis shows that (a) these losses do admit convex calibrated surrogates in higher dimensions, and (b) to obtain such convex calibrated surrogates for these losses, one needs to operate in an $\Omega(r^2)$-dimensional surrogate prediction space (i.e.\ one needs to learn $\Omega(r^2)$ real-valued functions, rather than just $r$ real-valued `scoring' functions).

As discussed in \Sec{subsec:tight}, while the upper and lower bounds we have obtained on the CC dimension are tight (up to an additive constant of 1) for certain classes of loss matrices, they can be quite loose in general. An important open direction is to obtain a characterization of the CC dimension in more general settings.
It would also be useful to develop methods for deriving explicit surrogate regret bounds for general calibrated surrogates, through which one can relate the excess target risk to the excess surrogate risk for any multiclass loss and corresponding calibrated surrogate.
Finally, another interesting direction would be to develop a generic procedure for designing convex calibrated surrogates operating on a `minimal' space according to the CC dimension of a given loss matrix. 
There has been some recent progress in this direction in \citep{Ramaswamy+13}, where a general method is described for designing convex calibrated surrogates in a surrogate space with dimension at most the rank of the given loss matrix. However, while the rank forms an upper bound on the CC dimension of the loss matrix, as discussed above, this bound is not always tight, giving rise to the possibility of designing convex calibrated surrogates in lower-dimensional spaces for certain losses. 
Resolving these issues will contribute significantly to our understanding of the conditions under which convex calibrated surrogates can be designed for a given multiclass learning problem.

\section{Proofs}
\label{sec:proofs}

\subsection{Proof of \Thm{thm:condition-sufficient}}

The proof uses the following technical lemma:
\begin{lem}
\label{lem:condition-sufficient}
Let $\L\in\R_+^{n\times k}$ and $\bpsi:\C\>\R_+^n$. Suppose there exist $r\in\N$ and $\z_1,\ldots,\z_r \in \cR_\psi$ such that $\bigcup_{j=1}^r \cN^{\psi}(\z_j) = \Delta_n$ and for each $j\in[r]$, $\exists t\in[k]$ such that $\cN^{\psi}(\z_j) \subseteq \Q^\L_t$. Then any element $\z\in\S_\psi$ can be written as $\z=\z'+\z''$ for some $\z'\in\conv(\{\z_1,\ldots,\z_r\})$ and $\z''\in\R_+^n$.
\end{lem}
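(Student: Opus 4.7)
My plan is to reformulate the conclusion geometrically: setting $C := \conv(\{\z_1,\ldots,\z_r\}) + \R_+^n \subseteq \R^n$, the claim that $\z = \z' + \z''$ with $\z' \in \conv(\{\z_1,\ldots,\z_r\})$ and $\z'' \in \R_+^n$ is precisely the statement $\z \in C$. So the goal reduces to showing $\S_\psi \subseteq C$. The set $C$ is closed and convex, being the Minkowski sum of a compact polytope and the closed convex cone $\R_+^n$; both properties will be used below.

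I would argue by contradiction. If some $\z \in \S_\psi$ failed to lie in $C$, strict separation of the point $\z$ from the closed convex set $C$ would yield a nonzero $\p \in \R^n$ with $\p^\top \z < \inf_{\w \in C} \p^\top \w$. The crux is that this separating $\p$ must lie in $\R_+^n$: if some coordinate $p_i$ were negative, I could add arbitrarily large positive multiples of the $i$-th standard basis vector to any fixed element of $C$ (using that $\R_+^n$ is contained in the recession cone of $C$) and drive $\p^\top \w$ to $-\infty$, contradicting finiteness of the infimum. Once $\p \in \R_+^n$ is established, the optimum over $C$ is attained by taking the $\R_+^n$ component to be $\0$ and picking the best vertex, so $\inf_{\w\in C} \p^\top \w = \min_{j\in[r]} \p^\top \z_j$.

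The final step normalizes to $\tilde{\p} := \p/\|\p\|_1 \in \Delta_n$, which is valid since $\p \in \R_+^n \setminus \{\0\}$ and the strict inequality is preserved under positive scaling. By the covering hypothesis $\bigcup_j \cN^\psi(\z_j) = \Delta_n$, there is some $j^\star \in [r]$ with $\tilde{\p} \in \cN^\psi(\z_{j^\star})$; by the definition of the positive normal set and the fact that $\z \in \S_\psi$, this gives $\tilde{\p}^\top \z_{j^\star} = \inf_{\z' \in \S_\psi} \tilde{\p}^\top \z' \leq \tilde{\p}^\top \z$. But the scaled separation inequality reads $\tilde{\p}^\top \z < \min_j \tilde{\p}^\top \z_j \leq \tilde{\p}^\top \z_{j^\star}$, which is the sought contradiction. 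The only delicate step is the recession-cone argument forcing $\p \in \R_+^n$; the rest is a direct unpacking of the definitions of $\cN^\psi$, $\S_\psi$, and the covering hypothesis, combined with the separation theorem.
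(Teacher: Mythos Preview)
Your proposal is correct and follows essentially the same route as the paper's proof: define the Minkowski sum $C=\conv(\{\z_1,\ldots,\z_r\})+\R_+^n$, argue by contradiction via a separating hyperplane, use the recession-cone argument to force the separator into $\R_+^n$, normalize into $\Delta_n$, and then invoke the covering hypothesis together with the definition of $\cN^\psi$ to obtain a contradiction. The only cosmetic difference is that you explicitly note closedness of $C$ and phrase the infimum over $C$ as $\min_j \p^\top \z_j$, whereas the paper simply observes $\z_{j^\star}\in C$ directly; both yield the same contradiction.
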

\begin{proof} (Proof of \Lem{lem:condition-sufficient}) \\[6pt]
Let $\S' = \{\z'+\z'': \z'\in\conv(\{\z_1,\ldots,\z_r\}), \z''\in \R_+^n\}$, and suppose there exists a point $\z\in\S_\psi$ which cannot be decomposed as claimed, i.e. such that $\z\notin\S'$. Then by the Hahn-Banach theorem (e.g.\ see \cite{Gallier09}, corollary 3.10), there exists a hyperplane that strictly separates $\z$ from $\S'$, i.e.\ $\exists \w\in\R^n$ such that $\w^\top \z < \w^\top \a ~\forall \a\in\S'$. It is easy to see that $\w\in\R_+^n$ (since a negative component in $\w$ would allow us to choose an element $\a$ from $\S'$ with arbitrarily small $\w^\top \a$).

Now consider the vector $\q=\w/\sum_{i=1}^n w_i \in\Delta_n$. Since $\bigcup_{j=1}^r \cN^{\psi}(\z_j) = \Delta_n$, $\exists j\in[r]$ such that $\q\in \cN^{\psi}(\z_j)$. By definition of positive normals, this gives $\q^\top \z_j \leq \q^\top \z$, and therefore $\w^\top \z_j \leq \w^\top \z$. But this contradicts our construction of $\w$ (since $\z_j\in\S'$). Thus it must be the case that every $\z\in\S_\psi$ is also an element of $\S'$.
\end{proof}

\begin{proof} (Proof of \Thm{thm:condition-sufficient})\\[6pt]
We will show $\L$-calibration of $\bpsi$ via \Lem{lem:pred-pred'}.
For each $j\in[r]$, let
\[
T_j = \Big\{ t\in[k]: \cN^{\psi}(\z_j) \subseteq \Q^\L_t \Big\}
    \,;
\]
by assumption, $T_j \neq \emptyset ~\forall j\in[r]$.
By \Lem{lem:condition-sufficient}, for every $\z\in\S_\psi$, 
$\exists \balpha\in\Delta_{r}, \u \in \R_+^n$ such that 
\(
\z = \sum_{j=1}^{r} \alpha_j \z_j + \u
	\,.
\)
For each $\z\in\S_\psi$, arbitrarily fix a unique $\balpha^\z \in\Delta_r$ and $\u^\z\in\R_+^n$ satisfying the above, i.e.\ such that 
\[
\z=\sum_{j=1}^{r} \alpha_j^\z \z_j + \u^\z \,.
\]
Now define $\pred':\S_\psi\>[k]$ as
\[
    \pred'(\z) = \min\big\{ t\in[k]: \exists j\in[r] ~\mbox{such that $\alpha^\z_j \geq \frac{1}{r}$ and  $t\in T_j$} \big\}
    \,.
\]
We will show $\pred'$ satisfies the condition for $\ell$-calibration.

Fix any $\p\in\Delta_n$. Let
\[
J_\p = \Big\{ j\in[r]: \p\in\cN^{\psi}(\z_j) \Big\}
    \,;
\]
since $\Delta_n = \bigcup_{j=1}^r \cN^{\psi}(\z_j)$, we have $J_\p \neq \emptyset$. Clearly,
\begin{equation}
\forall j\in J_\p:
    \p^\top\z_j = \inf_{\z\in\S_\psi} \p^\top\z
\label{eqn:j-in-Jp}
\end{equation}
\begin{equation}
\forall j\notin J_\p:
    \p^\top\z_j > \inf_{\z\in\S_\psi} \p^\top\z
\label{eqn:j-notin-Jp}
\end{equation}
Moreover, from definition of $T_j$, we have
\[
\forall j\in J_\p: ~~~
    t\in T_j
    \implies
    \p \in \Q^\L_t
    \implies
    t \in \argmin_{t'} \p^\top \bell_{t'}
    \,.
\]
Thus we get
\begin{equation}
\forall j\in J_\p: ~~~
    T_j
    \subseteq
    \argmin_{t'} \p^\top \bell_{t'}
    \,.
\label{eqn:T-j-argmin}
\end{equation}
Now, for any $\z\in\S_\psi$ for which $\pred'(\z) \notin \arg\min_{t'} \p^\top \bell_{t'}$, we must have $\alpha^\z_j \geq \frac{1}{r}$ for at least one $j\notin J_\p$ (otherwise, we would have $\pred'(\z)\in T_j$ for some $j\in J_\p$, giving $\pred'(\z) \in \arg\min_{t'} \p^\top \bell_{t'}$, a contradiction).
Thus we have
\begin{eqnarray}
\inf_{\z\in\S_\psi:\pred'(\z)\notin \argmin_{t'}\p^\top\bell_{t'}} 
	\p^\top \z 
	& = & 
	\inf_{\z\in\S_\psi:\pred'(\z)\notin \argmin_{t'}\p^\top\bell_{t'}} 
		\sum_{j=1}^r\alpha^\z_j\p^\top\z_j + \p^\top \u^\z 
\\
	& \geq & 
	\inf_{\balpha\in\Delta_r:\alpha_j\geq\frac{1}{r} \text{ for some } j\notin J_\p}
		\sum_{j=1}^r\alpha_j\p^\top\z_j
\\
	& \geq & 
	\min_{j\notin J_\p} 
		\inf_{\alpha_j\in[\frac{1}{r},1]} \alpha_j \p^\top \z_j + 
		(1-\alpha_j) \inf_{\z\in\S_\psi}\p^\top \z  
\\									 
	& > & 
	\inf_{\z\in\S_\psi}\p^\top \z
	\,,
\end{eqnarray}
where the last inequality follows from \Eqn{eqn:j-notin-Jp}.
Since the above holds for all $\p\in\Delta_n$, by \Lem{lem:pred-pred'}, we have that $\bpsi$ is $\L$-calibrated.
\end{proof}


\subsection{Proof of \Lem{lem:surrogate-universal}}

\begin{proof}
For each $\u \in \C$, 
define $\p^{\u} = \bigg( \begin{matrix} \u \\ 1-\sum_{j=1}^{n-1} u_j \end{matrix} \bigg) \in \Delta_n$. Define $\pred:\C\>[k]$ as 
\[
\pred(\u) ~ = ~ \min\big\{ t\in[k]:  \p^{\u}   \in Q^\L_t \big\}
	\,.
\]
We will show that $\pred$ satisfies the condition of Definition 1.

Fix $\p \in \Delta_n$. It can be seen that 
\[
\p^\top \bpsi(\u) 
	~ = ~ 
	\sum_{j=1}^{n-1} \Big( p_j (u_j -1)^2 + (1-p_j) \, u_j{}^2 \Big)
	\,.
\]
Minimizing the above over $\u$ yields the unique minimizer $\u^* = (p_1, \ldots, p_{n-1})^\top
\in \C$, which after some calculation gives
\[
\inf_{\u \in\C} \p^\top \bpsi(\u) 
	~ = ~
	\p^\top \bpsi(\u^*) 
	~ = ~
	\sum_{j=1}^{n-1} p_j(1-p_j) 
	\,.
\]
Now, for each $t\in[k]$, define 
\[
\reg^\L_\p(t) 
	~ \= ~
	\p^\top \bell_t - \min_{t'\in[k]} \p^\top \bell_{t'}
	\,.
\]
Clearly, $\reg^\L_\p(t) = 0 \Longleftrightarrow \p\in\Q^\L_t$. Note also that $\p^{\u^*} = \p$, and therefore $\reg^\L_\p(\pred(\u^*)) = 0$. Let 
\[
\epsilon 
	~ = ~ 
	\min_{t\in[k]: \p\notin \Q^\L_t} \reg^\L_\p(t) 
	~ > ~
	0
	\,.
\]
Then we have
\begin{eqnarray}
\inf_{\u \in\C:\pred(\u)\notin \argmin_t{\p^\top\bell_t} } \p^\top \bpsi(\u) 
	& = &
	\inf_{\u\in\C:\reg^\L_\p(\pred(\u)) \geq \epsilon} \p^\top \bpsi(\u) 
\\
	& = &
	\inf_{\u\in\C:\reg^\L_\p(\pred(\u))\geq \reg^\L_\p(\pred(\u^*)) +\epsilon} 
		\p^\top \bpsi(\u) 
	\,.
\end{eqnarray}
Now, we claim that the mapping $\u \mapsto \reg^\L_\p(\pred(\u))$ is continuous at $\u = \u^*$. To see this, suppose the sequence $\u_m$ converges to $\u^*$. Then it is easy to see that $\p^{\u_m}$ converges to $\p^{\u^*} = \p$, and therefore for each $t\in[k]$, $(\p^{\u_m})^\top \bell_t$ converges to $\p^\top \bell_t$. Since by definition of $\pred$ we have that for all $m$, $\pred(\u_m) \in \argmin_{t} (\p^{\u_m})^\top \bell_t$, this implies that for all large enough $m$, $\pred(\u_m) \in \argmin_{t} \p^\top \bell_t$. Thus for all large enough $m$, $\reg^\L_\p(\pred(\u_m)) = 0$; i.e.\ the sequence $\reg^\L_\p(\pred(\u_m))$ converges to $\reg^\L_\p(\pred(\u^*))$, yielding continuity at $\u^*$. In particular, this implies $\exists \delta>0$ such that 
\[
\|\u - \u^*\| < \delta \implies \reg^\L_\p(\pred(\u)) - \reg^\L_\p(\pred(\u^*)) < \epsilon
	\,.
\]
This gives
\begin{eqnarray}
\inf_{\u\in\C: \reg^\L_\p(\pred(\u)) \geq \reg^\L_\p(\pred(\u^*)) +\epsilon} 
	\p^\top \bpsi(\u) 
  	& \geq & 
	\inf_{\u\in\C: \|\u - \u^*\| \geq \delta} \p^\top \bpsi(\u) 
\\
	& > &
	\inf_{\u\in\C} \p^\top \bpsi(\u)
	\,,
 \end{eqnarray}
where the last inequality holds since $\p^\top \bpsi(\u)$ is a strictly convex function of $\u$ and $\u^*$ is its unique minimizer. The above sequence of inequalities give us that
\begin{eqnarray}
\inf_{\u \in\C:\pred(\u)\notin \argmin_t{\p^\top\bell_t} } \p^\top \bpsi(\u) 
	& > &
	\inf_{\u\in\C} \p^\top \bpsi(\u)
	\,.
 \end{eqnarray}
Since this holds for all $\p\in\Delta_n$, we have that $\bpsi$ is $\L$-calibrated.
\end{proof}


\subsection{Proof of \Thm{thm:ccdim-lower-bound}}

The proof will require the lemma below, which relates the feasible subspace dimensions of different trigger probability sets at points in their intersection; we will also make critical use of the notion of $\epsilon$-subdifferentials of convex functions \citep{Bertsekas+03}, the main properties of which are also recalled below.

\begin{lem}
\label{lem:cone-dims-equal}
Let $\ell:[n]\times[k]\>\R_+^n$. Let $\p\in\relint(\Delta_n)$. Then for any $t_1,t_2\in\arg\min_{t'} \p^\top\bell_{t'}$ (i.e.\ such that $\p \in \Q^\L_{t_1}\cap \Q^\L_{t_2}$),
\[
\mu_{\Q^\L_{t_1}}(\p) = \mu_{\Q^\L_{t_2}}(\p)
    \,.
\]
\end{lem}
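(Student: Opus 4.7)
The plan is to apply \Lem{lem:feasible-subspace-dimension} to both $\Q^\L_{t_1}$ and $\Q^\L_{t_2}$ at the point $\p$, obtain explicit formulas for $\mu_{\Q^\L_{t_1}}(\p)$ and $\mu_{\Q^\L_{t_2}}(\p)$ as nullities of certain matrices, and then argue that these two matrices have the same row space (hence the same nullspace). The hypothesis $\p\in\relint(\Delta_n)$ is what allows us to push all the non-negativity constraints into the ``strictly inactive'' bin, and the hypothesis that both $t_1,t_2$ attain the minimum is what makes the remaining active index set identical for both sets.

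More concretely, I would first write
\[
\Q^\L_{t}
    ~ = ~
\Big\{\q\in\R^n: (\bell_t - \bell_{t'})^\top \q \leq 0 ~\forall t'\in[k], ~ -\q \leq \0, ~ \e_n^\top \q = 1 \Big\}
\,.
\]
Let $T^* = \argmin_{t'\in[k]} \p^\top \bell_{t'}$. Since $\p\in\relint(\Delta_n)$, all entries of $\p$ are strictly positive, so the constraints $-\q\leq\0$ are all strictly satisfied at $\p$. For each $t\in T^*$, the inequality $(\bell_t - \bell_{t'})^\top \q \leq 0$ is active at $\p$ iff $\p^\top\bell_{t'} = \p^\top\bell_t$, i.e.\ iff $t'\in T^*$. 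Hence by \Lem{lem:feasible-subspace-dimension}, for each $t\in T^*$,
\[
\mu_{\Q^\L_t}(\p) ~ = ~ \nullity(\M^t)
    \,,
\]
where $\M^t$ is the matrix whose rows are $(\bell_t - \bell_{t'})^\top$ for $t'\in T^*$ together with $\e_n^\top$.

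It remains to show $\nullity(\M^{t_1}) = \nullity(\M^{t_2})$. For this I would show that $\M^{t_1}$ and $\M^{t_2}$ have the same row space. Fix any $t'\in T^*$; then
\[
\bell_{t_1} - \bell_{t'}
    ~ = ~
    (\bell_{t_1} - \bell_{t_2}) + (\bell_{t_2} - \bell_{t'})
    \,.
\]
Both $\bell_{t_2}-\bell_{t'}$ (for $t'\in T^*$) and $\bell_{t_2}-\bell_{t_1}$ (taking $t' = t_1 \in T^*$) appear as rows of $\M^{t_2}$, so $(\bell_{t_1}-\bell_{t'})^\top$ lies in the row space of $\M^{t_2}$. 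The row $\e_n^\top$ is shared. Hence the row space of $\M^{t_1}$ is contained in that of $\M^{t_2}$, and by symmetry they are equal. This yields $\nullity(\M^{t_1}) = \nullity(\M^{t_2})$, completing the proof.

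I do not anticipate a serious obstacle: the relative interior assumption handles the non-negativity constraints cleanly, and the row-space equality is a one-line manipulation. The only subtlety to be careful about is keeping the equality constraint $\e_n^\top\q=1$ in the correct slot when invoking \Lem{lem:feasible-subspace-dimension} (so that it is one of the ``$\A^3$'' rows rather than an active inequality), which is why I keep it explicitly in $\M^t$.
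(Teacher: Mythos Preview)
Your proposal is correct and follows essentially the same approach as the paper's proof: both write $\Q^\L_t$ in the polyhedral form required by \Lem{lem:feasible-subspace-dimension}, use $\p\in\relint(\Delta_n)$ to make the non-negativity constraints strictly inactive, identify the active inequalities as exactly those indexed by $T^*=\{t':\p\in\Q^\L_{t'}\}$, and conclude by showing the resulting matrices have the same row space. The only cosmetic difference is that the paper phrases the row-space equality as ``both span the subspace parallel to $\aff(\{\bell_{t'}:t'\in T^*\})$'' while you give the explicit decomposition $\bell_{t_1}-\bell_{t'}=(\bell_{t_1}-\bell_{t_2})+(\bell_{t_2}-\bell_{t'})$; these are the same observation.
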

\begin{proof} (Proof of Lemma \ref{lem:cone-dims-equal})
\\[6pt]
Let $t_1,t_2\in\arg\min_{t'} \p^\top\bell_{t'}$ (i.e.\ $\p\in\Q^\L_{t_1} \cap \Q^\L_{t_2}$).
Now
\[
\Q^\L_{t_1}
	=
	\big\{ \q\in\R^n: -\q\leq \0, \e_n^\top\q = 1, (\bell_{t_1}-\bell_t)^\top \q \leq 0 ~\forall t\in[k] \big\}
	\,,
\]
where $\e_n$ denotes the $n\times 1$ all ones vector.
Moreover, we have $-\p<\0$, and $(\bell_{t_1} - \bell_t)^\top \p = 0$ iff $\p\in \Q^\L_t$. 
Let $\big\{ t\in[k]: \p\in \Q^\L_t \big\} = \big\{ \tilde{t}_1,\ldots, \tilde{t}_r \big\}$ for some $r\in[k]$. Then by \Lem{lem:feasible-subspace-dimension}, we have
\[
\mu_{Q^\L_{t_1}} = \nullity(\A_1) 
	\,,
\]
where $\A_1 \in \R^{(r+1)\times n}$ is a matrix containing $r$ rows of the form $(\bell_{t_1} - \bell_{\tilde{t}_j})^\top, j\in[r]$ and the all ones row.
Similarly, we get 
\[
\mu_{Q^\L_{t_2}} = \nullity(\A_2) 
	\,,
\]
where $\A_2 \in \R^{(r+1)\times n}$ is a matrix containing $r$ rows of the form $(\bell_{t_2} - \bell_{\tilde{t}_j})^\top, j\in[r]$ and the all ones row. It can be seen that the subspaces spanned by the first $r$ rows of $\A_1$ and $\A_2$ are both equal to the subspace parallel to the affine space containing $\bell_{\tilde{t}_1},\ldots,\bell_{\tilde{t}_r}$. Thus both $\A_1$ and $\A_2$ have the same row space and hence the same null space and nullity, and therefore $\mu_{\Q^\L_{t_1}}(\p)=\mu_{\Q^\L_{t_2}}(\p)$.
\end{proof}

\noindent
\textbf{$\epsilon$-Subdifferentials of a Convex Function.}
For any $\epsilon > 0$, the $\epsilon$-subdifferential of a convex function $\phi:\R^d\>\bbar{\R}$ at a point $\u_0\in\R^d$ is defined as follows \citep{Bertsekas+03}:
\[
\partial_\epsilon \phi(\u_0) 
	= 
	\big\{
		\w\in\R^d: \phi(\u) - \phi(\u_0) \geq \w^\top(\u-\u_0) - \epsilon ~~\forall \u\in\R^d
	\big\}
	\,.
\]
We recall some important properties of $\epsilon$-subdifferentials below:
\begin{itemize}
\item 
\(
\0 \in \partial_\epsilon \phi(\u_0) 
	~\Longleftrightarrow~ 
	\displaystyle{ \phi(\u_0) \leq \inf_{\u\in\R^d} \phi(\u) + \epsilon }
	\,.
\)
\item 
For any $\lambda>0$,
\(
\partial_\epsilon (\lambda \phi(\u_0)) 
	~ = ~
	\lambda \, \partial_{(\epsilon/\lambda)} \phi(\u_0)
	\,.
\)
\item 
If $\phi = \phi_1+\ldots+\phi_n$ for some convex functions $\phi_i:\R^d\>\bbar{\R}$,
then
\[
\partial_\epsilon \phi(\u_0) 
	~ \subseteq ~    
	\partial_\epsilon \phi_1(\u_0) + \ldots + \partial_\epsilon \phi_n(\u_0) 
	~ \subseteq ~
	\partial_{n\epsilon} \phi(\u_0)
	\,.
\] 
\item 
\(
\epsilon_1 \leq \epsilon_2 
	~\implies~
	 \partial_{\epsilon_1} \phi(\u_0) \subseteq \partial_{\epsilon_2} \phi(\u_0)
	 \,.
\)
\end{itemize}
We are now ready to give the proof of the lower bound on the CC dimension.
\\

\begin{proof} (Proof of Theorem \ref{thm:ccdim-lower-bound})
\\[6pt]
Let $d\in\Z_+$ be such that there exists a convex set $\C\subseteq\R^d$ and surrogate loss $\bpsi:\C\>\R_+^n$  such that $\bpsi$ is $\L$-calibrated. We will show that $d \geq \|\p\|_0 - \mu_{\Q^\L_t}(\p) - 1$.
We consider two cases:
\begin{enumerate}
\item[Case 1:] $\p\in\relint(\Delta_n)$.
\\[6pt]
In this case $\|\p\|_0 = n$. We will show that there exist $\H\subseteq \Delta_n$ and $t_0 \in[k]$ satisfying the following three conditions:
	\begin{enumerate}
	\item
	$\p\in\H \,;$
	\item
	$\mu_{\H}(\p) = n - d - 1 \,;$ and
	\item
	$\H \subseteq \Q^\L_{t_0}$.
	\end{enumerate}
Clearly, conditions (a) and (c) above imply $\p\in\Q^\L_{t_0}$. Conditions (b) and (c) will then give 
\[
\mu_{\Q^\L_{t_0}}(\p) 
	~ \geq ~
	\mu_{\H}(\p)
	~ = ~
	n - d - 1
	\,.
\]
Further, by \Lem{lem:cone-dims-equal}, we will then have that 
\[
\mu_{\Q^\L_{t}}(\p) 
	~ = ~
	\mu_{\Q^\L_{t_0}}(\p) 
	~ \geq ~
	n - d - 1
	\,,
\]
thus proving the claim.
\\[6pt]
We now show how to  construct $\H$ and $t_0$ satisfying the above conditions.
Let $\{\u_m\}$ be a sequence in $\C$ such that 
\[
\p^\top \bpsi(\u_m) \, \longrightarrow \, \inf_{\z\in\S_\psi} \p^\top \z = \inf_{\u\in\C} \p^\top \bpsi(\u)
	\,.
\]
Let 
\[
\epsilon_m = \p^\top \bpsi(\u_m) - \inf_{\u\in\C} \p^\top \bpsi(\u)
	\,.
\]
Then clearly $\epsilon_m \, \> \, 0$.
Now, for each $m$, we have
\begin{eqnarray*}
\0  
	& \in &
	\partial_{\epsilon_m} (\p^\top \bpsi(\u_m)) 
\\
	& \subseteq &
	\partial_{\epsilon_m}(p_1\psi_1(\u_m)) + \ldots + \partial_{\epsilon_m}(p_n\psi_n(\u_m)) 
\\
	& = &
	p_1\, \partial_{(\epsilon_m/p_1)}(\psi_1(\u_m)) + \ldots + p_n \, \partial_{(\epsilon_m/p_n)}(\psi_n(\u_m)) 
	\,.
\end{eqnarray*}
Therefore $\exists \w^m_y \in \partial_{(\epsilon_m/p_y)}(\psi_y(\u_m)) ~\forall y\in[n]$ such that
\[
\sum_{y=1}^n p_y \w^m_y = \0
	\,.
\] 
Let 
\[
\A^m = \big[ \w^m_1 \ldots \w^m_n \big] \in \R^{d\times n}
	\,,
\]
and define 
\begin{eqnarray*}
\H_m 
	& = & \big\{ \q\in\Delta_n: \A^m \q = \0 \big\} 
\\
	& = & \big\{ \q\in\R^n : \A^m\q = \0\,, \e_n^\top\q = 1\,, - \q \leq \0 \big\}
	\,,
\end{eqnarray*}
where $\e_n$ denotes the $n\times 1$ all ones vector.
Clearly, $\p\in\H_m$ and $-\p<\0$; therefore by \Lem{lem:feasible-subspace-dimension}, we have
\[
\mu_{\H_m}(\p) 
	~ = ~ 
	\nullity\left( \left[ \begin{array}{c} \A^m \\ \e_n^\top \end{array} \right] \right)
	~\geq~
	n - (d+1)
	\,.
\] 
This means that there exist $(n-d-1)$ orthonormal vectors $\v^m_1,\ldots,\v^m_{n-d-1} \in\R^n$ whose span $\V^m = \span(\{\v^m_1,\ldots,\v^m_{n-d-1}\})$ is contained in $\F_{\H_m}(\p) \cap (-\F_{\H_m}(\p))$.
It can be verified that this in turn implies 
\[
\{ \p+\v: \v\in\V^m \} \cap \Delta_n ~ \subseteq ~ \H_m
	\,.
\]
Now, $\{(\v^m_1,\ldots,\v^m_{n-d-1})\}$ is a bounded sequence in $(\R^n)^{n-d-1}$ and must therefore have a convergent subsequence, say with indices $r_1,r_2,\ldots$, converging to some limit point $(\v_1,\ldots,\v_{n-d-1})\in(\R^n)^{n-d-1}$. It can be verified that $\v_1,\ldots,\v_{n-d-1}$ must also form an orthonormal set of vectors. Let $\V = \span(\{\v_1,\ldots,\v_{n-d-1}\})$, and define
\[
\H ~ = ~ \{\p+\v: \v\in\V \} \cap \Delta_n
	\,.
\]
Clearly $\p\in\H$, and moreover, since $\p\in\relint(\Delta_n)$, we have $\mu_\H(\p) = n - d- 1$, thus satisfying conditions (a) and (b) above. 
\\[6pt]

For condition (c), we will show that $\H\subseteq \cN^\psi(\{\z_{m}\})$, where $\z_{m} = \bpsi(\u_{r_m})$; the claim will then follow from \Thm{thm:condition-necessary-sequence}. Consider any point $\q\in\H$. By construction of $\H$, we must have that $\q$ is the limit point of some convergent sequence $\{\q_m\}$ in $\Delta_n$ satisfying $\q_m \in \H_{r_m} ~\forall m$, i.e.\ $\A^{r_m}\q_m = \0 ~\forall m$. Therefore for each $m$, we have
\begin{eqnarray*}
\0 	~ = ~ 
	\sum_{y=1}^n q_{m,y} \w^{r_m}_y 
	&\in& 
	\sum_{y=1}^n q_{m,y} \, \partial_{(\epsilon_{r_m}/p_y)}(\psi_y(\u_{r_m})) 
\\
	&=& 
	\sum_{y=1}^n \partial_{(\epsilon_{r_m} q_{m,y}/p_y)}(q^m_y \, \psi_y(\u_{r_m})) 
\\
	&\subseteq& 
	\sum_{y=1}^n \partial_{(\epsilon_{r_m}/p_{\min})}(q_{m,y} \, \psi_y(\u_{r_m})) 
\\
	&\subseteq& 
	\partial_{(n\epsilon_{r_m}/p_{\min})}({\q_m}^\top \bpsi(\u_{r_m})) 
	\,,
\end{eqnarray*}
where $p_{\min} = \min_{y\in[n]} p_y > 0$ (since $\p\in\relint(\Delta_n)$).
This gives for each $m$:
\[
{\q_m}^\top \z_m
	~ = ~ 
	{\q_m}^\top \bpsi(\u_{r_m}) 
	~\leq~ 
	\inf_{\u\in\R^d}  {\q_m}^\top \bpsi(\u) + \frac{n\epsilon_{r_m}}{p_{\min}} 
	~ = ~
	\inf_{\z\in\S_\psi}  {\q_m}^\top \z + \frac{n\epsilon_{r_m}}{p_{\min}} 
	\,.
\]
Taking limits as $m\,\>\,\infty$, we thus get
\begin{equation}
 \label{eqn:CCdim-LB-proof-1} 
 \lim_{m\>\infty}{\q_m}^\top \z_m \leq \lim_{m\>\infty} \inf_{\z\in\S_\psi}  {\q_m}^\top \z \,.
\end{equation}
Now, since $\{\z_m\}$ is bounded, we have
\begin{equation}
\label{eqn:CCdim-LB-proof-2} 
 \lim_{m\>\infty}{\q_m}^\top \z_m = \lim_{m\>\infty}(\q_m-\q)^\top \z_m + \lim_{m\>\infty}{\q}^\top \z_m = \lim_{m\>\infty}{\q}^\top \z_m \,.
\end{equation}
Moreover, since the mapping $\p \mapsto \inf_{\z\in\S_\psi} \p^\top \z$ is continuous over its domain $\Delta_n$ (see \Lem{lem:continuous}), we have
\begin{equation}
\label{eqn:CCdim-LB-proof-3}
 \lim_{m\>\infty} \inf_{\z\in\S_\psi}  {\q_m}^\top \z = \inf_{\z\in\S_\psi}  {\q}^\top \z \,.
\end{equation}
Putting together Equations (\ref{eqn:CCdim-LB-proof-1}), (\ref{eqn:CCdim-LB-proof-2}) and (\ref{eqn:CCdim-LB-proof-3}), we therefore get
\begin{equation}
 \lim_{m\>\infty}{\q}^\top \z_m = \inf_{\z\in\S_\psi}  {\q}^\top \z 
 \,.
\end{equation}
Thus $\q\in\cN^\psi(\{\z_m\})$. Since $\q$ was an arbitrary point in $\H$, this gives $\H \subseteq \cN^\psi(\{\z_m\})$. The claim follows.

\item[Case 2:] $\p\notin\relint(\Delta_n)$.
\\[6pt] 
For each $\b\in\{0,1\}^n\setminus\{\0\}$, define 
\[
\cP^\b = \big\{ \q\in\Delta_n: q_y > 0 \Longleftrightarrow b_y = 1 \big\}
	\,.  
\]
Clearly, the set $\{\cP^\b:\b\in\{0,1\}^n\setminus\{\0\}\}$ forms a partition of $\Delta_n$. Moreover, for $\b=\e_n$ (the $n\times 1$ all ones vector), we have 
\[
\cP^{\e_n} 
	~ = ~ \big\{ \q\in\Delta_n: q_y > 0 ~\forall y\in[n] \big\}
	~ = ~ \relint(\Delta_n)
	\,.
\]
Therefore we have $\p\in\cP^\b$ for some $\b\in \{0,1\}^n\setminus\{\0, \e_n\}$, with $\|\p\|_0 = \|\b\|_0$.
Now, define $\bpsi^\b:\C\>\R_+^{\|\b\|_0}$, $\L^\b\in\R_+^{\|\b\|_0 \times k}$, and $\p^\b \in \Delta_{\|\b\|_0}$ as projections of $\bpsi$, $\L$ and $\p$ onto the $\|\b\|_0$ coordinates $y: b_y = 1$, so that $\bpsi^\b(\u)$ contains the elements of $\bpsi(\u)$ corresponding to coordinates $y:b_y = 1$, the columns $\bell^\b_t$ of $\L^\b$ contain the elements of the columns $\bell_t$ of $\L$ corresponding to the same coordinates $y:b_y=1$, and similarly, $\p^\b$ contains the strictly positive elements of $\p$. Since $\bpsi$ is $\L$-calibrated, and therefore in particular is calibrated w.r.t.\ $\L$ over $\{\q\in\Delta_n: q_y = 0 ~\forall y: b_y = 0\}$, we have that $\bpsi^\b$ is $\L^\b$-calibrated (over $\Delta_{\|\b\|_0}$). Moreover, by construction, we have $\p^\b \in \relint(\Delta_{\|\b\|_0})$. Therefore by Case 1 above, we have
\[
d ~ \geq ~ \|\b\|_0 - \mu_{\Q^{\L^\b}_t}(\p^\b) - 1
	\,.
\]
The claim follows since $\mu_{\Q^{\L^\b}_t}(\p^\b) \leq \mu_{\Q^\L_t}(\p)$.
\end{enumerate}
\end{proof}



\subsection{Proof of \Prop{prop:PD-rank}}

\begin{proof}
We will establish $\rank(\tilde{\L}^\pair) \geq \frac{r(r-1)}{2}$ by showing the existence of $\frac{r(r-1)}{2}$ linearly independent rows in $\tilde\L^\pair$; the claim will then follow by combining this with the previously stated upper bound $\rank(\tilde{\L}^\pair) \leq \frac{r(r-1)}{2}$.
\\[6pt]
Consider the $\frac{r(r-1)}{2}$ rows of $\tilde\L^\pair$ corresponding to graphs consisting of single directed edges $(i,j)$ with $i<j$. We claim these rows are linearly independent. To see this, suppose for the sake of contradiction that this is not the case. 
Then one of these rows, say the row corresponding to the graph with directed edge $(a,b)$ for some $a,b\in[r], a<b$, can be written as a linear combination of the other rows:
\begin{equation}
\label{eqn:PD-rank}
\tilde{\ell}^\pair((a,b),\sigma) 
	~ = ~ 
	\sum_{1\leq i<j \leq r\,,\,(i,j)\neq(a,b)} c_{ij} \, \tilde{\ell}^\pair((i,j),\sigma) \qquad \forall \sigma \in \Pi_r
	\,,
\end{equation}
for some coefficients $c_{ij}\in\R$.
Now consider two permutations $\sigma, \sigma' \in \Pi_r$ such that
\begin{eqnarray*}
\sigma(a) & = & \sigma'(b)
\\ 
\sigma(b) & = & \sigma'(a)
\\
\sigma(i)	& = & \sigma'(i) ~\forall i\neq a,b
	\,.
\end{eqnarray*}
Then applying \Eqn{eqn:PD-rank} to these two permutations gives
\[
\tilde{\ell}^\pair((a,b),\sigma) ~ = ~ \tilde{\ell}^\pair((a,b),\sigma')
	\,.
\]
However from the definition of $\tilde{\L}^\pair$ it is easy to verify that the columns corresponding to these two permutations have identical entries in all rows except for the row corresponding to the graph $(a,b)$, giving
\begin{eqnarray*}
\tilde{\ell}^\pair((i,j),\sigma) & = & \tilde{\ell}^\pair((i,j),\sigma') \hspace{2em} \forall i<j , (i,j)\neq (a,b) 
	\,;
\\
\tilde{\ell}^\pair((a,b),\sigma) & \neq & \tilde{\ell}^\pair((a,b),\sigma')
	\,.
\end{eqnarray*}
This yields a contradiction, and therefore we must have that the $\frac{r(r-1)}{2}$ rows above are linearly independent, giving
\[
\rank(\tilde{\L}^\pair) \geq \frac{r(r-1)}{2} \,.
\]
The claim follows.
\end{proof}

\subsection{Proof of \Prop{prop:MAP-rank-lower-bound}}

\begin{proof} 
From \Eqn{eqn:MAP-low-rank}, we have that $\L^\MAP \in \R^{(2^r-1) \times r!}$ can be written as
\[
\L^\MAP = \e_{(2^r-1)} \e_{r!}^\top - \A\B
	\,, 
\]
where $\e_{(2^r-1)}$ and $\e_{r!}$ denote the $(2^r-1)\times 1$ and $r!\times 1$ all ones vectors, respectively, and where $\A \in \R^{(2^r-1) \times \frac{r(r+1)}{2}}$ and $\B \in \R^{\frac{r(r+1)}{2} \times r!}$ are given by
\begin{eqnarray*}
A_{\y,(i,j)} 
	& = &
	\frac{1}{\|\y\|_1} y_{i} y_{j} 
	~~~~~~\forall \y\in\{0,1\}^r \setminus\{\0\}, ~ i,j\in[r]: i\leq j
	\,,
\\
B_{(i,j),\sigma} 
	& = &
	\frac{1}{\max (\sigma(i),\sigma(j))} 
	~~~~~~\forall i,j\in[r]: i\leq j, ~ \sigma \in\Pi_r
	\,.
\end{eqnarray*}
We will show that 
\begin{equation}
\rank(\A) 
	~ \geq ~
	\frac{r(r+1)}{2}-1 
\label{eqn:rank-A}
\end{equation}
and
\begin{equation}
\rank(\B) 
	~ \geq ~ 
	\frac{r(r-1)}{2} \,.
\label{eqn:rank-B}
\end{equation}
The result will then follow, since we will then have 
\begin{eqnarray*}
\rank(\L^\MAP) 
	& = & 
	\rank\big( \e_{(2^r-1)} \e_{r!}^\top - \A \B \big) 
\\
	& \geq & 
	\rank(\A \B)-1 
\\
 	& \geq & 
	\rank(\B)-2 
	\,,
	~~~~\text{since $\A$ is away from full (column) rank by at most $1$}
\\
	& \geq & 
	\frac{r(r-1)}{2}-2
	\,.
\end{eqnarray*}
To see why \Eqn{eqn:rank-A} is true, consider the $2^r$ vectors $\v^{\alpha}\in \R^{2^r}$ defined as 
\[
v^\alpha_\y = \prod_{i \in \alpha} y_i 
	~~~~ \forall \alpha\subseteq [r] \,, \y\in\{0,1\}^r
	\,. 
\]
It is easy to see that these vectors form a basis in $\R^{2^r}$. The columns of $\A$ can be obtained from the $\frac{r(r+1)}{2}$ vectors $\v^\alpha$ corresponding to subsets $\alpha\subseteq[r]$ of sizes 1 and 2, by removing the element corresponding to $\y=\0$ and dividing all other rows corresponding to $\y\in\{0,1\}^r\setminus\{\0\}$ by $\|\y\|_1$. This establishes the lower bound on $\rank(\A)$ in \Eqn{eqn:rank-A}.
\\[6pt]
To see why \Eqn{eqn:rank-B} is true, let us make the dependence of $\B$ on $r$ explicit by denoting $\B_r=\B$, and observe that $\B_r$ can be decomposed as
\[
\B_r=\begin{bmatrix} 
		\B_{r-1}   	& \mathbf D \\ 
		\mathbf C 	& \E  
\end{bmatrix} \,,
\]
where the sub-matrix $\B_{r-1}\in\R^{\frac{r(r-1)}{2} \times (r-1)!}$ is obtained by taking the $\frac{r(r-1)}{2}$ rows $(i,j)$ in $\B_r$ with $i\leq j < r$ and the $(r-1)!$ columns $\sigma$ in $\B_r$ with $\sigma(r)=r$:
\vspace{2pt}
\begin{center}
\begin{tabular}{c|cc} 
			&  $\Upsilon=\{\sigma\in\Pi_r: \sigma(r)=r\}$	& $\Omega=\{\sigma\in\Pi_r: \sigma(r)\neq r\}$	\\[6pt] 
\hline	
$\Gamma=\{(i,j)\in[r]\times[r]: i\leq j<r\}$	&  $\B_{r-1}$   			& $\mathbf D$ \\[6pt]
$\Lambda=\{(i,j)\in[r]\times[r]: i\leq j=r\}$	&  $\mathbf C$ 				& $\E$  
\end{tabular}
\end{center}
\vspace{2pt}
Consider the matrix $\mathbf C \in \R^{r\times (r-1)!}$. 
Each entry in this matrix has the form $\frac{1}{\max (\sigma(i),\sigma(j))}$ with $i\leq j=r$ and $\sigma(r)=r$. Thus all entries in $\mathbf C$ are equal to $\frac{1}{r}$ and $\rank(\mathbf C) = 1$.
\\[6pt]
Next, consider the matrix $\E \in \R^{r\times (r!-(r-1)!)}$. 
We will show that there are $r-1$ linearly independent columns in $\E$. In particular, consider any permutations $\sigma^1,\sigma^2,\ldots,\sigma^{r-1}$ in the set $\Omega$ such that $\sigma^j(j)=r$ and $\sigma^j(r)=r-1$ (such permutations clearly exist). The sub-matrix of $\E$ corresponding to these columns is given by
\vspace{2pt}
\begin{center}
 \begin{tabular}{c|cccc}
  & $\sigma^1(1)=r$ & $\sigma^2(2)=r$ & $\ldots$ &  $\sigma^{r-1}(r-1)=r$ \\ 
  & $\sigma^1(r)=r-1$ & $\sigma^2(r)=r-1$ & $\ldots$ &  $\sigma^{r-1}(r)=r-1$ \\ 
\hline
$(1,r)$ & $\frac{1}{r}$ & $\frac{1}{r-1}$ & $\ldots$ & $\frac{1}{r-1}$ \\ 
$(2,r)$ & $\frac{1}{r-1}$ & $\frac{1}{r}$ & $\ldots$ & $\frac{1}{r-1}$ \\ 
$\vdots$ & $\vdots$		& $\vdots$		&	$\ddots$	& $\vdots$	     \\ 
$(r-1,r)$ & $\frac{1}{r-1}$ & $\frac{1}{r-1}$ &	\ldots	& $\frac{1}{r}$ \\ 
$(r,r)$ & $\frac{1}{r-1}$ & $\frac{1}{r-1}$ &	\ldots	& $\frac{1}{r-1}$ 
 \end{tabular}
\end{center}
\vspace{2pt}
Thus excluding the last row of $\E$, one gets a square $(r-1)\times(r-1)$ matrix with diagonal entries equal to $\frac{1}{r}$ and off-diagonal entries equal to $\frac{1}{r-1}$. The last row of $\E$ has all entries equal to $\frac{1}{r-1}$. Clearly, this gives $\rank(\E)=r-1$. Moreover, the span of the $r-1$ column vectors of $\E$ does not intersect with column space of $\mathbf C$ non-trivially, since it does not contain the all ones vector. 
This implies that the $r-1$ columns of $\B_r$ corresponding to the permutations $\sigma^1,\sigma^2,\ldots,\sigma^{r-1}\in\Omega$ (which yield the linearly independent columns of $\E$), together with the columns of $\B_r$ corresponding to permutations $\sigma\in\Upsilon$ that yield linearly independent columns of $\B_{r-1}$, are all linearly independent. Therefore, we have 
\[
\rank(\B_r) ~\geq~ \rank(\B_{r-1})+r-1 
	\,.
\]
Trivially, $\rank(\B_1)\geq 0$. Expanding the above recursion therefore gives 
\[
\rank(\B_r) ~\geq~ \frac{r(r-1)}{2} 
	\,. 
\]
This establishes the lower bound on $\rank(\B)$ in \Eqn{eqn:rank-B}.
\end{proof}


\acks{HGR is supported by a TCS PhD Fellowship. SA thanks the Department of Science and Technology (DST) of the Government of India for a Ramanujan Fellowship, and the Indo-US Science and Technology Forum (IUSSTF) for their support.}


\begin{appendix}

\section{Proofs of \Lem{lem:pred-pred'} and \Thm{thm:class-calibration-consistency}}
\label{app:proofs}

\subsection{Proof of \Lem{lem:pred-pred'}}

\begin{proof}
Let $\bpsi:\C\>\R_+^n$. 
We will show that $\exists~ \pred:\C\>[k]$ satisfying the condition in \Def{def:calibration} if and only if $\exists~ \pred':\S_\psi\>[k]$ satisfying the stated condition.
\\[6pt]
\emph{(`if' direction)}
First, suppose $\exists~ \pred':\S_\psi\>[k]$ such that 
\[
\forall \p\in\cP:
    ~~~~\inf_{\z\in\S_\psi:\pred'(\z)\notin\argmin_{t} \p^\top \bell_t} \p^\top \z
    ~ > ~
    \inf_{\z\in\S_\psi} \p^\top \z
    \,.
\]
Define $\pred:\C\>[k]$ as follows:
\[
\pred(\u) = \pred'(\bpsi(\u)) ~~~~\forall \u\in\C
	\,.
\]
Then for all $\p\in\cP$, we have
\begin{eqnarray*}
    ~~~~\inf_{\u\in\C:\pred(\u)\notin\argmin_{t} \p^\top \bell_t} \p^\top \bpsi(\u)
    & = &
    \inf_{\z\in\cR_\psi:\pred'(\z)\notin\argmin_{t} \p^\top \bell_t} \p^\top \z
\\
    & \geq &
    \inf_{\z\in\S_\psi:\pred'(\z)\notin\argmin_{t} \p^\top \bell_t} \p^\top \z
\\
    & > &
    \inf_{\z\in\S_\psi} \p^\top \z
\\
	& = &
    \inf_{\u\in\C} \p^\top \bpsi(\u)
    \,.
\end{eqnarray*}
Thus $\bpsi$ is $(\L,\cP)$-calibrated.
\\[6pt]
\emph{(`only if' direction)}
Conversely, suppose $\bpsi$ is $(\L,\cP)$-calibrated, so that $\exists~ \pred:\C\>[k]$ such that 
\[
\forall \p\in\cP:
    ~~~~\inf_{\u\in\C:\pred(\u)\notin\argmin_{t} \p^\top \bell_t} \p^\top \bpsi(\u)
    ~ > ~
    \inf_{\u\in\C} \p^\top \bpsi(\u)
    \,.
\]
By Caratheodory's Theorem (e.g.\ see \cite{Bertsekas+03}), we have that every $\z\in\S_\psi$ can be expressed as a convex combination of at most $n+1$ points in $\cR_\psi$, i.e.\ for every $\z\in\S_\psi$, $\exists \balpha\in\Delta_{n+1}, \u_1,\ldots,\u_{n+1}\in\C$ such that $\z=\sum_{j=1}^{n+1} \alpha_j \bpsi(\u_j)$; w.l.o.g., we can assume $\alpha_1 \geq \frac{1}{n+1}$. For each $\z\in\S_\psi$, arbitrarily fix a unique such convex combination, i.e.\ fix $\balpha^\z\in\Delta_{n+1}, \u^\z_1,\ldots,\u^\z_{n+1}\in\C$ with $\alpha^\z_1 \geq \frac{1}{n+1}$ such that 
\[
\z=\sum_{j=1}^{n+1}\alpha_j^\z \bpsi(\u^\z_j) 
	\,.
\]
Now, define $\pred':\S_\psi\>[k]$ as follows:
\[
\pred'(\z)=\pred(\u^\z_1)
	~~~~\forall \z\in\S_\psi
	\,.
\]
Then for any $\p\in\cP$, we have
\begin{eqnarray*}
\inf_{\z\in\S_\psi:\pred'(\z)\notin\argmin_{t} \p^\top \bell_t} \p^\top \z 
	&=& 
	\inf_{\z\in\S_\psi:\pred(\u^\z_1)\notin\argmin_{t} \p^\top \bell_t} 
	\sum_{j=1}^{n+1}\alpha_j^\z \p^\top \bpsi(\u^\z_j)
\\
	&\geq& 
	\inf_{\balpha\in\Delta_{n+1},\u_1,\ldots,\u_{n+1} \in \C:\alpha_1\geq\frac{1}{n+1},\pred(\u_1)\notin\argmin_{t} \p^\top \bell_t} 
	\sum_{j=1}^{n+1} \alpha_j \p^\top \bpsi(\u_j) 
\\
	&\geq&
	\inf_{\balpha\in\Delta_{n+1}:\alpha_1\geq\frac{1}{n+1}} \sum_{j=1}^{n+1} \inf_{\u_j \in \C:\pred(\u_1)\notin\argmin_{t} \p^\top \bell_t} \alpha_j \p^\top \bpsi(\u_j) 
\\
	&\geq& 
	\inf_{\alpha_1\in[\frac{1}{n+1},1]} \alpha_1 \inf_{\u\in\C:\pred(\u)\notin\argmin_{t} \p^\top \bell_t} \p^\top \bpsi(\u) + (1-\alpha_1) \sum_{j=2}^{n+1} \inf_{\u \in\C} \p^\top \bpsi(\u)
\\
	&>& 
	\inf_{\u\in\C} \p^\top \bpsi(\u) 
\\
	& = &
	\inf_{\z\in\S_\psi} \p^\top \z 
	\,.
\end{eqnarray*}
Thus $\pred'$ satisfies the stated condition.
\end{proof}

\subsection{Proof of \Thm{thm:class-calibration-consistency}}


The proof is similar to that for the multiclass 0-1 loss given by \cite{TewariBa07}. We will make use of the following two lemmas; the first is a straightforward generalization of a similar lemma in \citep{TewariBa07}, and the second follows directly from \Lem{lem:pred-pred'}.

\begin{lem}
\label{lem:continuous}
The map $\p\mapsto \inf_{\z\in\S_\psi}\p^\top \z$ is continuous over $\Delta_n$.
\end{lem}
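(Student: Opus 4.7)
The plan is to split continuity of $f(\p) \= \inf_{\z \in \S_\psi} \p^\top \z$ on $\Delta_n$ into upper and lower semicontinuity, and to exploit in an essential way that $\S_\psi \subseteq \R_+^n$ and $\p \in \Delta_n \subseteq \R_+^n$, so every term $p_i z_i$ is non-negative.

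Upper semicontinuity is the easy half: $f$ is the pointwise infimum of the continuous affine functions $\{\p \mapsto \p^\top \z_0 : \z_0 \in \S_\psi\}$, hence usc. Quantitatively, given $\epsilon>0$, I would pick $\z_0\in\S_\psi$ with $\p_0^\top \z_0 \leq f(\p_0)+\epsilon$; then for $\|\p-\p_0\|_1$ small, $f(\p) \leq \p^\top \z_0 \leq f(\p_0)+\epsilon+\|\p-\p_0\|_1\|\z_0\|_\infty \leq f(\p_0)+2\epsilon$.

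For lower semicontinuity at $\p_0 \in \Delta_n$ I would argue by contradiction. Assume, after passing to a subsequence, that $\p_m\to\p_0$ but $f(\p_m) < f(\p_0)-\epsilon$ for some $\epsilon>0$ and all $m$, and pick near-minimizers $\z_m\in\S_\psi$ with $\p_m^\top \z_m < f(\p_0) - \epsilon/2$. Partition $[n] = I\cup J$ where $I=\{i: p_{0,i}>0\}$ and $J=\{j: p_{0,j}=0\}$. For $i\in I$ and $m$ large, $p_{m,i}\geq p_{0,i}/2$; since all summands of $\p_m^\top\z_m$ are non-negative, $p_{m,i}z_{m,i} \leq \p_m^\top\z_m$, which forces $\{z_{m,i}\}_m$ to be bounded for each $i\in I$. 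The crucial step is then the identity
\[
\p_0^\top \z_m - \p_m^\top \z_m \;=\; \sum_{i\in I}(p_{0,i}-p_{m,i})\,z_{m,i} \;-\; \sum_{j\in J} p_{m,j}\,z_{m,j},
\]
whose second sum is non-negative (and therefore \emph{subtracted}), while the first sum tends to $0$ because $p_{m,i}\to p_{0,i}$ and the $z_{m,i}$ are bounded on $I$. Hence $\p_0^\top \z_m \leq f(\p_0)-\epsilon/2+o(1)$, so $\p_0^\top\z_m < f(\p_0)$ for large $m$ with $\z_m\in\S_\psi$, contradicting the definition of $f(\p_0)$.

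The main obstacle is precisely handling boundary points $\p_0$ of $\Delta_n$: on coordinates $j\in J$ the near-minimizers $\z_m$ can be unbounded, so the usual trick of extracting a convergent subsequence of $\z_m$ itself fails. What saves the argument is that the non-negativity inherited from $\S_\psi\subseteq\R_+^n$ makes the uncontrollable contribution $\sum_{j\in J} p_{m,j} z_{m,j}$ cut in the \emph{helpful} direction when comparing $\p_0^\top\z_m$ to $\p_m^\top\z_m$, so one only needs to control the bounded coordinates in $I$.
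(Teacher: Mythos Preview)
Your proof is correct, and the key idea---partitioning coordinates into $I=\{i:p_{0,i}>0\}$ and $J=\{j:p_{0,j}=0\}$, bounding $z_{m,i}$ for $i\in I$ via non-negativity, and observing that the $J$-contribution has the helpful sign---is precisely the one the paper relies on. The paper does not actually prove \Lem{lem:continuous}; it simply cites it as ``a straightforward generalization of a similar lemma in \citep{TewariBa07}.'' However, the very same coordinate-splitting and boundedness argument you give appears verbatim inside the paper's proof of \Thm{thm:class-calibration-consistency} (the passage ``Without loss of generality, we assume that the first $a$ coordinates of $\p$ are non-zero\ldots Hence the first $a$ coordinates of $\z_m$ are bounded\ldots''), so your approach matches the paper's intended argument.
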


\begin{lem}
\label{lem:CCequiv-sequence}
Let $\L\in\R_+^{n\times k}$. A surrogate $\bpsi:\C\>\R^n$ is $\L$-calibrated if and only if there exists a function $\pred':\S_\psi\>[k]$ such that the following holds: for all $\p\in\Delta_n$ and all sequences $\{\z_m\}$ in $\S_\psi$ such that
$\lim_{m\>\infty} \p^\top \z_m =  \inf_{\z\in\S_\psi} \p^\top \z$, 
we have $\p^\top \bell_{\pred'(\z_m)} = \min_{t\in[k]} \p^\top \bell_t$ for all large enough $m$. 
\end{lem}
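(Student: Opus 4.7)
The plan is to deduce both directions from Lemma~\ref{lem:pred-pred'} instantiated with $\cP=\Delta_n$, observing that the strict-inequality condition there is simply a reformulation of the sequence-based condition stated here. Throughout, for any $\p\in\Delta_n$ let me write $v(\p)=\inf_{\z\in\S_\psi}\p^\top\z$ and $S_{\pred'}(\p)=\{\z\in\S_\psi:\pred'(\z)\notin\argmin_t\p^\top\bell_t\}$, so that Lemma~\ref{lem:pred-pred'} (with $\cP=\Delta_n$) says that $\bpsi$ is $\L$-calibrated iff there exists $\pred':\S_\psi\>[k]$ with $\inf_{\z\in S_{\pred'}(\p)}\p^\top\z>v(\p)$ for every $\p\in\Delta_n$.

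For the forward direction, I would take the $\pred'$ supplied by Lemma~\ref{lem:pred-pred'} and use the same $\pred'$ here. Given any $\p\in\Delta_n$ and any sequence $\{\z_m\}$ in $\S_\psi$ with $\p^\top\z_m\>v(\p)$, I would argue by contradiction: if $\pred'(\z_m)\notin\argmin_t\p^\top\bell_t$ for infinitely many $m$, then along that subsequence $\p^\top\z_m\geq\inf_{\z\in S_{\pred'}(\p)}\p^\top\z>v(\p)$, preventing the convergence $\p^\top\z_m\>v(\p)$. Hence $\pred'(\z_m)\in\argmin_t\p^\top\bell_t$ for all sufficiently large $m$, which is precisely what needs to be shown.

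For the reverse direction, I would take the $\pred'$ given by the sequence hypothesis and verify the strict-inequality condition of Lemma~\ref{lem:pred-pred'}. Fix $\p\in\Delta_n$. If $S_{\pred'}(\p)$ is empty, the infimum over it is $+\infty$ and the strict dominance over $v(\p)$ is automatic. Otherwise, assume for contradiction that $\inf_{\z\in S_{\pred'}(\p)}\p^\top\z=v(\p)$; then I can choose a minimizing sequence $\{\z_m\}\subseteq S_{\pred'}(\p)$ with $\p^\top\z_m\>v(\p)$. Applying the sequence hypothesis to this $\{\z_m\}$ would give $\pred'(\z_m)\in\argmin_t\p^\top\bell_t$ for all large $m$, contradicting $\z_m\in S_{\pred'}(\p)$. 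Therefore the strict inequality holds for every $\p$, and Lemma~\ref{lem:pred-pred'} yields $\L$-calibration of $\bpsi$.

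The argument is essentially mechanical once the reformulation is spotted; the only care needed is handling the edge case where $S_{\pred'}(\p)$ is empty and in genuinely extracting a minimizing sequence within $S_{\pred'}(\p)$ in the reverse direction. Both are routine, and I do not anticipate any substantive obstacle.
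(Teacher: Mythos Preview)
Your proposal is correct and takes essentially the same approach as the paper, which simply states that the lemma ``follows directly from \Lem{lem:pred-pred'}'' without writing out the details. Your argument is exactly the natural unpacking of that claim: the strict-gap condition in \Lem{lem:pred-pred'} is equivalent to the sequential condition here, and you've handled the minor bookkeeping (the empty $S_{\pred'}(\p)$ case and the extraction of a minimizing sequence) cleanly.
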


\begin{proof} (Proof of \Thm{thm:class-calibration-consistency}) \\[6pt]
Let $\bpsi:\C\>\R_+^n$.
\\[6pt]
\emph{(`only if' direction)}
First, suppose $\bpsi$ is $\L$-calibrated. Then by \Lem{lem:pred-pred'}, $\exists~ \pred':\S_\psi\>[k]$ such that 
\[
\forall \p\in\cP:
    ~~~~\inf_{\z\in\S_\psi:\pred'(\z)\notin\argmin_{t} \p^\top \bell_t} \p^\top \z
    ~ > ~
    \inf_{\z\in\S_\psi} \p^\top \z
    \,.
\]
Now, for each $\epsilon>0$, define 
\[
H(\epsilon)
	=
		\inf_{\p\in\Delta_n,\z\in\S_\psi: \p^\top \bell_{\pred'(\z)} - \min_{t\in[k]}\p^\top \bell_t \geq \epsilon} 
		\Big\{ 
			\p^\top \z - \inf_{\z\in \S_\psi} \p^\top \z 
		\Big\}
	\,.
\]
We claim that $H(\epsilon)>0 ~\forall\epsilon>0$.
Assume for the sake of contradiction that $\exists \epsilon > 0$ for which $H(\epsilon)=0$. Then there must exist a sequence $(\p_m,\z_m)$ in $\Delta_n \times \S_\psi$ such that
\begin{equation}
	{\p_m}^\top \bell_{\pred'(\z_m)} - \min_{t\in[k]}{\p_m}^\top \bell_t \geq \epsilon
	~~~\forall m
\label{eqn:proof-cal-cons-1}
\end{equation}
and
\begin{equation}
	{\p_m}^\top \z_m - \inf_{\z\in\S_\psi} {\p_m}^\top \z~\>~0 
	\,.
\label{eqn:proof-cal-cons-2}
\end{equation}
Since $\p_m$ come from a compact set, we can choose a convergent subsequence (which we still call $\{\p_m\}$), say with limit $\p$. Then by \Lem{lem:continuous}, we have $\inf_{\z\in\S_\psi} {\p_m}^\top \z 	~ \longrightarrow ~ \inf_{\z\in\S_\psi} \p^\top \z$, and therefore by \Eqn{eqn:proof-cal-cons-2}, we get 
\[
{\p_m}^\top \z_m ~ \longrightarrow ~ \inf_{\z\in\S_\psi} \p^\top \z
	\,.
\]
Now we show that $\z_m$ is a sequence such that $\p^\top \z_m ~ \longrightarrow ~ \inf_{\z\in\S_\psi} \p^\top \z$. Without loss of generality, we assume that the first $a$ coordinates of $\p$ are non-zero and the rest are zero. Hence the first $a$ coordinates of $\z_m$ are bounded for sufficiently large $m$, and we have
\[
\limsup_m \p^\top \z_m 
	= \limsup_m \sum_{y=1}^a p_{m,y} z_{m,y} 
	\leq \lim_{m\>\infty} {\p_m}^\top \z_m 
	= \inf_{\z\in\S_\psi} \p^\top \z
	\,.
\]
By \Lem{lem:CCequiv-sequence}, we therefore have $\p^\top \bell_{\pred'(\z_m)} = \min_{t\in[k]} \p^\top \bell_t$ for all large enough $m$, which contradicts \Eqn{eqn:proof-cal-cons-1} as $\p_m$ converges to $\p$.
Thus we must have $H(\epsilon) > 0 ~\forall \epsilon > 0$; the rest of the proof then follows from \citep{Zhang04a}.
\\[6pt]
\emph{(`if' direction)}
\\[6pt]
Conversely, suppose $\bpsi$ is not $\L$-calibrated. Consider any $\pred:\C\>[k]$. Then $\exists \p\in\Delta_n$ such that
\[
\inf_{\u\in\C:\pred(\u)\notin\argmin_{t} \p^\top \bell_t} \p^\top \bpsi(\u) 
	~ = ~ 
	\inf_{\u\in\C} \p^\top \bpsi(\u)
	\,.
\]
In particular, this means there exists a sequence of points $\{\u_m\}$ in $\C$ such that 
\[
\pred(\u_m)\notin\argmin_{t} \p^\top \bell_t 
	~~~\forall m 
\]
and 
\[
\p^\top\bpsi(\u_m) 
	~ \longrightarrow ~ 
	\inf_{\u\in\C} \p^\top \bpsi(\u)
	\,.
\]
Now consider a data distribution $D=D_X \times D_{Y|X}$ on $\X\times[n]$, with $D_X$ being a point mass at some $x\in\X$ and $D_{Y|X=x}=\p$. Let $\f_m:\X\>\C$ be any sequence of functions satisfying $\f_m(x) = \u_m ~\forall m$.
Then we have 
\[
\er^\psi_D[\f_m] =\p^\top\bpsi(\u_m)
	\,;
	~~~
	\er_D^{\psi, *} = \inf_{\u\in\C} \p^\top \bpsi(\u)
\]
and 
\[
\er^\L_D[\pred\circ\f_m] = \p^\top \bell_{\pred(\u_m)}
	\,;
	~~~
	\er_D^{\L, *} = \min_t \p^\top \bell_t
	\,.
\]
This gives
\[
\er^\psi_D[\f_m] ~ \longrightarrow ~ \er_D^{\psi, *}
\]
but 
\[
\er^\L_D[\pred\circ \f_m] ~ \not\longrightarrow ~ \er_D^{\L, *}
	\,.
\]
This completes the proof.
\end{proof}

\section{Calculation of Trigger Probability Sets for \Fig{fig:trigger-prob}}
\label{app:calculations-trigger-prob}

\begin{enumerate}
\item[(a)] 0-1 loss $\ell^\zo$ ($n=3$).
    \[
    \bell_1 = \left( \begin{array}{c} 0 \\ 1 \\ 1 \end{array} \right) \,;~~
    \bell_2 = \left( \begin{array}{c} 1 \\ 0 \\ 1 \end{array} \right) \,;~~
    \bell_3 = \left( \begin{array}{c} 1 \\ 1 \\ 0 \end{array} \right) \,.
    \]
    \begin{eqnarray*}
    \Q^\zo_1
        & = &
        \{ \p\in\Delta_3: \p^\top\bell_1 \leq  \p^\top\bell_2, ~ \p^\top\bell_1 \leq  \p^\top\bell_3 \}
    \\
        & = &
        \{ \p\in\Delta_3: p_2 + p_3 \leq p_1 + p_3, ~ p_2 + p_3 \leq p_1 + p_2 \}
    \\
        & = &
        \{ \p\in\Delta_3: p_2 \leq p_1, ~ p_3 \leq p_1 \}
    \\
        & = &
        \{ \p\in\Delta_3: p_1 \geq \max(p_2,p_3) \}
    \end{eqnarray*}
    By symmetry,
    \begin{eqnarray*}
    \Q^\zo_2
        & = &
        \{ \p\in\Delta_3: p_2 \geq \max(p_1,p_3) \}
        \hspace{5cm}
    \\
    \Q^\zo_3
        & = &
        \{ \p\in\Delta_3: p_3 \geq \max(p_1,p_2) \}
        \hspace{5cm}
    \end{eqnarray*}
\item[(b)] Ordinal regression loss $\ell^\ord$ ($n=3$).
    \[
    \bell_1 = \left( \begin{array}{c} 0 \\ 1 \\ 2 \end{array} \right) \,;~~
    \bell_2 = \left( \begin{array}{c} 1 \\ 0 \\ 1 \end{array} \right) \,;~~
    \bell_3 = \left( \begin{array}{c} 2 \\ 1 \\ 0 \end{array} \right) \,.
    \]
    \begin{eqnarray*}
    \Q^\ord_1
        & = &
        \{ \p\in\Delta_3: \p^\top\bell_1 \leq  \p^\top\bell_2, ~ \p^\top\bell_1 \leq  \p^\top\bell_3 \}
    \\
        & = &
        \{ \p\in\Delta_3: p_2 + 2p_3 \leq p_1 + p_3, ~ p_2 + 2p_3 \leq 2p_1 + p_2 \}
    \\
        & = &
        \{ \p\in\Delta_3: p_2 + p_3 \leq p_1, ~ p_3 \leq p_1 \}
    \\
        & = &
        \{ \p\in\Delta_3: 1 - p_1 \leq p_1 \}
    \\
        & = &
        \{ \p\in\Delta_3: p_1 \geq {\textstyle{\half}} \}
    \end{eqnarray*}
    By symmetry,
    \begin{eqnarray*}
    \Q^\ord_3
        & = &
        \{ \p\in\Delta_3: p_3 \geq {\textstyle{\half}} \}
        \hspace{5cm}
    \end{eqnarray*}
    Finally,
    \begin{eqnarray*}
    \Q^\ord_2
        & = &
        \{ \p\in\Delta_3: \p^\top\bell_2 \leq  \p^\top\bell_1, ~ \p^\top\bell_2 \leq  \p^\top\bell_3 \}
    \\
        & = &
        \{ \p\in\Delta_3: p_1 + p_3 \leq p_2 + 2p_3, ~ p_1 + p_3 \leq 2p_1 + p_2 \}
    \\
        & = &
        \{ \p\in\Delta_3: p_1 \leq p_2 + p_3, ~ p_3 \leq p_1 + p_2 \}
    \\
        & = &
        \{ \p\in\Delta_3: p_1 \leq 1 - p_1, ~ p_3 \leq 1 - p_3 \}
    \\
        & = &
        \{ \p\in\Delta_3: p_1 \leq {\textstyle{\half}}, ~ p_3 \leq {\textstyle{\half}} \}
    \\
    \end{eqnarray*}
\item[(c)] `Abstain' loss $\ell^\abstain$ ($n=3$).
    \[
    \bell_1 = \left( \begin{array}{c} 0 \\ 1 \\ 1 \end{array} \right) \,;~~
    \bell_2 = \left( \begin{array}{c} 1 \\ 0 \\ 1 \end{array} \right) \,;~~
    \bell_3 = \left( \begin{array}{c} 1 \\ 1 \\ 0 \end{array} \right) \,;~~
    \bell_4 = \left( \begin{array}{c} \half \\ \half \\ \half \end{array} \right) \,.
    \]
    \begin{eqnarray*}
    \Q^\abstain_1
        & = &
        \{ \p\in\Delta_3: \p^\top\bell_1 \leq  \p^\top\bell_2, ~ \p^\top\bell_1 \leq  \p^\top\bell_3, ~ \p^\top\bell_1 \leq  \p^\top\bell_4 \}
    \\
        & = &
        \{ \p\in\Delta_3: p_2 + p_3 \leq p_1 + p_3, ~ p_2 + p_3 \leq p_1 + p_2, ~ p_2 + p_3 \leq {\textstyle\half}(p_1 + p_2 + p_3) \}
    \\
        & = &
        \{ \p\in\Delta_3: p_2 \leq p_1, ~ p_3 \leq p_1, ~ p_2 + p_3 \leq {\textstyle\half} \}
    \\
        & = &
        \{ \p\in\Delta_3: p_1 \geq {\textstyle\half} \}
    \end{eqnarray*}
    By symmetry,
    \begin{eqnarray*}
    \Q^\abstain_2
        & = &
        \{ \p\in\Delta_3: p_2 \geq {\textstyle\half} \}
        \hspace{5cm}
    \\
    \Q^\abstain_3
        & = &
        \{ \p\in\Delta_3: p_3 \geq {\textstyle\half} \}
        \hspace{5cm}
    \end{eqnarray*}
    Finally,
    \begin{eqnarray*}
    \Q^\abstain_4
        & = &
        \{ \p\in\Delta_3: \p^\top\bell_4 \leq  \p^\top\bell_1, ~ \p^\top\bell_4 \leq  \p^\top\bell_2, ~ \p^\top\bell_4 \leq  \p^\top\bell_2 \}
    \\
        & = &
        \{ \p\in\Delta_3: {\textstyle{\half}}(p_1 + p_2 + p_3) \leq \min(p_2 + p_3, p_1 + p_3, p_1 + p_2) \}
    \\
        & = &
        \{ \p\in\Delta_3: {\textstyle{\half}} \leq 1 - \max(p_1,p_2,p_3) \}
    \\
        & = &
        \{ \p\in\Delta_3: \max(p_1,p_2,p_3) \leq {\textstyle{\half}} \}
    \end{eqnarray*}
\end{enumerate}

\section{Calculation of Positive Normal Sets for \Fig{fig:pos-normals}}
\label{app:calculations-pos-normals}

For $n=3$, the Crammer-Singer surrogate $\bpsi^\CS:\R^3\>\R_+^3$ is given by
\begin{eqnarray*}
\psi^\CS_1(\u) &=& \max(1+u_2 - u_1, 1+u_3 - u_1, 0) \\ 
\psi^\CS_2(\u) &=& \max(1+u_1 - u_2, 1+u_3 - u_2, 0) \\
\psi^\CS_3(\u) &=& \max(1+u_1 - u_3, 1+u_2 - u_3, 0) 
	~~~\forall \u\in\R^3
	\,.
\end{eqnarray*}
Clearly, $\bpsi^\CS$ is a convex function. 
Let $\u_1 = (1,0,0)^\top$, $\u_2 = (0,1,0)^\top$, $\u_3 = (0,0,1)^\top$, $\u_4 = (0,0,0)^\top$, and let 
\begin{eqnarray*}
\z_1 & = & \bpsi^\CS(\u_1) ~ = ~ (0,2,2)^\top \\
\z_2 & = & \bpsi^\CS(\u_2) ~ = ~ (2,0,2)^\top \\
\z_3 & = & \bpsi^\CS(\u_3) ~ = ~ (2,2,0)^\top \\
\z_4 & = & \bpsi^\CS(\u_4) ~ = ~ (1,1,1)^\top
	\,.
\end{eqnarray*}
We apply \Lem{lem:normal-computation} to compute the positive normal sets of $\bpsi^\CS$ at the 4 points $\z_1,\z_2,\z_3,\z_4$ above. In particular, to see that $\z_4$ satisfies the conditions of \Lem{lem:normal-computation}, note that by Danskin's Theorem \citep{Bertsekas+03}, we have that 
\[
\partial \psi^\CS_1(\u_4) =\conv\left( \begin{bmatrix}
                                          -1 \\ +1 \\ 0
                                         \end{bmatrix},
					 \begin{bmatrix}
                                          -1 \\ 0 \\ +1
                                         \end{bmatrix} \right)
	\,;
\]
\[
\partial \psi^\CS_2(\u_4) =\conv\left( \begin{bmatrix}
                                          +1 \\ -1 \\ 0
                                         \end{bmatrix},
					 \begin{bmatrix}
                                          0 \\ -1 \\ +1
                                         \end{bmatrix} \right)
	\,;
\]
\[
\partial \psi^\CS_3(\u_4) =\conv\left( \begin{bmatrix}
                                          +1 \\ 0 \\ -1
                                         \end{bmatrix},
					 \begin{bmatrix}
                                          0 \\ +1 \\ -1
                                         \end{bmatrix} \right)
	\,.
\]
We can therefore use \Lem{lem:normal-computation} to compute $\cN^\CS(\z_4)$. Here $s=6$, and 
\[
\A=\begin{bmatrix}
     -1 & -1 & 1 & 0 & 1 & 0\\
      1 & 0 & -1 & -1 & 0 & 1\\
      0 & 1 & 0 & 1 & -1 & -1\\
    \end{bmatrix} \,; ~~~~
\B=\begin{bmatrix}
     1 & 1 & 0 & 0 & 0 & 0\\
      0 & 0 & 1 & 1 & 0 & 0\\
      0 & 0 & 0 & 0 & 1 & 1\\
    \end{bmatrix}
	\,.
\]
By \Lem{lem:normal-computation} (and some algebra), this gives
\begin{eqnarray*}
\cN^\CS(\z_4) 
	& = & 
	\big\{\p \in \Delta_3: \p = (q_1+q_2, q_3+q_4, q_5+q_6) ~\text{for some}~ 
				\q\in\Delta_6, 
\\
	& &
	~~~~~~~~~~~~ 
	q_1 + q_2 = q_3 + q_5, ~ 
	q_3 + q_4 = q_1 + q_6, ~
	q_5 + q_6 = q_2 + q_4 \big\} 
\\
	& = & 
	\textstyle{ \big\{\p \in \Delta_3: p_1 \leq \half, p_2 \leq \half, p_3 \leq \half \big\} }
	\,.
\end{eqnarray*}
It is easy to see that $\z_1,\z_2, \z_3$ also satisfy the conditions of  \Lem{lem:normal-computation}; similar computations then yield
\begin{eqnarray*}
\cN^\CS(\z_1) & = & \textstyle{ \big\{ \p \in \Delta_3: p_1 \geq \half \big\} } \\
\cN^\CS(\z_2) & = & \textstyle{ \big\{ \p \in \Delta_3: p_2 \geq \half \big\} } \\
\cN^\CS(\z_3) & = & \textstyle{ \big\{ \p \in \Delta_3: p_3 \geq \half \big\} }
	\,.
\end{eqnarray*}

\end{appendix}


\bibliography{ramaswamy15a}

\end{document}